\documentclass{sig-alternate-2013-modified}
\usepackage[utf8]{inputenc} 
\usepackage[T1]{fontenc}    
\usepackage{url}            
\usepackage{booktabs}       
\usepackage{amsfonts}       
\usepackage{nicefrac}       
\usepackage{microtype}      
\usepackage{wrapfig}

\usepackage{url}
\usepackage{amssymb,amsmath,amsfonts}
\usepackage{graphicx}
\usepackage{times}
\usepackage{algorithm}
\usepackage[algo2e, ruled, vlined]{algorithm2e}

\newcommand{\ignore}[1]{}
\newcommand{\notinproc}[1]{#1}
\newcommand{\onlyinproc}[1]{}

\newtheorem{thm}{Theorem}[section]
\newtheorem{theorem}{Theorem}[section]

\newtheorem{lemma}[thm]{Lemma}

\newcommand{\ADS}{\mathop{\rm ADS}}

\newcommand{\kth}{\text{k}^{\text{th}}}

\newcommand{\instance}[1]{\textsf{#1}}

\def\E{{\textsf E}}

\def\KLdiv{D_{\text KL}}
\def\Exp{\textsf{Exp}}
\def\kernel{\kappa}

\def\vecf{\boldsymbol{f}}
\def\vecy{\boldsymbol{y}}
\def\vecu{\boldsymbol{u}}

\title{Semi-Supervised Learning on Graphs\\ through Reach 
   and Distance Diffusion 
}
\numberofauthors{1}
\author{\alignauthor Edith Cohen \\
{\large Google Research, USA}\\
{\large edith@cohenwang.com}
}
 \ignore{
\author{
Edith Cohen\\
Google Research, USA\\
\texttt{edith@cohenwang.com}
}
 }
\date{}
\begin{document}
\maketitle
 \begin{abstract}
   Semi-supervised learning (SSL)  is an indispensable tool
when there are few  labeled entities and many unlabeled entities  for which we want to predict labels.
With graph-based methods, entities correspond to nodes in a graph and edges
represent strong relations.   At the heart of SSL algorithms is the
specification of a dense {\em kernel} of pairwise affinity values from the graph structure.
  A learning algorithm is then trained on the kernel together with labeled entities.
The most popular kernels are {\em spectral} and include the highly scalable
 ``symmetric''   Laplacian methods,  that compute a soft labels using Jacobi
 iterations, and ``asymmetric'' methods including Personalized Page
 Rank (PPR) which use short random walks and apply with directed relations, such
as like,  follow, or hyperlinks.

We introduce
{\em Reach diffusion} and {\em Distance diffusion}
kernels that build on powerful social and economic models of centrality and 
influence in networks  and
capture the directed pairwise relations that underline social 
 influence.   Inspired by the success of social influence as an alternative to spectral
 centrality such as Page Rank, we explore SSL with our
 kernels and develop highly scalable algorithms for parameter setting, 
label learning, and sampling. We perform preliminary experiments that 
  demonstrate the properties and potential of our kernels.

  \end{abstract}

\ignore{
that can be more suitable that can qualitatively dominate 

 inspired by tremendously successful 

  Social diffusion, studied in the seminal work of [Kempe,
  Kleinberg, and Tardos 2003] and rooted in classic economic and
  social network models had 
  We formalize here a novel framework for semi-supervised learning
  that is inspired by social diffusion.  We define  {\em Reach Diffusion} and
  {\em Distance Diffusion}

 Two powerful techniques to define such a kernel are ``symmetric''  spectral
 methods and Personalized
  Page Rank (PPR).  With spectral methods, 
labels can be scalably learned using Jacobi iterations, but
an inherent limiting issue is that they 
are applicable to 
{\em  symmetric} (undirected) graphs, whereas
often, such as with like, follow, or hyperlinks, relations
 between entities are inherently asymmetric.  
PPR naturally works with directed graphs but even with state of the
art techniques does not scale when we want to learn billions of labels.

Aiming at both high scalability and handling of directed relations,  
we propose here {\em Reach Diffusion} and {\em Distance Diffusion} kernels.
Our design is 
inspired by models for influence diffusion in social networks,
formalized and spawned from the seminal work of
[Kempe, Kleinberg, and Tardos 2003].  
 We tailor these models to define a natural 
asymmetric ``kernel'' and design
}

\section{Introduction}

  Semi-supervised learning (SSL)
  \cite{BlumChawla:ICML2001,ZhuGL:ICML2003,SemiSupervisedLearning:2006}
  is a fundamental tool in  applications when there are
few labeled (seed) examples $(x_j,\vecy_j)$ $j\leq n_\ell$ and many $n_u \gg n_\ell$ unlabeled examples $x_i$ for $i\in (n_\ell,n_\ell+n_u]$.
SSL algorithms utilize some 
auxiliary structure, for example a metric embedding or interaction graph
 on the space of examples, from which a kernel $\kernel$ of pairwise {\em affinities}  is derived.
The goal is to predict labels for the 
unlabeled examples 
that are as  consistent as possible with seed labels $\vecy_j$ and 
affinities --  so that a learned label of an example is more similar to seed examples 
that are more strongly related to it.


A suitable kernel is critical to the
quality of the learned labels, but we first briefly discuss the almost
orthogonal issue of how a kernel is used.
A common choice is the nearest-neighbors classifier which in our context uses the
soft label 
\begin{equation}\label{kNN:eq}
\vecf_i=\frac{1}{k}\sum_{j\in \text{\rm
    top-k}\{\kernel_{ij}\mid j\leq n_\ell\}} \vecy_i
\end{equation}
 which is
the average of the labels vectors of the $k$ seeds that are nearest to
$j$ (have highest $\kernel_{ij}$) \cite{CoverHartkNN:InfoTheory1967}.
A more refined goal is to
minimize the squared loss
\begin{equation} \label{basicopt}
\sum_{i > n_\ell} \sum_{j \leq n_\ell} \kernel_{ij} ||\vecf_i-\vecy_j||_2^2\ . 
\end{equation}
The solution
is a  weighted average of the seed labels:
\begin{equation} \label{watsonnadaraya:eq}
\vecf_i = \frac{\sum_{j \leq n_\ell} \kernel_{ij} \vecy_j}{\sum_{j\leq n_\ell} \kernel_{ij}}\ . 
\end{equation}
This expression is known as the {\em kernel density estimate} (Watson
Nadaraya estimator \cite{Watson:1964,Nadaraya:1964} which builds on 
\cite{Rosenblatt:stats1956,Parzen:MathStats1962,Silveman:monograph1986}).
Nearest-seeds and kernel density are most often used when $\kernel$ is 
positive semi definite, but here we view it more generally.
In particular,  the expression \eqref{watsonnadaraya:eq} is a solution of the optimization \eqref{basicopt}
also when $\kernel$ is asymmetric \cite{Silveman:monograph1986}.
\ignore{
A special case of this estimator is 
the celebrated $k$-nearest neighbors (kNN) classifier 
\cite{CoverHartkNN:InfoTheory1967}:
Assuming some metric embedding of points, the affinity relation is
$\kernel_{ij}$ if $j$ is one of the $k$ closest labeled examples to $i$, and
$\kernel_{ij} = 0$ otherwise.
}
 The prediction of labels from soft labels can be direct, for example,
 using the maximum entry, or by
training a supervised learning algorithm on soft labels and labels of
seeds.  A very different use of the kernel,
inspired by 
the huge success of 
word embeddings \cite{Mikolov:NIPS13}, is to embed 
entities in latent
feature space of small dimension so that relation between embedding
vectors approximates respective kernel entries.
A supervised learning algorithm is
then trained on embedding vectors and labels of seed nodes
\cite{deepwalk:KDD2014,node2vec:kdd2016,Yang:ICML2016}.

 We now return to the specification of the kernel.
Seeds are typically a small fraction of examples and therefore it is 
critical that our kernel meaningfully captures weak affinities.
The raw data, however, typically only contains 
strong relations $w_{ij}$ in the form of 
pairwise interactions between entities:  Friendships in a social network, word co-occurrence 
relations, product purchases,
movie views by users, or 
features in images or documents. 
The interactions 
strengths may reflect frequency, recency, confidence, or 
importance or learned from node features.
The strong affinity values are represented by a graph with entities as nodes and
interactions as weighted edges.
The raw data is often enhanced by 
embedding entities in a lower dimensional Euclidean space 
so that 
larger  inner products, or closer distances, between the embedding vectors fit the provided 
 interactions \cite{Koren:IEEE2009}.   Such
embeddings of strong interactions are hugely successful in identifying 
strong interactions that were not explicit in the raw
 data (aka link prediction \cite{Liben-Nowell_Kleinberg:CIKM2003}), but the dense 
kernel they define is typically not accurate for weak relations.   The
embeddings are used to construct a more precise sparse 
graph of strong relations
\cite{TSL:science2000,hastietibshirani:book2001,ZhuGL:ICML2003}.
\notinproc{
To visualize an embedding that fits only the strong relations
consider points 
that form a dense manifold that lies in 
a higher  dimensional ambient space 
\cite{LLE:science2000,TSL:science2000,ZhuGL:ICML2003,Orlitsky:ICML2005,BRSrebro:UAI2011}, where
weak relations correspond to distances over the manifold.
}



The all-range kernel $\kernel$ we seek extends the
provided strong affinities by considering
the {\em ensemble} of paths from $i$ to $j$. We expect it to 
satisfy some principles of network science:  Increase
with the strength of edges, for shorter paths, and when there are more
independent paths between entities.
In addition, it is often helpful to discount connections
through high degree nodes, and be able to tune, via hyper parameters,
the effect of each property.

The most popular SSL kernels are {\em spectral} \cite{BlumChawla:ICML2001,ZhuGL:ICML2003,SemiSupervisedLearning:2006}.
``Symmetric''  methods compute learned labels that solve
an optimization problem with
smoothness terms of the form $w_{ij} ||\vecf_i -\vecf_j||^2$
which encourage learned labels of points with high 
$w_{ij}$ to be more similar and terms of the form $\lambda ||\vecf_i -\vecy_i ||^2$ for $i\leq n_\ell$
that encourages learned labels of seed nodes to be close to the true labels.
One such objective was proposed in the
influential {\em label propagation} work \ignore{by Zhu et al} 
\cite{ZhuGL:ICML2003}.
Related objectives,  {\em adsorption} and {\em modified adsorption}
were studied for YouTube
recommendations and named-entity recognition
\cite{Adsorption:WWW2008,TalukdarCrammer:ECML2009,TalukdarPereira:ACL2010}.    
The solution can be expressed as a set of linear equations of a
particular diagonally dominant form and computed by
inverting a corresponding matrix.  We can view this inverted matrix as
an all-range kernel $\kernel$ (which does 
not depend on the labels of the seed nodes), and the learned labels
are density estimates with respect to $\kernel$. 
\notinproc{
Other interpretations of the solution are as a fixed point of a
stochastic sharing process or the landing probability of a random walk
\cite{Chung:Book97a,KonLaf:ICML2002}.}
In practice, the dense $\kernel$ is not explicitly computed or stored
and instead the solution specific to the seed labels is approximated using the Jacobi method.
 The computation of each  gradient update is linear in the
number of edges and often tens or hundreds of iterations suffice\notinproc{ \cite{FujiwaraIrie:ICML2014}}. 
\ignore{
A further optimization sparsifies  the set of 
unlabeled points using a smaller set of {\em
  anchors} that is large enough to preserve the short-range structure
but is much smaller than the full set 
\cite{DelalleauBLeR:AIstat2005,LiuHC:ICML2010}.  Anchors are selected
as samples or cluster representatives.
Other unlabeled points are expressed as weighted combinations of 
anchors, removed from the ``spectral'' computation, and eventually
inherit as their learned label, an appropriate linear combination, in
essence a density estimate,  of the learned labels of the anchors.
}

Many interactions, 
such as 
follows, hyperlinks, and likes, are
 inherently asymmetric.  For them, symmetric spectral methods, which require 
undirected relations to guarantee convergence, are less suitable but
other spectral methods can be used.
  Laplacians for directed graphs 
\cite{dirLaplacian:FanChung:2006,PerraultMeila:NIPS2011,Yoshida:wsdm2016}
often result in algorithms that are not as elegant or as scalable.
A particularly successful technique is
Personalized Page Rank (PPR) and variants that use short random walks
\cite{Chung:Book97a,pagerank:1999}.  
PPR can be either personalized to 
labels (all seed with the same label) or to each individual
unlabeled nodes $i$.  Note that the two approaches are semantically
and computationally very different because random walks are inherently not reversible.
In the former, the kernel relates labels to nodes
\cite{ZhouBLWS:NIPS2004,LinCohen:ICWSM2008}, essentially ranking nodes from the
persepctive of each label, and can be approximated by
set of Jacobi iterations using the label dimension, which scales
well when the number of labels is small.
In the latter, the entry $\kernel_{ij}$
corresponds to the probability of visiting $j$ when personalizing to
$i$,  ranking (groups of seed nodes with the same) labels from the perspective
of each node.  While arguably this is what we want, computation via
Jacobi iterations involves propagating vectors with the node
dimension, and computation by simulating random walks from each $i$ requires that
they reach sufficiently many
 seeds, which scales poorly with sparse seeds 
even with state of the art techniques \cite{Lofgren:KDD2014}.  This is
because fast PPR designs aim to identify nodes
with largest visiting probabilities, whereas we require
{\em seed} nodes with largest visiting probabilities.  When
the seed set is a small fraction of all nodes, its total 
visiting probability is small.  This 
means that any algorithm from basic Monte Carlo generation of walks to
the bidirectional approach of \cite{Lofgren:KDD2014} would
spend most of its ``work''  traversing non-seed nodes.

 Finally, as mentioned, PPR and related kernels are used in
\cite{deepwalk:KDD2014,node2vec:kdd2016,Yang:ICML2016} to compute an
embedding.  In this case,  the only access required to the kernel is
for generating weighted samples of related pairs of entries, and is achieved by
simulating short walks.  The embedding alleviates the need 
to perform enough walks to reach seed nodes but necessitates the
computation of an embedding.

\onlyinproc{\paragraph {Contributions}}\notinproc{\subsection {Contributions}}
Inspired by the success of distance-based and reachability-based
centrality and influence as a formidable alternative to 
spectral notions in social and information networks, we faciliate their
application to SSL.
We define {\em reach diffusion} and {\em distance
  diffusion} kernels that capture the directed pairwise relations
underlying these classic influence definitions.
In a nutshell, our two models, reach and distance, complement each other in that they
capture two  different prevelant  interpretations of edges in datasets 
 represented as graphs.  With reach, the quality of a path depends 
 on its weakest link.  With distance, it depends on the sum of its
 links.  With both, the quality of a cut depends on
 its strongest link.  The use of randomization allows us to factor in
 the redundancy in the cut or more generally, the connecting path
 ensembles.  Both models offer distinct qualities than spectral
 models, which capture hitting probabilities of various random walks.

We then facilitate scalable application of reach/distance diffusion  kernels to
SSL through state of the art sketching techniques.
The computation of a sketch for each node is near-linear in the
size of the input.  From the sketch of each unlabeled node we obtain a
respective approximate kernel-density soft label.
  We establish statistical guarantees on the estimate quality of the
approximate labels with respect to the exact ones as defined in the
model. Moreover, the sketches also provide weighted samples of the
kernels which can be used to compute an embedding, essentially
replacing the spectral kernel component in
\cite{deepwalk:KDD2014,node2vec:kdd2016,Yang:ICML2016}  which is
sampled using random walks.
We perform a preliminary experimental study that demonstrates the application
and potential of our kernels. \onlyinproc{Due to page
  limitation, many details are
  omitted and we refer the reader to the included full version.}

\ignore{
\begin{wrapfigure}{r}{0.25\textwidth}
\includegraphics[width=0.25\textwidth]{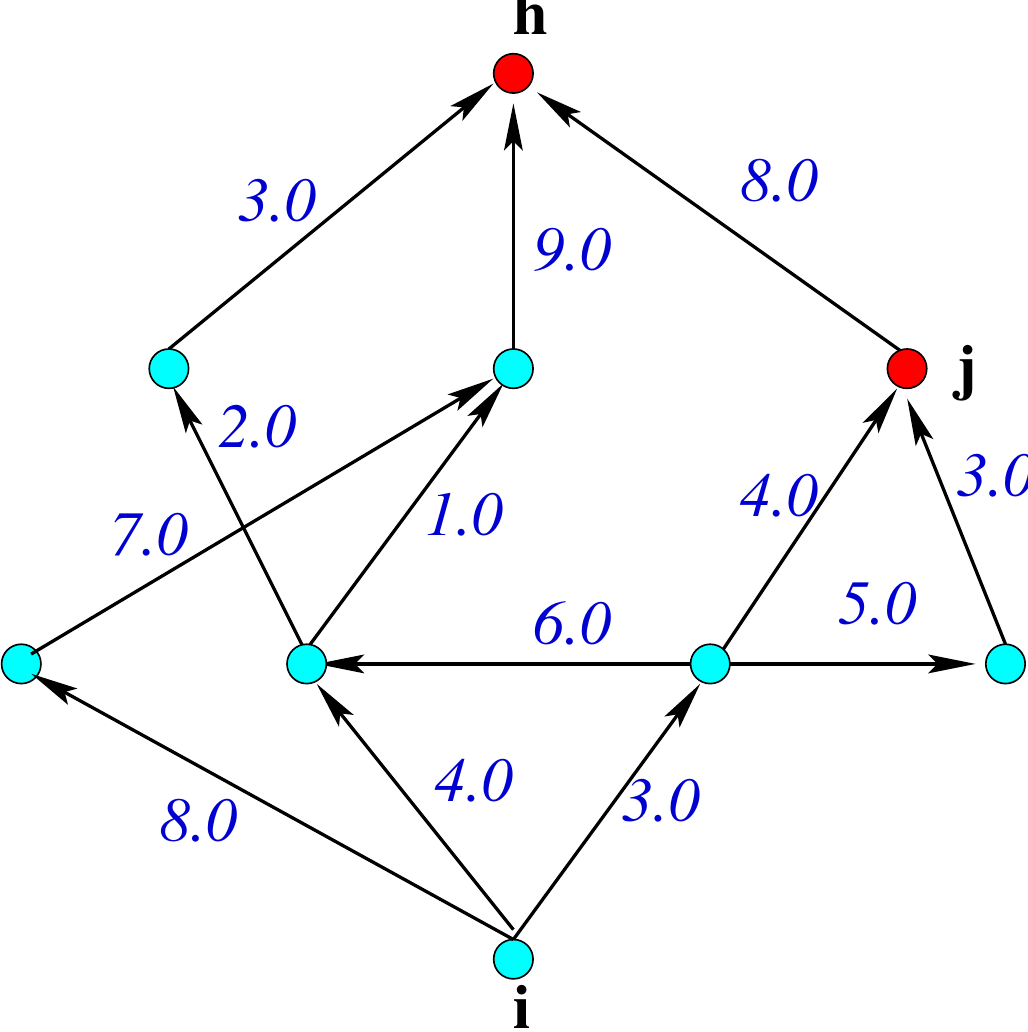}
\onlyinproc{\vspace{-15pt}}
\caption{{\small Graph with edge lengths/lifetimes}}
\label{example:fig}
\end{wrapfigure}
}

\begin{figure}
\center
\includegraphics[width=0.25\textwidth]{reachdiffusion.pdf}
\caption{{\small Graph with edge lengths/lifetimes}}
\label{example:fig}
\end{figure}

\onlyinproc{\section{Reach Diffusion Kernels}}
\notinproc{\subsubsection{Reach diffusion}}
Our reach diffusion kernels are inspired by popular
information diffusion models 
motivated
by Richardson and Domingos
\cite{RichardsonDomingos:KDD2001} 
and formalized by Kempe, Kleinberg, and Tardos \cite{KKT:KDD2003}
and by the field of
reliability or survival analysis
\cite{MillerSurvivalAnalysis:book,lawless2011statistical} applied to
engineered and biological systems.

Influence diffusion \cite{KKT:KDD2003} is 
defined for a network of
directed pairwise interactions between entities.
 A probability distribution on the subset of active edges is
constructed, and the {\em influence} of a 
node $v$ is then measured as  the expected number of nodes $v$ 
can reach through active edges.  
Independent Cascade (IC) \cite{KKT:KDD2003}, which uses
independent activation probabilities $p_e$ to edges, is 
the simplest and most studied model.
The influence of a node, when defined this way, satisfies the
desirable properties of increasing when paths to other nodes are
shorter and when there are more independent paths.

To apply this approach to SSL, we need to first define
an appropriate kernel $\kernel_{ij}$ that provides corresponding pairwise
``influence'' values.  
The straightforward first attempt is to define $\kernel_{ij}$ as the probability
that $i$ reaches $j$.  But this has scalability issues, similar to
PPR, when the seed set is sparse:  Approximation of the kernel density estimates require
that we compute these probabilities for ``sufficiently many'' seeds.
Instead, we propose a refinement 
that both scales and satisfies desirable
properties. 

Inspired by reliability analysis, we view edges as a
 component of a system connecting entities.
 We associate with edges continuous random variables $\mu_e$ that
 correspond to their {\em lifetime} and chosen so that the expected lifetime increases
 with the ``significance'' of the edge.
Edges that correspond to more significant interactions have higher expected lifetimes.
From this, we can define for each ordered pair of nodes $(i,j)$ its
{\em survival time threshold}  random variable $t_{ij}$, which is
the maximum $\tau$ such that $j$ is reachable from $i$ via edges with 
lifetime $\mu_e \geq \tau$. 
If $j$ is connected to $i$ via a single directed path,  the survival 
time $t_{ij}$ is 
the minimum lifetime of a path edge.  For a particular node, the 
survivability of having an out connection is the maximum lifetime of an out edge.

 Note that we can express the IC model of \cite{KKT:KDD2003} in terms of
this reliability formulation by choosing independent lifetime variables $\mu_e \sim \Exp[1/p_e]$
(exponentially distributed with parameter $1/p_e$).  The influence of a node $i$ in the IC
model is then the expected number of nodes reachable from $i$ via
edges with $\mu_e \geq 1$.

In a Monte Carlo simulation of the model we obtain a set of lifetime
values $\mu_e$ for edges which imply corresponding survival times
$t_{ij}$ for the connectivity from $i$ to $j$.
We then define $\kernel_{ij}$ as non-decreasing function of
$t_{ij}$ or alternatively, use $\kernel_{ij} = \alpha(N_{ij})$, where
$\alpha$ is non-increasing and 
$N_{ij}$ is the position of node $j$ in a decreasing order of $t_{ij}$.
When $j$ is not reachable from $i$ we define $\kernel_{ij}=0$.
In the example of Figure~\ref{example:fig}, we have $t_{ih}=7$ and $N_{ih}=4$, since there are in total $4$ nodes $a$ with $t_{ia}\geq 7$.
We have $t_{ij}=3$ and $N_{ij}=8$, since all nodes $a$ except one have $t_{ia}\geq 3$.

Note that our kernel $\kappa$ here is also a random variable, which 
assumes values in each simulation of the model.
Our learned labels will be the expectation of the density estimates
over the distribution of $\kernel$.  Our use of randomized kernels
resembles other contexts \cite{RahimiRecht:NIPS2007}.

We can verify that our kernel satisfies the qualitative properties
we seek: Higher significance 
edges, shorter paths, and more independent paths lead to 
higher expected survival times $t_{ij}$ and lower $N_{ij}$.

The position $N_{ij}$ does not only depend on the connectivity ensembles but also on
how the ensemble relates to the corresponding ensembles of other
nodes.   For example, suppose $i$ connects to $j$ via a path of length
$3$ and to $h$ via a path of length $2$ with independent iid $\mu_e$.
Then we always have $\E[t_{ij}] \leq \E[t_{ih}]$ and $\E[N_{ih}] \leq
\E[N_{ij}]$ (the shorter path has better survival).
When the paths are
independent,  however, it is possible to have simulations with
$t_{ih}< t_{ij}$ and $N_{ih} > N_{ij}$
but when the 2-path is the prefix of the 3-path we always have
$t_{ij}<t_{ih}$.

A typical choice for lifetime variables in reliability models is
the Weibull distribution.  If we use Weibull distributed $\mu_e$, with
shape parameter $\beta$ and scale parameter $\lambda$ equal to the
significance of $e$, we obtain some
compelling properties.
 Note that the Weibull family includes the
exponential distribution which is Weibull with shape parameter
$\beta=1$ and corresponds to ``memoryless'' remaining lifetime.
Parameters $\beta < 1$ model higher probability of ``early failures''
and $\beta > 1$ to bell shaped lifetimes concentrated around the expectation.
For two edges with iid lifetimes, 
the probability of one having a higher lifetime than the other is 
proportional to the ratio of their significances to the power of
$\beta$.
From the {\em closure under minimum} property of the Weibull distribution, 
the survival time of a directed path with independent Weibull lifetimes
 is also Weibull distributed with the same shape parameter $\beta$ and scale parameter equal to an inverse of the $\beta$-norm of the vector of 
inverted edge significances.  For exponential distributions, the
expected lifetime of each edge is the inverse of its significance, and
the survival threshold of the path has parameter
 (which is the inverse of the Weibull parameter) equal to the sum
of significances, which yields expected survival that is the inverse
of that sum.  The shape parameter $\beta$ allows us to tune the
emphasis of lower significance edges on the survival of the path.

\onlyinproc{\section{Distance Diffusion Kernels}}\notinproc{\subsubsection{Distance diffusion}}

 Our distance diffusion kernels are inspired by a generalization,
 first proposed by Gomez-Rodriguez et al 
\cite{Gomez-RodriguezBS:ICML2011,DSGZ:nips2013,timedinfluence:2015}
of  the influence model of Kempe et al \cite{KKT:KDD2003}  to a
 distance-based setting. They are also inspired by models of distance-based
 utility in networks
 \cite{CoKa:jcss07,BlochJackson:2007,JacksonNetworks:Book2010} where
 the relevance of a node to another node decreases with the distance
 between them.    In these influence models,  edges have {\em length} random
 variables, which can be interpreted as propagation times. The influence of a
 node $v$ is then defined as a function of elapsed
 ``time'' $T$,  as the expected number of nodes that are activated within a
time $T$ (the shortest-path distance from $v$ is at most $T$).  
Note that the ``time'' here refers to propagation and activation times
rather than ``survival'' time, so shorter times correspond to stronger
 connections.  
To prevent confusion, we will use the terms edge {\em lengths} in the 
context of distance diffusion here and use {\em time} only in the context of 
reach diffusion.

More precisely, we associate length random variables $\ell_e$ with edges
with expectation that {\em decreases} with the edge significance.
 In a simulation of the model we obtain a set of lengths
 $\ell_e \geq 0$ for edges which induces a set of shortest paths
 distances $d_{ij}$.
Again, the random variable $d_{ij}$ depends on 
the ensemble of directed graphs from $i$ to $j$.
A choice of Weibull distributed lengths with scale parameter
equal to the {\em inverse} significance seems particularly 
natural \cite{ACKP:KDD2013,DSGZ:nips2013,CDFGGW:COSN2013}:
The closest out connection from a node corresponds to
the minimum length of an out edge.  When edge lengths are Weibull, the
minimum is also Weibull distributed with the same shape parameter and
a scale parameter equal to the inverted $\beta$-norm of the edge significances.  

Our kernel $\kappa_{ij}$ can be naturally defined through a decreasing function
$\alpha$ of the shortest-path distance $d_{ij}$ or of the position
$N_{ij}$ of node $j$ by increasing distance from $i$. When $j$ is not reachable from $i$ we 
define $\kappa_{ij}=0$.
In the example of Figure~\ref{example:fig}, we have $d_{ih}=9$ and $N_{ih}=9$, since for all nodes $a$ with $d_{ia}\leq 9$.
We have $d_{ij}=7$ and $N_{ij}=6$, since there are 6 nodes $a$ with $d_{ia}\leq 7$.

\onlyinproc{\section{Kernel sketching}}\notinproc{\subsubsection{Kernel
    sketching}}
In the probabilistic model, 
our nearest-seeds or kernel-density soft labels $\vecf_i$ are an
expectation, over Monte Carlo simulations of the model, of a deterministic soft label obtained
in each simulation.
\ignore{
\begin{equation} \label{llabel}
\vecf_i = \E\left[\frac{\sum_{j\leq n_\ell} \kernel_{ij} m_j
    \vecy_j}{\sum_{j\leq n_\ell} m_j \kernel_{ij}}\right]\ ,
\end{equation}
where $m_j >0$ are importance weights that can be attached to entities. 
}
We approximate this expectation by an average and
establish \onlyinproc{using Hoeffding's inequality} that a small number of simulations suffices
to estimate the entries of $\vecf_i$ within small additive error.

The main algorithmic challenge is obtaining a scalable approximation 
of each simulation.  We first consider the simple ``closest seed(s)'' (nn) kernel
weights, where only the $k$ closest seed nodes to
$i$ contribute to $\vecf_i$.  In this case
we need to compute for each $j$ the seed node(s) $i\in U$ with
minimum $d_{ij}$ with distance diffusion or with highest $t_{ij}$ with
reach diffusion.  
For distance diffusion and $k=1$, the computation is
equivalent to a single application of Dijkstra's algorithm (with
appropriate heap initialization with all seeds).  With $k$
  closest seeds the computation is equivalent to $k$ Dijkstra's. For reach diffusion, we develop and
analyse a
{\em survival threshold} graph search which is computationally similar
to Dijkstra: In a nutshell, the summation operation used for shortest paths length 
can be replaced (carefully) with a min operation for tracking 
survival thresholds instead of distances. 

The exact computation of density estimates, however, is prohibitive
when $\kernel$ is dense:
The computation of
$\kernel_{ij}$ and $N_{ij}$ for all seed nodes $j$ and 
unlabeled nodes $i$ uses $n_u$ graph searches, which is
$O(|E| n_u)$ operations and quadratic even for sparse graphs.
  We use instead sketches of $\kernel$ which 
are both computed very effcieintly and allow us to approximate the entries of 
$\vecf_i$ to within small additive errors.
We apply a sketching technique of
reachability sets and of neighborhoods of directed graphs \cite{ECohen6f,ECohenADS:TKDE2015}.
We will use these sketches, computed with respect to different base
sets of nodes, for two different purposes.  The first is to obtain estimates with
small relative error on $N_{ij}$ from the survival threshold $t_{ij}$ with
reach diffusion and from the shortest-path distance  $d_{ij}$ with
distance diffusion.   These estimates replace the expensive exact
computation of kernel entries $\kernel_{ij}$.  The second is to obtain, for each node $i$,
a small tailored weighted sample of seed nodes according to 
the kernel entries $\kernel_{ij}$.  
 Since the sample is appropriately weighted, we can use only the sampled entries with inverse probability weights to approximate the density estimates and
yet obtain a good approximation of the full sums.

 The distance-sketching technique can be applied almost out of the box
 for distance diffusion.  For reach diffusion, however, we need to
 sketch survival times and not distances.  For the first part,
 we need to obtain sketches that will allow us to estimate the set
 sizes $R_{\tau}(i)$ for all $i$ and $\tau$.  For the second part, we
 need to obtain a weighted sample with respect to the reach diffusion kernel.
To do so, we design a threshold 
sketching algorithm which builds on the basic distance-sketching
design \cite{ECohen6f,ECohenADS:TKDE2015} but replaces the 
shortest-path searches by  our ``survival threshold'' graph searches.
We show that total computation of these threshold searches is near-linear and establish its correctness.


 An advantage of our framework is that we can use the same sets of sketches
to compute soft labels $\vecf_i$ with respect to
 multiple kernel weighting options.  Moreover, we also obtain
 leave-one-out soft labels $\vecf'_i$ for seed nodes $i\in U$ which
 depend only on labels of other seed nodes $U\setminus\{i\}$.
This is useful for selecting the kernel weighting that is most
effective for the seed labels ``training set,'' for example, one that minimizes
$\sum_{i\in U} ||\vecf'_i - \vecy_i ||_2$.
Moreover, the set $(\vecf'_i,\vecy_i)$ can be used to address a
separate problem, which is learning the class $\vecy_j$ from the
soft label $\vecf_j$, by using these pairs as labeled examples to
train a model.  We demonstrate such usage in our experiments.

\ignore{
\subsubsection{Parameter setting}
The last component of our framework is a methodology for parameter setting.
There are multiple hyper parameters in our models including the 
significance weights and the lifetime or length random
variables we associate with the graph components.
Our sketches support leave-one-out cross validation on the seed nodes
with the same cost of computing the sketches.  Therefore the seed set
itself can be used to set the hyper parameters.
}

\notinproc{
\subsection{Overview}
In Section~\ref{model:sec} we present
our reach and distance diffusion kernels.
In Section~\ref{sketch:sec} we show how we use Monte Carlo simulations and
sketches to approximate kernel-density soft labels.  We also analyze the
worst-case statistical guarantees on approximation quality that we can obtain.
In Section~\ref{algo:sec} we present algorithms to compute the
approximate labels.  Parameter settings and the derivation of ``hard''
labels from learned soft labels is discussed in
Section~\ref{hyperpar:sec}.  Section~\ref{exper:sec} contains preliminary
experiments that also demonstrate how the models are applied.
}

\notinproc{ 
\section{Model} \label{model:sec}

 Our input is specified as a graph $G=(V,E)$, where the nodes $V$ are 
{\em entities} and edges $E$ (undirected or 
  directed) correspond to interactions between entities.  We associate 
  {\em weights} $w_{e}$ with edges $e\in E$
that reflect the strength of the interaction and
inverse cost of connecting through the head entity.
  We can also associate weights $w_v$ with a node $v$  that reflect the inverse
  cost of connecting through the entity.
The weights, in general, can be learned from node and edge features.
Simple and effective weights 
regularize by degree (number of
interactions) to discount 
connections through
higher degree nodes and to increase
 edge weights by frequency or recency of the interaction.

\subsection{Reach diffusion kernels}
 We build a probabilistic model from this input by associating {\em
lifetime} random 
 variables with edges and nodes.
A natural choice is to  use  for each component $x$,
a Weibull or an exponentially distributed random 
variables $\mu_x \sim \Exp[1/w_x]$ with parameter equal to its weight
$w_x$.  Some components that are ``fixed'' have $t_x = +\infty$.
Note that the expected lifetime is
$\E[\mu_x] = w_x$, so stronger
interactions have longer lifetimes.
In each Monte Carlo simulation of the model we obtain a set of
lifetimes $\mu_x$ for the components of the graph (edges and nodes).  

For a threshold parameter $\tau$,  the set of {\em active} components
are the edges and nodes  $\{x \in E \cup V \mid \mu_x \geq \tau\}$.
For a pair of nodes $(i,j)$, we define the {\em survival time} $t_{ij}$ of the connection from $i$ to $j$ as
the maximum $\tau$ such that  $j$ is reachable from $i$ using 
components with $\mu_x \geq \tau$.  
Note that for an edge $e=(i,j)$ we always have
$t_{ij} \geq \mu_{e}$.   

 Reach diffusion kernels can use absolute survival times, $\kernel_{ij}
= t_{ij}$, or instead use their ranks.
To do so, we use the notation 
$$R_\tau(i) = \{j \mid t_{ij} \geq \tau\}$$ 
for the set of nodes reachable from $i$ via active components. 
Note that the $R_{\tau}(i)$ is a random variables.
Note that for a fixed simulation,  the set of active components and
the reachability sets
$R_{\tau}(i)$ are non increasing with $\tau$.

 For each $i$, the survival times $t_{ij}$ induce an order 
over nodes $j$ where nodes with better ``connectivity'' to $i$ are (in expectation) earlier in this order.
The position of $j$ in this order is captured by the random variable 
$$N_{ij} = |R_{t_{ij}}(i)| = |\{h \mid t_{ih} \geq t_{ij} \}|\ .$$
Finally, we define a (rank based)  reach diffusion kernel 
$$\kernel_{ij} = \alpha(N_{ij})\ ,$$ where $\alpha \geq 0$ is 
non-increasing.  A natural default choice 
is $\alpha(x)=1/x$, where the affinity of $j$ to $i$ is  inversely 
proportional to the number of nodes that precede it in the influence order. 
Another natural choice, is to use a very fast growing $\alpha$, which gives as
a nearest neighbor classifier.
When $j$ is not reachable 
from $i$, we define $\kernel_{ij}=0$.

In the simplest scheme, the lifetimes $t_x$ of different
components can be independent.   
Semantically, this
achieves the effect of rewarding 
multiple edge-disjoint paths, even when they traverse the same nodes. 
(Nodes are not considered failure points).  In general, however, we can also capture correlations between edges by correlating accordingly the lifetime random variables.  For example, we 
can consider all edges with the same head entity as related 
and share the same lifetime $t$, or correlated lifetimes.

 A natural extension is to associate
{\em mass} $m_i \geq 0$ with nodes, that is interpreted as proportional to 
the importance of the example.  For the case when entities correspond to consumers and goods and we are only interested in labeling goods, this flexibility allows us to assign positive mass only to ``goods'' nodes and 
$m_i=0$ to ``consumer'' nodes. 
The relevance of an example $j$ to another $i$ 
is then proportional to its mass, but
inversely depends on the mass that is reached before $j$.
To model this, we refine  the definition to be
$$N_{ij} = \sum_{h \mid t_{ih} \geq t _{ij}} m_h$$ 
as the mass that is reached at the survival threshold of the connection $(i,j)$.
 Our derivations and algorithms can be 
adapted to incorporate mass but for simplicity of presentation, we
focus on the basic setting where $m_i\in\{0,1\}$.

\subsection{Distance diffusion}

 We associate {\em length} nonnegative random variables with edges and nodes.  In
 each Monte Carlo simulation of the model we obtain a fixed set of
 lengths $\ell_x$ for the components of the graph.
We can now consider shortest-paths distances $d_{ij}$ with respect to
the lengths $\ell$.  
The length of a path is defined as the sum of the
lengths of path edges and the lengths of middle nodes of the path.
The distance $d_{ij}$ is the length of the shortest path.
We can again define the kernel according to aboslute distances as
$\kernel_{ij}=\alpha(d_{ij})$ or their ranks.
For convenience here, we overload the notation we used for reach
diffusion:
For $\tau\geq 0$ and node $i$, we denote by $R_\tau(i) = \{j \mid
d_{ij}\leq \tau \}$ the set of nodes $j$ within distance at most
$\tau$ from $i$.  For nodes $i,j$, we denote by $N_{ij}$ the number
(or mass) of nodes $h$ with $d_{ih} \leq d_{ij}$.
The (rank based) distance diffusion kernel is defined as $\kappa_{ij} = \alpha(N_{ij})$.

\subsection{Kernel distribution and prediction}
In the semi-supervised learning setup, a subset
of the nodes, those with $j\leq n_\ell$ have provided labels $\vecy_j$
and we use the kernel to compute soft labels for nodes $i > n_\ell$.  With
nearest-seeds \eqref{kNN:eq} or kernel density
\eqref{watsonnadaraya:eq}, the soft labels are 
nonnegative of dimension $L$ and norm $||
\vecy_i||_1 =1$ when the provided labels have that form.

Since our kernels $\kernel$  are random variables that are 
instantiated in each simulation, there are two concievable choices
to working with them. 
The first, which we adopt,  is to compute a soft label in each 
simulation, and take the expectation. With kernel density we have
\begin{equation} \label{llabel}
\vecf_i = \E\left[\frac{\sum_{j\leq n_\ell} \kernel_{ij} m_j \vecy_j}{\sum_{j\leq n_\ell} m_j \kernel_{ij}}\right]\ . 
\end{equation}
The alternative is instead to estimate the expectation
$\overline{\kernel}_{ij}$ per entry and plug it in the respective
expression \eqref{kNN:eq} or \eqref{watsonnadaraya:eq}.
Our reasoning for prefering the former choice is 
preserving the dependencies when computing the density estimates in the relative location of seed nodes across simulations.
} 
 
\notinproc{
\section{Approximate soft labels} \label{sketch:sec}
In this section we start tackling the issue of highly scalable
computation of {\em approximate} soft labels.  We use
Monte Carlo simulations to estimate the expectation and 
sample-based sketches \cite{ECohen6f,ECohenADS:TKDE2015} to
approximate the soft label in each simulation.  

 Recall that the labels $\vecf_i$ are an expectation which we estimate
 using the average of $T$ independent draws, obtained via Monte Carlo
 simulations, of the soft-label random variable $f'_i$.
With kernel density, we estimate \eqref{llabel} as the average of 
$T$ draws of
\begin{equation} \label{onesimulation}
f'_i = \frac{\sum_{i\leq n_\ell} \kernel_{ji} m_i \vecy_i}{\sum_{i\leq n_\ell}
  m_i \kernel_{ji}}\ .
\end{equation}
We consider  the 
statistical guarantees we obtain for the average of $T$
independent  (exact) random variables $f'_i$ as an estimate of $\vecf_i$.
\begin{lemma}
With $T=\epsilon^{-2}$,  the average estimate of each component of $\vecf_i$ has absolute error bound that is {\em well concentrated} around $\epsilon$ (probability of absolute error that exceeds $c\epsilon$ is at most $2\exp(-2c^2)$). 
\end{lemma}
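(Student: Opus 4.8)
The plan is to apply Hoeffding's inequality to the bounded random variables $f'_i$. First I would observe the key structural fact that each component of the soft-label random variable $f'_i$ defined in \eqref{onesimulation} lies in the interval $[0,1]$: since the provided labels $\vecy_i$ are nonnegative with $\|\vecy_i\|_1 = 1$, each entry of $\vecy_i$ is in $[0,1]$, and $f'_i$ is a convex combination of these vectors with nonnegative weights $m_i \kappa_{ji}$ (normalized to sum to $1$). Hence every coordinate of $f'_i$ is a weighted average of numbers in $[0,1]$ and therefore is itself in $[0,1]$. This boundedness is exactly what is needed, and it is the only place where the precise form of the estimator matters.

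Next I would set up the averaging. Fix a coordinate $\ell \in \{1,\dots,L\}$ and a node $i$, and let $X_1,\dots,X_T$ be the values of the $\ell$-th coordinate of $f'_i$ across the $T$ independent Monte Carlo simulations. These are i.i.d., each supported on $[0,1]$, with common mean equal to the $\ell$-th coordinate of $\vecf_i$ by \eqref{llabel}. The estimate is $\overline{X} = \frac{1}{T}\sum_{t=1}^T X_t$. Hoeffding's inequality for sums of independent variables in $[0,1]$ gives, for any $a > 0$,
\begin{equation*}
\Pr\!\left[\,|\overline{X} - \E[\overline{X}]| \ge a\,\right] \le 2\exp(-2 T a^2)\ .
\end{equation*}
Substituting $T = \epsilon^{-2}$ and $a = c\epsilon$ yields the claimed bound $2\exp(-2 c^2)$ on the probability that the absolute error exceeds $c\epsilon$, which is precisely the stated ``well concentrated around $\epsilon$'' conclusion (taking $c=1$ shows the error exceeds $\epsilon$ with probability at most $2e^{-2}$, and the tail decays like a Gaussian in $c$).

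I do not expect any serious obstacle here; the argument is essentially a one-line invocation of Hoeffding once boundedness is established. The only point requiring a little care is the degenerate case where the normalizing denominator $\sum_{i\le n_\ell} m_i \kappa_{ji}$ is zero in a given simulation (no seed is reachable, so all $\kappa_{ji}=0$); one should fix a convention for $f'_i$ in that event (e.g.\ the zero vector, or excluding such simulations), and note that it does not affect the $[0,1]$ bound or the application of Hoeffding. One might also remark that a union bound over the $L$ coordinates (and, if desired, over all unlabeled nodes $i$) costs only a logarithmic factor in $T$ if one wants a simultaneous guarantee, but the statement as phrased is per-coordinate and needs nothing further.
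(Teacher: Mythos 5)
Your proof is correct and is exactly the paper's argument: the paper proves this lemma as an immediate consequence of Hoeffding's inequality after noting that the label-vector entries lie in $[0,1]$, which is precisely the boundedness observation and one-line Hoeffding invocation you carry out (just spelled out in more detail, with a sensible extra remark about the degenerate zero-denominator case).
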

\begin{proof}
This is an immediate consequence of Hoeffding's inequality, noting
that entries of our label vectors are in $[0,1]$.
\end{proof}

 We next consider computing \eqref{onesimulation} for a single
 simulation.  As we stated in the introduction, exact computation for the nearest-seed estimator
 is simple.  With $k=1$,
the learned label is $f'_i = \vecy_j$, where $j = \arg\max_h \kernel_{ih}$.  The learned labels of all nodes can be computed very efficiently: For distance diffusion, we can use a single Dijkstra computation with the priority heap initialized with all seeds (find the closest seed to each node). For reach diffusion, we can similarly use a single survival threshold search (version of Algorithm~\ref{effthreshsketch:alg} without the pruning).

For kernel density, however, exact computation requires the values of the positions $N_{ij}$ for
all $i>n_\ell$ and $j\leq n_\ell$.
With distances, it is widely believed that there is no subquadratic
algorithm and even the representation alone is quadratic.
With reach diffusion, on undirected (symmetric) graphs, all pairs $t_{ij}$ can be
represented efficiently using  a single minimum weight spanning tree (MST)
computation on a graph with edge weights $1/\mu_e$.  The computation is near-linear in the number of edges.  
The graph cuts defined by the MST compactly specify $t_{ij}$ for all
pairs.   
Our interest here, however, is directed graphs, where the problem does
not seem much easier than shortest paths computations:
The computation
of $t_{ij}$ and $N_{ij}$ for one source node $i$ and all $j$ can be performed by a
graph search from $i$, but it seems that separate searches are
needed for different source nodes, similarly to the corresponding
problem with distances.  
Moreover, while $n_\ell$ searches suffices to compute $t_{ij}$, we seem to need $n_u \gg n_\ell$ searches to also compute $N_{ij}$.

 We approach this (for both reach and distance diffusions)  by using instead 
{\em estimates} $\hat{f}'_i$  of $f'_i$, which can be scalably computed
for all $i>n_\ell$.
 We then estimate $\vecf_i$ by averaging the $T$ estimates
$\hat{f}'_i$.  
Our estimates $\hat{f}'_i$ are obtained by computing two sets of sketches for all nodes $i$:
\begin{itemize}
\item
The first set of sketches is with respect to the full set of nodes, or more precisely, all nodes $h$ with $m_h>0$.  These sketches
are used to estimate the mass
$m(R_{\tau}(i))$  for all $i$ and for all $\tau$.  
\item
The second set of sketches is with respect to seed nodes.
They provide us, for each node $i$,  a
 small tailored weighted sample of seed nodes $S(i) \subset [n_\ell]$.
The sampling is such that the inclusion probability of $j$ is
proportional to $m(j)$ and inversely proportional to its position {\em
  in the seed set} when ordered by $t_{ij}$ ($d_{ij}$ for distances).
  For each $j\in S(i)$, the sketch also provides us with 
the exact value of $t_{ij}$ ($d_{ij}$ for distances) and a conditional inclusion probability $p_{ij}$.
\end{itemize}

  Using these sketches, we compute our per-simulation label estimate $\hat{f}'_i$ as follows.  
For each $i$ and $j\in S(i)$, we have $t_{ij}$ ($d_{ij}$ for distances), and use the first set of sketches to
compute the estimates
$$\hat{N}_{ij} \equiv \hat{m}(R_{t_{ij}}(i))\ .$$
For each $i$, we  use the sample $S(i)$ obtained in the second set of sketches to compute
\begin{equation}\label{esthat}
\hat{f}'_i = \frac{\sum_{j\in S(i)} \frac{1}{p_{ij}} m_j
  \hat{\kernel}_{ij}  \vecy_j}{\sum_{j \in S(i)}  \frac{1}{p_{ij}} m_j
  \hat{\kernel}_{ij}}\ ,
\end{equation}
where $\hat{\kernel}_{ij} = \alpha(\hat{N}_{ij})$.

\subsection{Sketches}

  The sketches we will use are MinHash and All-Distances Sketches (ADS), 
using state of 
the art optimal estimators \cite{ECohen6f,ECohenADS:TKDE2015,multiobjective:2015}.
To simplify and unify the
presentation, we use bottom-$k$ all-distances sketches
 \cite{ECohen6f,ECohenADS:TKDE2015,multiobjective:2015} 
for the two uses of sketches.  
The sketch parameter $k$ trades off sketch/sample 
  size and estimation quality. 
Note that other variations can also be used and the
 representation can be simplified when sketches are only used for
size estimation.
For further simplicity, we assume here that $m_i \in \{0,1\}$. See discussion in \cite{ECohenADS:TKDE2015} for the handling of general $m$.
We use the notation $U$ for the set of nodes that are being sketched,
which is the full set of nodes with positive mass for the first set of
sketches and only the seed nodes for the second set.

The sketches are randomized structures that are
defined with respect to a uniform
 random permutation $\pi$ of the sketched nodes $U$. We use the notation $\pi_j$ for the permutation position of $j\in U$. 
A bottom-$k$ MinHash sketch
 is defined for each $\tau$ and includes the $k$ nodes with minimum
 $\pi$ in the set $R_{\tau}(i) \cap U$.
The all-distances sketches $S(i)$ we work with
can be viewed as encoding  MinHash sketches of $R_{\tau}(i)\cap U$
for all values of $\tau$.
Formally,  
\begin{equation} \label{botksketch:eq}
j \in S(i) \iff \pi_j \leq \kth_\pi \{ h\in U \mid t_{ih} \geq 
t_{ij}\}\ . 
\end{equation}
With distances, the sketch is defined with the inequality reversed:
\begin{equation} \label{botksketchdist:eq}
j \in S(i) \iff \pi_j \leq \kth_\pi \{ h\in U \mid d_{ih} \leq 
d_{ij}\}\ . 
\end{equation}

The definition is almost identical for  reach diffusion and distance
diffusion.
Reach diffusion sketches are defined for survival times 
 $t_{ij}$, which are stronger for higher values,  whereas with 
 distances we use $d_{ij}$, which are stronger for lower values. 
To reduce redundancy, we will focus the presentation on  reach diffusion. 
To obtain the corresponding algorithms and sketches for distances, 
we need to reverse the inequality signs.

For each entry $j$ in the sketch $S(i)$, we also compute 
the {\em conditional inclusion probabilities} $p_{ij}$ of $j\in S(i)$.
In our context, we use these probabilities for the mass estimates 
obtained from the first set of sketches
and for the inverse probability estimate $\hat{f}'_i$ that use the second set of sketches.

The probability $p_{ij}$ is  defined with 
respect to (is conditioned on) 
the permutation $\pi$ on $U\setminus \{j\}$.  It is the
probability,
over the $|U|$ possible values of $\pi_j$ of having a value low enough
so that $j$ is included in  $S(i)$.
More precisely, for $j\in S(i)$, we consider the set of nodes
$$A_{ij} = \{ h\in U\setminus\{j\}  \mid t_{ih} \geq t_{ij}\}\  ,$$
which includes all nodes in $U$ other than $j$ that have survival
times at least $t_{ij}$. We then define 
\begin{equation} \label{pdef:eq}
p_{ij} = \left\{
\begin{array}{lr}
1 & \text{: if } |A_{ij}| < k \\
 \frac{\kth_\pi(A_{ij}) -1}{|U|} & \text{:  Otherwise}
\end{array}
\right.
\end{equation}
Where the operator $\kth_\pi$ returns the $k$th smallest permutation position 
of all elements in the set.  The node $j$ will always be included in the sketch if there are fewer than $k$ other nodes with a lower $t_{ih}$.  Otherwise, it will be included only if it has one of the lowest $k$ permutation positions among the nodes $U$, which means that it has a strictly lower permutation position than the $k$th position in $A_{ij}$.

Note that the set $A_{ij}$ is usually contained in $S(i)$, except for sometimes, when there are multiple elements $h$ with same
$t_{ih}$.  In this case it is possible for $p_{ij}$ to be defined by an element
not in $S(i)$.   We refer to such elements that are not included in
$S(i)$ but are used to compute inclusion probabilities for other nodes as $Z(i)$ nodes.

 We now explain how the sketches are used for the two tasks.
 For a node $i$, the sketch $S(i)$ can be viewed as a list of tuples of the form $(j,t_{ij},p(t_{ij}))$. 
When $U$ is the seed of seed nodes.  The second set of sketches is
computed with $U$ being the set of seed nodes.  In this case, 
the tuples $S(i)$  are the sample we use to compute the approximate
density estimates.
 The first set of sketches is computed with $U$ being
the set of all nodes with $m_i =1$.  We use this sketch to 
obtain {\em neighborhood estimation lists} which we use to obtain the estimates 
$\hat{m}(R_{\tau}(i))$. 
The neighborhood estimation list includes, for each represented $t$ value,
the entry $$(t,\sum_{h\in S(i) \mid t_{ih}\leq t} \frac{1}{p_{t_{ih}}})\ ,$$
 in sorted decreasing $t$ order.   This list can be computed by a
 linear pass over tuples $(j,t,p)$ in decreasing $t$ order.
To query the list with value $\tau$ we look for the last tuple in the list that has $t\geq \tau$ and return the associated estimate.

\subsection{Estimation Error Analysis}

  The estimation quality of $\hat{f}'_i$ \eqref{esthat} as an estimate of $f'_i$ \eqref{onesimulation} is
  affected by two sources of error.
The first is the quality of  the sample-based inverse probability estimate
\eqref{esthat} as an estimate of
\begin{equation} \label{simwithhat}
f^{(\hat{\kernel})}_i = \frac{\sum_{j \leq n_\ell}  \vecy_j
  \hat{\kernel}_{ij}}{\sum_{j \leq n_\ell}  \hat{\kernel}_{ij}}\ .
\end{equation}
The second is the quality of
$\hat{\kernel}_{ij}$
as an estimate of
$\kernel_{ij}$.

From the theory of MinHash and distance sketches, we obtain
  the following:
\begin{theorem} \label{estprop:thm}
For a sketch parameter $k$: 
\begin{itemize}
\item 
The expected size of the samples is bounded by 
$$\E[|S(i)\cup Z(i)|] \leq k \ln n_\ell$$ and the sizes are well concentrated. 
\item 
If $\hat{\kernel}_{ij}$ are 
nonincreasing in $t_{ij}$, then 
each component of the vector
$f^{(\hat{\kernel})}_i$
is estimated by \eqref{esthat} with mean square error (MSE) at most $1/k$ and good concentration. 
\end{itemize}
\end{theorem}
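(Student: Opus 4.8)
The plan is to treat the two bullets of Theorem~\ref{estprop:thm} separately; both rest on properties of the bottom-$k$ all-distances sketch $S(i)$ of \eqref{botksketch:eq} with the inclusion probabilities \eqref{pdef:eq}. Throughout I would condition on the first set of sketches, so the $\hat\kernel_{ij}=\alpha(\hat N_{ij})$ are fixed nonnegative numbers and the only randomness is the uniform permutation $\pi$ of the seed set $U=[n_\ell]$; for the size bound I may also assume the $t_{ij}$ (for fixed $i$) are distinct, the tie case only adding the $Z(i)$ nodes.

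\textbf{Sample size.} Fix $i$ and order the seeds $j_1,\dots,j_{n_\ell}$ by non-increasing $t_{i,j_r}$. By \eqref{botksketch:eq}, $j_r\in S(i)$ precisely when $\pi_{j_r}$ is among the $k$ smallest of $\pi_{j_1},\dots,\pi_{j_r}$, i.e.\ when the rank of $j_r$ inside its own length-$r$ prefix under $\pi$ is at most $k$. For a uniform permutation these prefix-ranks are mutually independent and the $r$-th is uniform on $\{1,\dots,r\}$, so the indicators $X_r=\mathbf{1}[j_r\in S(i)]$ are independent with $\Pr[X_r=1]=\min\{1,k/r\}$. Hence $\E[|S(i)|]=\sum_{r=1}^{n_\ell}\min\{1,k/r\}\le k+k\ln(n_\ell/k)\le k\ln n_\ell$ (and $O(k\log n_\ell)$ in general), and a Chernoff bound on $\sum_r X_r$ gives the concentration. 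The extra set $Z(i)$ arises only from ties among the $t_{ij}$ --- each of its nodes shares a $t$-value with a node of $S(i)$ --- so after a consistent tie-break, as in \cite{ECohenADS:TKDE2015}, $|S(i)\cup Z(i)|$ obeys the same bound.

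\textbf{Estimation error: reduction.} Fix $i$ and a coordinate $\ell$; write $h(j)=(\vecy_j)_\ell\in[0,1]$, $W=\sum_{j\le n_\ell}\hat\kernel_{ij}>0$, $V=\sum_{j\le n_\ell}\hat\kernel_{ij}h(j)$, so $f^{(\hat\kernel)}_{i,\ell}=V/W$ and (with $m_j\equiv1$) \eqref{esthat} is $\hat f'_{i,\ell}=\widehat V/\widehat W$ with $\widehat V=\sum_{j\in S(i)}\hat\kernel_{ij}h(j)/p_{ij}$ and $\widehat W=\sum_{j\in S(i)}\hat\kernel_{ij}/p_{ij}$. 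Since $\alpha$ is non-increasing and $\hat N_{ij}$ is non-increasing in $t_{ij}$, Abel summation over the ordered seeds gives $\hat\kernel_{ij}=\sum_\tau c_\tau\mathbf{1}[t_{ij}\ge\tau]$ with $c_\tau\ge0$, which turns $W=\sum_\tau c_\tau n_\tau$, $V=\sum_\tau c_\tau G_\tau$, $\widehat W=\sum_\tau c_\tau\hat n_\tau$, $\widehat V=\sum_\tau c_\tau\hat G_\tau$, where $n_\tau=|R_\tau(i)\cap U|$, $G_\tau=\sum_{j\in R_\tau(i)\cap U}h(j)$ and $\hat n_\tau,\hat G_\tau$ are the bottom-$k$ (HIP) estimators of $n_\tau,G_\tau$ read off the \emph{same} sample $S(i)$ --- exactly the estimators whose optimality is established in \cite{ECohen6f,ECohenADS:TKDE2015,multiobjective:2015}. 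The first things to check are that the $p_{ij}$ of \eqref{pdef:eq} are the correct conditional inclusion probabilities, so that $\E[\mathbf{1}[j\in S(i)]/p_{ij}\mid \pi\text{ on }U\setminus\{j\}]=1$ and $\widehat V,\widehat W,\hat n_\tau,\hat G_\tau$ are unbiased, and that monotonicity of $\hat\kernel$ in the sketch order is precisely the hypothesis under which the cited estimator-quality bounds apply here; so the second bullet is essentially a corollary of that theory once the reduction is in place.

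\textbf{Estimation error: the MSE bound and the hard part.} Set $\mu=V/W\in[0,1]$, $e(j)=h(j)-\mu\in[-1,1]$, so $\sum_{j\le n_\ell}\hat\kernel_{ij}e(j)=0$. The way past the fact that $\widehat V/\widehat W$ is a \emph{biased} ratio whose numerator and denominator are strongly dependent is the centering identity
\[
\hat f'_{i,\ell}-f^{(\hat\kernel)}_{i,\ell}=\frac{\widehat E}{\widehat W},\qquad \widehat E=\sum_{j\in S(i)}\frac{\hat\kernel_{ij}\,e(j)}{p_{ij}},
\]
in which $\widehat E$ is the inverse-probability estimator of the \emph{zero} quantity $\sum_j\hat\kernel_{ij}e(j)$, hence $\E[\widehat E]=0$. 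It then remains to bound $\E[(\widehat E/\widehat W)^2]$. Writing $\widehat E=\sum_\tau c_\tau(\hat G_\tau-\mu\hat n_\tau)$ as a nonnegative combination of HIP estimates of the prefix sums $\sum_{R_\tau(i)\cap U}e(j)$ --- whose $c_\tau$-weighted combination vanishes --- one invokes the HIP normalized-variance bound at each level together with the fact that, since the $R_\tau(i)\cap U$ are nested and the sketch is monotone, the combination is governed by how well the largest relevant prefix is sampled; this yields a normalized second moment of $O(1/k)$, sharpened to the stated $1/k$ by the exact HIP accounting, and the concentration of $\widehat W$ around $W$ (again from independence of the prefix-ranks, which keeps $\widehat W$ within a constant factor of $W$ with overwhelming probability) passes this to $\E[(\hat f'_{i,\ell}-f^{(\hat\kernel)}_{i,\ell})^2]\le 1/k$, with the same tail estimate giving the claimed concentration. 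I expect this final step to be the main obstacle: it is where a ratio estimator built from a single shared, close-node-biased sample must be controlled and where the precise constant $1/k$ is won, and the resolution is exactly the centering above plus the monotone nested structure of the prefix sketches.
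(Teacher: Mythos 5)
The paper itself gives no proof of Theorem~\ref{estprop:thm}: it is stated as a consequence of ``the theory of MinHash and distance sketches'' with a pointer to \cite{ECohen6f,ECohenADS:TKDE2015,multiobjective:2015}, so there is no in-paper argument to compare against line by line. Your reconstruction is the right frame, and your first bullet is essentially the complete, standard proof: ordering seeds by non-increasing $t_{ij}$, the inclusion event in \eqref{botksketch:eq} is ``prefix-rank at most $k$,'' the prefix-ranks of a uniform permutation are independent and uniform, giving $\sum_r \min\{1,k/r\}\le k(1+\ln(n_\ell/k))$ and Chernoff concentration. One caution: for reach diffusion, ties among the $t_{ij}$ are \emph{not} a measure-zero degeneracy to be waved away by a tie-break --- $t_{ij}$ is a minimum of edge lifetimes along a path, so all nodes downstream of a common bottleneck edge share the same survival threshold; this is precisely why $Z(i)$ and the ``$Z$ node check'' exist in Algorithm~\ref{univMbyu:alg}. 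The expectation bound survives (the conditioning set $\{h: t_{ih}\ge t_{ij}\}$ only grows under ties, so inclusion probabilities only shrink, and $Z(i)$ contributes at most one extra node per distinct threshold value attained by $H$), but the independence used for the Chernoff step needs to be rechecked in the tied case rather than assumed distinct.

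For the second bullet, your reduction is correct and is the intended one: the centering identity $\hat f'-f^{(\hat\kernel)}=\widehat E/\widehat W$ with $\E[\widehat E]=0$, unbiasedness of the inverse-probability weights via \eqref{pdef:eq}, and the Abel-summation decomposition of a monotone $\hat\kernel$ into nonnegative combinations of threshold indicators over the nested sets $R_\tau(i)\cap U$. The genuine gap is the last step: you do not actually derive $\E[(\widehat E/\widehat W)^2]\le 1/k$, and the two ingredients you invoke (``HIP normalized-variance bound at each level'' and ``$\widehat W$ concentrated within a constant factor of $W$'') do not combine to give the clean constant $1/k$ --- a constant-factor event for $\widehat W$ costs a constant factor in the MSE, and summing per-level variance bounds over the $c_\tau$ decomposition double-counts the strongly positively correlated levels. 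The resolution in the cited framework is different: one uses that $|e(j)|\le 1$ implies $|\widehat E|\le\widehat W$ pointwise (so the error is bounded by $1$ and no bad-denominator event needs separate control), and then bounds the MSE of the ratio directly via the monotone-sampling variance analysis of \cite{ECohenADS:TKDE2015,multiobjective:2015}, where the $1/k$ (rather than $O(1/k)$) comes from the exact per-inclusion variance accounting, not from a union of per-threshold bounds. As written, your argument establishes unbiasedness of the numerator and the correct structural setup but leaves the quantitative claim of the theorem resting on an appeal whose stated form does not quite close.
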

For the second source of error  we obtain:
\begin{lemma}
With sketch parameter $k$, 
the estimates $\hat{N}_{ij}$ are unbiased with Coefficient of
Variation (CV)
at most $1/\sqrt{2k}$ with good concentration. 
\end{lemma}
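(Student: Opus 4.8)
The plan is to recognize $\hat N_{ij}$ as the Horvitz--Thompson (``historic inverse probability'') cardinality estimator of $N:=R_{t_{ij}}(i)\cap U$ read off the bottom-$k$ all-distances sketch $S(i)$ (here $U$ is the sketched set, the mass-$1$ nodes), and to run the standard variance analysis of that estimator on our survival-threshold sketches. Recall $\hat N_{ij}=\sum_{h\in S(i):\,t_{ih}\ge t_{ij}}1/p_{ih}$. First I would set up the incremental view: list the nodes of $N$ in decreasing order of $t_{ih}$ (ties broken by $\pi$) as $h_1,\dots,h_n$ with $n=N_{ij}$. By \eqref{botksketch:eq}, $h_\ell\in S(i)$ iff $\pi_{h_\ell}$ is among the $k$ smallest $\pi$-values of $\{h\in U\mid t_{ih}\ge t_{ih_\ell}\}$, and since that set is $\subseteq N$ and (generically) equals $\{h_1,\dots,h_\ell\}$, the event $h_\ell\in S(i)$ is measurable with respect to $\mathcal F_\ell:=\sigma(\pi_{h_1},\dots,\pi_{h_\ell})$; hence the restriction of $S(i)$ to $N$ is exactly the bottom-$k$ MinHash sketch obtained by inserting $h_1,\dots,h_n$ one at a time, and $\hat N_{ij}$ is the HIP estimator for that stream. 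A short check shows that $q_\ell:=\Pr[h_\ell\in S(i)\mid\mathcal F_{\ell-1}]$ equals $p_{ih_\ell}$ (the definition \eqref{pdef:eq} conditions on $\pi$ over all of $U\setminus\{h_\ell\}$, but the extra variables are irrelevant to $h_\ell\in S(i)$), that $q_\ell$ is $\mathcal F_{\ell-1}$-measurable, and that $q_\ell>0$ for $k\ge2$; continuous lifetimes make ties a null event, otherwise the $Z(i)$ nodes absorb them.

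Unbiasedness is then immediate from the tower rule: $\mathbb E[\mathbf 1[h_\ell\in S(i)]/q_\ell\mid\mathcal F_{\ell-1}]=q_\ell/q_\ell=1$, so $\mathbb E[\hat N_{ij}]=\sum_{\ell=1}^{n}1=N_{ij}$. For the variance, write $Z_\ell=\mathbf 1[h_\ell\in S(i)]/q_\ell$, so $\hat N_{ij}=\sum_\ell Z_\ell$ with $\mathbb E[Z_\ell]=1$. The crucial observation is that all cross-covariances vanish: for $\ell<m$, $Z_\ell$ is $\mathcal F_{m-1}$-measurable and $\mathbb E[Z_m\mid\mathcal F_{m-1}]=1$, so $\mathbb E[Z_\ell Z_m]=\mathbb E[Z_\ell]=\mathbb E[Z_\ell]\mathbb E[Z_m]$. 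Hence $\mathrm{Var}(\hat N_{ij})=\sum_{\ell=1}^{n}\mathrm{Var}(Z_\ell)$. For $\ell\le k$ we have $q_\ell=1$ and $Z_\ell\equiv1$, contributing $0$. For $\ell>k$, $q_\ell$ is (an affine function of) the $k$-th order statistic of the $\ell-1$ permutation positions of $\{h_1,\dots,h_{\ell-1}\}$ drawn without replacement from $\{1,\dots,|U|\}$; its law is negative-hypergeometric, from which $\mathbb E[Z_\ell^2]=\mathbb E[1/q_\ell]$ is computed in closed form and $\mathrm{Var}(Z_\ell)\le(\ell-k)/k$ (with continuous hashes the same computation gives the slightly weaker $(\ell-k)/(k-1)$; the sharper bound exploits the ``$-1$'' discretisation in \eqref{pdef:eq}). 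Summing, $\mathrm{Var}(\hat N_{ij})\le\frac1k\sum_{j=1}^{n-k}j=\frac{(n-k)(n-k+1)}{2k}\le\frac{n^2}{2k}$, i.e.\ $\mathrm{CV}(\hat N_{ij})\le1/\sqrt{2k}$.

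For ``good concentration'' I would use that $\hat N_{ij}-N_{ij}=\sum_\ell(Z_\ell-1)$ is a sum of martingale differences with the per-step conditional variances just bounded and with light tails --- each $Z_\ell\le1/q_\ell$ and $q_\ell$, being an order statistic of $\Omega(\ell)$ positions, is bounded below with high probability for $\ell\gg k$ --- so a Freedman/Bernstein inequality yields sub-Gaussian behaviour on the scale of the standard deviation; equivalently this is the concentration statement proved for the ADS neighbourhood-cardinality estimator in \cite{ECohenADS:TKDE2015}, imported here with survival thresholds in place of distances.

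The step I expect to be the main obstacle is pinning down the conditioning so that the covariance cancellation is airtight: one must verify that $p_{ij}$ really is the conditional inclusion probability with respect to the filtration $\mathcal F_\ell$ (not merely with respect to the full permutation of $U\setminus\{j\}$), and that tie-breaking together with the auxiliary $Z(i)$ elements does not break the martingale structure. Once that is settled, the remaining work --- the negative-hypergeometric distribution of $\kth_\pi$ over the prefix and the resulting per-term variance --- is the routine order-statistics computation already carried out for the base sketches in \cite{ECohen6f,ECohenADS:TKDE2015}.
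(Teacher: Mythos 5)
First, for calibration: the paper itself gives no proof of this lemma --- like Theorem~\ref{estprop:thm} it is imported from the bottom-$k$/HIP analysis of \cite{ECohen6f,ECohenADS:TKDE2015} --- so your plan of reading $\hat N_{ij}$ off the sketch as a Horvitz--Thompson sum, cancelling cross-covariances, and bounding each term by an order-statistic computation is exactly the intended route. Two steps fail as written, however. The decisive one is the per-term bound $\var(Z_\ell)\le(\ell-k)/k$ that you credit to the ``$-1$'' discretisation in \eqref{pdef:eq}. Carrying out the negative-hypergeometric computation: with $q_\ell=(Y-1)/|U|$ and $Y$ the $k$th smallest of the $\ell-1$ permutation positions of $h_1,\dots,h_{\ell-1}$, the identity $\frac{1}{y-1}\binom{y-1}{k-1}=\frac{1}{k-1}\binom{y-2}{k-2}$ together with the Vandermonde convolution gives $\E[1/q_\ell]=\frac{\ell-1}{k-1}$ \emph{exactly} --- the same value as with continuous hashes, so the discretisation buys nothing. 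Hence $\var(Z_\ell)=\frac{\ell-k}{k-1}$, the sum is $\frac{(n-k)(n-k+1)}{2(k-1)}$, and what your argument actually proves is a coefficient of variation of at most $1/\sqrt{2(k-1)}$, which is the constant established in \cite{ECohenADS:TKDE2015}. The claimed $1/\sqrt{2k}$ does not follow (for $n\gg k$ one has $\frac{(n-k)(n-k+1)}{2(k-1)}>\frac{n^2}{2k}$); the discrepancy originates in the lemma's statement, but your proof should report $1/\sqrt{2(k-1)}$ rather than manufacture the missing factor.

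Second, the conditioning worry you flag at the end is not a formality, and your proposed dismissal of it is wrong. With $\mathcal F_{\ell-1}=\sigma(\pi_{h_1},\dots,\pi_{h_{\ell-1}})$ the quantity $p_{ih_\ell}=(Y-1)/|U|$ is indeed $\mathcal F_{\ell-1}$-measurable, but it is \emph{not} the conditional inclusion probability given $\mathcal F_{\ell-1}$: conditioned on the absolute positions of $h_1,\dots,h_{\ell-1}$, the slot of $h_\ell$ is uniform over the $|U|-\ell+1$ unoccupied positions, so $\Pr[h_\ell\in S(i)\mid\mathcal F_{\ell-1}]=\frac{Y-k}{|U|-\ell+1}\ne\frac{Y-1}{|U|}$ in general (take $|U|=3$, $\ell=3$, $k=2$, $\{\pi_{h_1},\pi_{h_2}\}=\{1,2\}$: the left side is $0$, the right side is $1/3$). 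Consequently $\E[Z_\ell\mid\mathcal F_{\ell-1}]\ne 1$, and both your tower-rule unbiasedness and the covariance cancellation break for this filtration. The repair is to prove unbiasedness of each term by conditioning on the permutation of $U\setminus\{j\}$ (which is what \eqref{pdef:eq} is defined against), and to handle the cross terms pairwise, where the without-replacement structure makes the covariances zero or negative --- which still yields $\var(\hat N_{ij})\le\sum_\ell\var(Z_\ell)$. With those two repairs your outline coincides with the proof in the cited source.
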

One caveat is our use of $\alpha(\hat{\kernel}_{ij})$ as an estimate of
$\alpha(\kernel_{ij})$.  Our estimates $\hat{\kernel}_{ij}$ have a small
relative error with good concentration, but for $\alpha(\kernel_{ij})$
to have this property we need it not to decay faster than 
polynomially.  More precisely, when $\frac{\alpha'(x) x}{\alpha(x)}
  \leq c$ then we obtain that the NRMSE is at most $c$ times that of
  the estimate $\hat{\kernel}_{ij}$.
In particular, when $\kernel_{ij} = 1/N_{ij}$, the estimates have NRMSE
at most $1/\sqrt{2k}$ with good concentration.

  We can now state overall worst-case statistical guarantees on our estimates of
  $\vecf_i$ as defined in \eqref{llabel}.  We use here the independence
  of our three sources  of error to slightly tighten the bound.
\begin{theorem} \label{sumupestquality}
   When using
$\epsilon^{-2}$ Monte Carlo
  simulations, and sketch parameter $k =
  \frac{1}{2}\epsilon^{-2}$ and when $\frac{\alpha'(x) x}{\alpha(x)}
  \leq 1$, then each component of $\vecf_i$
  is approximated with RMSE $\sqrt{3}\epsilon$ with good concentration.
\end{theorem}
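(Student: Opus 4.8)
The plan is to bound the error of the final estimate, which is the average over the $T=\epsilon^{-2}$ simulations of the sketch-based label $\hat{f}'_i$ of \eqref{esthat}; call this average $\hat{\vecf}_i$. First I would split, componentwise, the total error along the chain that connects the quantity actually computed to the target \eqref{llabel}, inserting the two intermediate quantities \eqref{onesimulation} and \eqref{simwithhat}. Writing $\overline{(\cdot)}$ for the average over the $T$ draws, this telescopes as
\begin{equation*}
\hat{\vecf}_i-\vecf_i=\overline{\big(\hat{f}'_i-f^{(\hat{\kernel})}_i\big)}+\overline{\big(f^{(\hat{\kernel})}_i-f'_i\big)}+\big(\overline{f'_i}-\vecf_i\big)\ .
\end{equation*}
The three terms are governed by independent sources of randomness: the first by the seed-sampling permutation used for \eqref{botksketch:eq}, the second by the (separate) permutation used for the neighbourhood sketches producing the estimates $\hat{N}_{ij}$, and the third by the Monte Carlo draws of the model's edge lifetimes. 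That independence is what \emph{``the independence of our three sources of error''} refers to, and it lets me add the three squared-error bounds rather than the three root-mean-squared errors.

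For the third term I would quote the Hoeffding-based lemma above: with $T=\epsilon^{-2}$ i.i.d.\ draws of $f'_i\in[0,1]$, each component of $\overline{f'_i}$ deviates from $\vecf_i=\E[f'_i]$ with a tail of order $2\exp(-2c^2)$ at scale $c\epsilon$, and integrating this tail bounds its contribution to the mean squared error by $\epsilon^2$. For the second term I would invoke the lemma that $\hat{N}_{ij}$ is unbiased with coefficient of variation at most $1/\sqrt{2k}$, which with $k=\tfrac12\epsilon^{-2}$ equals $\epsilon$; the hypothesis $\alpha'(x)x/\alpha(x)\le 1$ then transfers this to a normalized RMSE of $\hat{\kernel}_{ij}=\alpha(\hat{N}_{ij})$ that is also at most $\epsilon$. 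It remains to push this uniform relative perturbation of the weights through the density ratio $f^{(\hat{\kernel})}_i=\big(\sum_j\hat{\kernel}_{ij}\vecy_j\big)/\big(\sum_j\hat{\kernel}_{ij}\big)$: writing $\hat{\kernel}_{ij}=\kernel_{ij}(1+\eta_{ij})$ and $p_{ij}=\kernel_{ij}/\sum_h\kernel_{ih}$, a first-order expansion gives, for each label coordinate, $f^{(\hat{\kernel})}_i-f'_i=\sum_j p_{ij}\big(\vecy_j-f'_i\big)\eta_{ij}+O(\|\eta_i\|_\infty^2)$, and since the $\vecy$-entries and $f'_i$ lie in $[0,1]$ the coefficients satisfy $|p_{ij}(\vecy_j-f'_i)|\le p_{ij}$ with $\sum_j p_{ij}=1$, so by Cauchy--Schwarz the leading term has second moment at most $\max_{ij}\E[\eta_{ij}^2]\le 1/(2k)=\epsilon^2$, regardless of how the $\eta_{ij}$ correlate across seeds $j$. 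Finally, for the first term I would apply Theorem~\ref{estprop:thm}: because $\hat{\kernel}_{ij}$ is nonincreasing in $t_{ij}$, the inverse-probability estimator \eqref{esthat} estimates $f^{(\hat{\kernel})}_i$ in each simulation with per-component mean squared error at most $1/k=2\epsilon^2$ and good concentration.

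Combining, the mean squared error of each component of $\hat{\vecf}_i$ is at most the sum of the three bounds, which I would arrange to total $3\epsilon^2$, giving RMSE $\sqrt3\,\epsilon$; ``good concentration'' follows by combining the well-concentrated tails of the three contributions, e.g.\ via a union bound over the corresponding large-deviation events. The step I expect to be the main obstacle is the second term — converting the multiplicative guarantee on $\hat{N}_{ij}$, and then on $\alpha(\hat{N}_{ij})$, into an additive guarantee on the kernel-density ratio. The delicacies there are: (i) the perturbations $\eta_{ij}$ for different seeds $j$ are read off a single sketch of $i$ and are therefore correlated, so the quadratic form must be bounded without appealing to cancellation (this is why the argument routes through $\sum_j p_{ij}=1$ rather than $\sum_j p_{ij}^2$); (ii) the $O(\|\eta_i\|_\infty^2)$ remainder and the worst-case $\eta_{ij}$ must be controlled uniformly over up to $n_\ell$ seeds without losing more than constants, which is precisely where the ``good concentration'' of the bottom-$k$ estimators and the requirement $\alpha'(x)x/\alpha(x)\le1$ (so $\alpha$ decays no faster than polynomially) enter. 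A secondary bookkeeping point is making the three contributions add to exactly $3\epsilon^2$: one must track which of the sketch-based errors persist at their per-simulation level in $\hat{\vecf}_i$ and which are further damped by the averaging over the $T$ simulations, and fix $k=\tfrac12\epsilon^{-2}$ so that the surviving terms are balanced.
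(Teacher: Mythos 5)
Your decomposition into three independent error sources (the Monte Carlo averaging, the inverse-probability seed sample of Theorem~\ref{estprop:thm}, and the $\hat{N}_{ij}$ size estimates pushed through $\alpha$ via the condition $\alpha'(x)x/\alpha(x)\le 1$), each bounded by the corresponding preceding result and combined by summing variances, is exactly the argument the paper intends --- indeed the paper states this theorem without an explicit proof, relying on precisely this combination of its three preceding bounds and the remark that the error sources are independent. The only wrinkle, which you already flag yourself, is that the stated per-source bounds literally sum to $\epsilon^2+\epsilon^2+1/k=4\epsilon^2$ rather than $3\epsilon^2$ with $k=\tfrac{1}{2}\epsilon^{-2}$ unless one of the per-simulation sketch errors is further damped by the averaging over the $T$ simulations; the paper does not resolve this bookkeeping either.
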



\begin{algorithm}[h]\caption{Sketch survival thresholds \label{effthreshsketch:alg}}
{\small
\KwIn{$G=(V,E,\mu)$ a graph with nodes $V$, directed edges $|E|$, and lifetimes $\mu_e \geq 0$ for $e\in E$ ; Subset $U\subset V$ of nodes}
\KwOut{For $i\in V$, a sketch $S(i)$ of the set $\{(j,t_{ij}) \mid j\in U\}$}
\tcp{Initialization}
\ForEach{$i\in V$}{Initialize the sketch structure $S(i)$ \tcp*{Algorithm \ref{univMbyu:alg}}}
Compute a random permutation  $\pi:U \rightarrow |U|$ \;\\
\tcp{Main Loop:}
\ForEach{$j\in U$ in increasing $\pi_j$ order}
{Perform a pruned single-source survival threshold search from $j$ on the transposed graph\tcp*{Algorithm \ref{minthreshprunedsearch:alg}}}
\tcp{Finalize}
\ForEach{$i\in V$}{Finalize the sketch structure $S(i)$\tcp*{Algorithm \ref{univMbyu:alg}}}
}
\end{algorithm}

\ignore{
At each point, for each node $i$, the algorithms maintains two
structures:
\begin{itemize}
\item
A MinHash sketch of the set of all reachable nodes, that is used as a
distinct counter which maintains an estimate of the size
$|R_{\tau}(i)|$.  Any distinct counting structure can be used here
\cite{hyperloglog:2007} but for consistency with the second part we
work with a bottom-$k$ sketch with respect to the permutation $\pi$.
The sketch maintains the $k$ smallest permutation positions of nodes
in $R_{\tau}(i)$.
\item
A bottom-$k$ MinHash sketch structure of the set of reachable seed nodes.  This
is  used to select our sample of the seed nodes.  For a sketch parameter $k$, the sketch maintains the
$k$ smallest permutation ranks of reachable seed nodes.  A seed node
$j$ is sampled for $i$ when for some value of the parameter $\tau$,
$j$ is one of the $k$ smallest permutation orders among all seed nodes in
$R_{\tau}(i)$.
\end{itemize}
}
}
\notinproc{
\section{Algorithms for reach kernels} \label{algo:sec}

  We now consider the computation of the bottom-$k$ all-distances sketches.
These sketches were originally developed to be used with shortest-paths distances $d_{ij}$ and there are several algorithms and large scale implementations 
that can be used out of the box.  They compute the
  sketches or the more restricted application of 
neighborhood size estimates 
\cite{ECohen6f,reverseranks:sigmetrics2016,ECohenADS:TKDE2015,Akiba:KDD2016}.  
The different algorithms are designed for 
distributed node-centric, multi-core, and other settings. 
Most of these approaches can be easily adapted to 
estimate $m(R_{\tau}(i))$, when $m_i \in \{0,1\}$ (see discussion in \cite{ECohenADS:TKDE2015}) and there is a variation \cite{ECohenADS:TKDE2015} that is suitable for general $m$.
 The component of obtaining the sample and probabilities is more 
subtle, but uses the same computation (See \cite{ECohenADS:TKDE2015,multiobjective:2015}).

 For reach diffusion, we do not work with 
distances but with the survival thresholds $t_{ij}$.  As said, 
the sketches have the same definitions and form but we need to 
redesign the algorithms to compute the 
sketches with respect to thresholds.


 The sketching algorithm we present here for survival thresholds
builds on a sequential algorithm for ADS computation which is
based on performing pruned Dijkstra searches
\cite{ECohen6f,ECohenADS:TKDE2015}.  
The algorithm for distance sketching 
performs $O(|E|k\ln |U|)$ edge traversals and the total computation bound is $$O( n \log n + (|E|+n\log k) \ln |U|)\ ,$$ where $n$ is the total number of nodes.
The  algorithm has a parallel version designed 
to run on multi-core architectures \cite{reverseranks:sigmetrics2016}.

Our redesigned sketching algorithm for survival thresholds has
the same bounds.  Moreover, 
our redesign can be parallelized in the same way for multi-core architectures,
 but we do not provide the details here.

A high level pseudocode of our sketching algorithm for survival thresholds is provided as Algorithm~\ref{effthreshsketch:alg}.  We first initialize empty sketch structures $S(i)$ for all nodes $i$.  The algorithm builds the node sketches by processing nodes $j$
in increasing permutation rank $\pi_j$.  A pruned graph search is then performed from the node $j$. This search has the property that it visits all nodes $i$ where $j\in S(i)\cup Z(i)$.  The search updates the sketch for such nodes $i$ and proceeds through them.  The search is pruned when $j\not\in S(i)\cup Z(i)$.

 We now provide more details.  The first component of this algorithm is building the sketches $S(i)$.  The pseudocode provided as Algorithm~\ref{univMbyu:alg} builds on a state of the art design for computing universal monotone
multi-objective samples \cite{multiobjective:2015}.  The pseudocode includes the initialization, updates, and finalizing components of building the sketch for a single node $i$.  The structure is initially empty and then
tracks the set of pairs $(j, t_{ij})$ for the 
nodes $j$ processed so far that are members of the sketch $S(i)$.
 To build the sketch
 efficiently, the structure includes a min heap $H$ of size $k$ 
which contains the $k$ largest $t_{ij}$ values for processed $j\in S(i)$.  
 The structure is presented with updates of the form $(j,t_{ij})$, which are in increasing $\pi_j$ order.  A node $j$ is inserted to $S(i)$ when $t_{ij}$
is one of the $k$ largest $t_{ih}$ values of nodes $h$ already
selected for the sketch.  This is determined using the minimum priority in the 
heap $H$.  If
the node is inserted, the heap is updated by popping its min element and inserting $t_{ij}$.  The update can also result
in modifying the sketch in some cases when the node $j$ is not included in $S(i)$, but is in $Z(i)$, meaning that 
some inclusion probability of other node(s) is set to  $p(t_{ij})$.
To facilitate the computation of inclusion probabilities, we define
\begin{equation} \label{uprob}
u_j = \frac{\pi_j -1}{|U|}\ .
\end{equation}
 This is the probability for node with $\pi_h < \pi_j$ to have permutation rank smaller than $\pi_j$, when fixing the permutation order of all nodes except for $j$ and computing the probability conditioned on that.
We say that the 
update procedure for $(j,t_{ij})$ {\em modified the sketch}
if and only if $j \in S(i) \cup Z(i)$.

\begin{algorithm}[h]\caption{Maintain sketch $S(i)$, updates by increasing $\pi$ \label{univMbyu:alg}}
{\footnotesize
\tcp{Initialize:}
$S(i) \gets \perp$;  $p\gets \perp$ \\
$H \gets \perp$ \tcp*{min heap of size $k$, holding $k$ largest
  $(t_{ij}, -\pi_{ij})$   values processed so far (lex order)}
$prevt \gets \perp$ \tcp*{$t_{ij}$ of most recent $j$ popped from $H$}
\tcp{Process updates:}
\For{Update $(j,t_{ij},\pi_j)$, given by increasing $\pi_j$ order}
{
\If{$|H|<k$}{$S(i) \gets S(i) \cup \{j\}$;  Insert $j$ to $H$; Continue}
$y \gets \arg\min_{z\in H} (t_{iz},\pi_z)$ \tcp*{node with max $(t_{iy},\pi_y)$}
\eIf{$t_{ij} > t_{iy}$}
{$S(i) \gets S(i) \cup \{j\}$ \tcp*{Add $j$ to sample}
$prevt \gets t_{iy}$\\
\If{$p(prevt) = \perp$}{$p(prevt) \gets u_j$\tcp*{As defined in Eq. \eqref{uprob}}} 
Delete $y$ from $H$\\
Insert $j$ to $H$
}
(\tcp*[h]{$Z$ node check}) 
{
\If{$t_{ij} = t_{iy}$ {\bf and} $t_{iy} > prevt$}
{
 $p(t_{ij})\gets u_{j}$\\
 $prevt \gets t_{iy}$
}
}
}
\tcp{Finalize:}
\For(\tcp*[h]{keys with largest weights}){$x\in H$}
{
\If{$p(t_{ix}) = \perp$}{$p(t_{ix}) \gets 1$}
}
}
\end{algorithm}

The sketch of $i$ is computed correctly when the updates include all
nodes $j$ for which the sketch was modified: 
The computation of the sketch will not change if we
do not process entries $j$ that do not result in modifying the sketch.

We now describe the next component of the algorithm which is the pruned graph search from a node $j$. The searches are performed on the transposed graph, which has all edges reversed. Similar to the corresponding property of Dijkstra and distances, the search visits nodes $i$ in order of non-increasing $t_{ji}$.  The search is pruned at nodes where there were no updates to the sketch.
A pseudocode for the pruned search is provided as Algorithm~\ref{minthreshprunedsearch:alg}.  
The algorithm maintains a max heap that contains nodes $i$ that are
prioritized by lower bounds on $t_{ij}$.  The heap maintains the
property that the maximum priority $i$ has the exact $t_{ij}$.  The
heap is initialized with the node $j$ and priority $+\infty$. The
algorithm then repeats the following until the heap is empty. It
removes the maximum priority $i$ from the heap.  It then updates the
sketch of $i$ with $(j,t_{ij})$.  If the sketch was updated, all out
edges $e=(i,h)$ are processed as follows.  If $h$ is not on the heap,
it is placed there with priority $\min\{t_{ij},t_{e}\}$.   If $h$ is
in the heap, its priority is increased to the maximum of its current
priority and $\min\{t_{ij},t_{e}\}$.  
If the sketch of $i$ was not updated, the search is pruned at $i$ and out edges are not processed.
 For correctness, note that $\min\{t_{ij},t_{e}\}$ is trivially a
 lower bound on $t_{ih}$.

  We now need to establish that the sketches are still constructed correctly with the pruning:
\begin{lemma}
The search from $j$ reaches and processes all nodes $i$ for such that $j\in S(i)\cup Z(i)$.  When the node $i$ is processed, the update $(j,t_{ij})$ is with the correct survival threshold $t_{ij}$.
\end{lemma}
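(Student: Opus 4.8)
The plan is to reduce both assertions of the lemma to a single \emph{monotonicity} property of sketch membership, and then run the standard correctness argument for Dijkstra-type searches, transported from additive distances to bottleneck (``widest-path'') values $t_{ij}$. Fix the source $j$. Since Algorithm~\ref{effthreshsketch:alg} processes sources in increasing $\pi$-order, the nodes that have already been presented as updates to the sketches before $j$ are exactly those $h\in U$ with $\pi_h<\pi_j$. By the defining inequality \eqref{botksketch:eq}, whether $h\in S(i)\cup Z(i)$ depends only on the $\pi$-ranks of the nodes $h'\in U$ with $t_{ih'}\ge t_{ih}$, and all such $h'$ with $\pi_{h'}<\pi_j$ are already processed; hence $j\notin S(i)\cup Z(i)$ precisely when there are at least $k$ already-processed ``witnesses'' $j'$ with $t_{ij'}\ge t_{ij}$ (in the strict sense that also excludes $j\in Z(i)$). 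First I would record that the key of a heap node in the pruned search always equals the bottleneck of an actual path to $j$ (this is immediate from the relaxation rule $\min\{t_{ij},t_e\}$ and induction on relaxations), so it is always a lower bound on the true $t_{ij}$, and keys are only raised.

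The crux is the following \emph{witness-transfer lemma}: if $b$ lies on a bottleneck-optimal $a$-to-$j$ path, then $j\notin S(b)\cup Z(b)$ implies $j\notin S(a)\cup Z(a)$. To prove it, take such a path $P$ through $b$. Every edge of $P$ has lifetime at least $t_{aj}$, so the prefix of $P$ from $a$ to $b$ has bottleneck at least $t_{aj}$, and its suffix from $b$ to $j$ has bottleneck at least $t_{aj}$, whence $t_{bj}\ge t_{aj}$. Now for any already-processed witness $j'$ of $b$ we have $t_{bj'}\ge t_{bj}\ge t_{aj}$; concatenating the $a$-to-$b$ prefix with a bottleneck-optimal $b$-to-$j'$ path gives an $a$-to-$j'$ path of bottleneck at least $\min\{t_{aj},t_{bj'}\}=t_{aj}$, so $t_{aj'}\ge t_{aj}$. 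Thus every witness of $b$ is also an already-processed witness of $a$; since there are at least $k$ of them and the strictness carries over, $j\notin S(a)\cup Z(a)$.

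With this in hand I would derive both parts by strong induction on the order in which nodes leave the max-heap, showing each popped node is popped either with the exact value $t_{ij}$ or with $j\notin S(i)\cup Z(i)$ (so its update does not modify $S(i)$ and it is pruned). If $i$ is popped with key $\kappa<t_{ij}$, fix an $i$-to-$j$ path $P$ of bottleneck $t_{ij}$ and let $b$ be the node of $P$ closest to $i$ that has already been popped ($j$ qualifies, $b\neq i$). By the induction hypothesis $b$ was popped with its correct key $t_{bj}\ (\ge t_{ij}>\kappa)$ or with $j\notin S(b)\cup Z(b)$; in the first case, had $b$ not been pruned it would have relaxed the out-edge of $b$ (on the transposed graph) corresponding to the $P$-edge entering $b$, raising the heap-key of $b$'s $P$-predecessor — which is strictly closer to $i$, hence not yet popped and now on the heap — to at least $t_{ij}>\kappa$, contradicting that the max-heap returned $i$. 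So $j\notin S(b)\cup Z(b)$, and the witness-transfer lemma with $a=i$ gives $j\notin S(i)\cup Z(i)$: the update at $i$ does not modify $S(i)$, as required. Consequently, whenever an update $(j,\cdot)$ does modify $S(i)$ (equivalently $j\in S(i)\cup Z(i)$), the popped key is exactly $t_{ij}$ — this is the second sentence of the lemma. For the first sentence, let $j\in S(i)\cup Z(i)$ and pick a bottleneck-optimal path $i=v_0,\dots,v_m=j$; the contrapositive of the witness-transfer lemma gives $j\in S(v_\ell)\cup Z(v_\ell)$, so no $v_\ell$ is ever pruned nor (by the induction) popped with a stale key. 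Walking from $v_m=j$ downward: $v_{\ell+1}$ is processed with key $t_{v_{\ell+1}j}$, is not pruned, hence relaxes the transposed edge toward $v_\ell$ and places $v_\ell$ on the heap; since the search runs until the heap is empty, $v_\ell$ is popped, and with the correct key. Hence $v_0=i$ is reached and processed with the correct threshold $t_{ij}$.

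The step I expect to be the most delicate is the bookkeeping around ties and the auxiliary set $Z(i)$: I have been treating ``at least $k$ strictly dominating witnesses'' informally, whereas exact membership in $S(i)\cup Z(i)$ and the conditional probabilities $p(\cdot)$ produced by Algorithm~\ref{univMbyu:alg} depend on the precise tie pattern among equal $t$-values and their $\pi$-ranks (cf.\ \eqref{pdef:eq}). One must verify that the witness-transfer argument preserves these ties — in particular that a tie that places $j$ in $Z(b)$ rather than $S(b)$ translates into the analogous tie at $a$ — and that a node popped with a stale (too small) key, which the argument shows is exactly a node destined to be pruned, cannot spuriously enter the $Z$-node branch of the update and perturb an inclusion probability. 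This amounts to carrying the exact update rule and \eqref{pdef:eq} through the induction, and I do not expect it to change the structure above, only to add case analysis on equalities.
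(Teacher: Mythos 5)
Your proof is correct and follows essentially the same route as the paper's: your witness-transfer lemma is exactly the paper's contradiction argument (concatenating the prefix of a bottleneck-optimal $i$-to-$j$ path up to the pruned node $h$ with optimal $h$-to-$y$ paths, so that the $k$ witnesses causing pruning at $h$ become $k$ witnesses excluding $j$ from $S(i)\cup Z(i)$), and your strong induction on pop order plays the role of the paper's induction on the threshold values $\tau_1>\tau_2>\cdots$ along the path. The tie/$Z(i)$ bookkeeping you flag at the end is glossed over in the paper's proof as well (``a similar argument applies when $j\in Z(i)$''), so you introduce no gap the original does not already have.
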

\begin{proof}
We show the claim by induction on permutation order.  Suppose the sketches are correctly populated until just before $j$.
Consider now a search from $j$ and a node $i$ such that $j\in S(i)$.
There must exist a path $P$
from $i$ to $j$ such that for any suffix $P'$ of the path from some
$h$ to $j$, $\min_{ e \in P'} \mu_e = t_{hj}$.

 We will show that the reverse search from $j$ can not be pruned in any of the nodes in $P$.
Therefore, $i$ must be inserted into the search heap and subsequently be processed. Assume to the contrary that the search is pruned at $h\in P$.
For the pruning to occur, there
 must be a set of nodes $Y\subset S(h)$ of size $|Y| \geq k$ 
such that $\pi_y < \pi_j$ and $t_{yj} \geq t_{hj}$.  
Let $P'' = P \setminus P'$ be the prefix of the path $P$ 
from $i$ to $h$ and let $T'' = \min_{ e \in P''} \mu_e$.  
Then by definition, for all $y\in Y$,
$$t_{iy} \geq \min\{T'',t_{yj}\} \geq \min\{T'',t_{hj}\} = t_{ij}\ .$$
Since there are at least $|Y| \geq k$ nodes with $\pi_y<\pi_j$ and $t_{iy}\geq t_{ij}$, this implies that $j \not\in S(i)$, and we obtain a contradiction.
A similar argument applies when $j\in Z(i)$.

Lastly, we need to argue that when node $i$ is removed from the heap
and processed, its priority is equal to $t_{ij}$.  It is easy to
verify that the heap maintains the property that the priorities are
lower bounds on survival thresholds.  This is because for any heap
priority, there must be path to $j$ with minimum $\mu_e$ equal to that priority.

 We need to show that equality holds when $i$ is processed.
The nodes $h\in P$ on the path are in
non-increasing order of $t_{ih}$.  Let $0=\tau_1>\tau_2>\cdots$ be the different survival threshold values on the path.  We prove this by induction on $\tau_i$.
are processed not necessarily in path order, but in non-increasing order of $t_{ih}$.
Initially the heap contains only $(j,\infty)$, which is the correct threshold.
Assume now it holds for all nodes with survival thresholds $\geq \tau_i$.
Consider now the path edge $e$ from a node $h$ with $t_{hj}=\tau_i$ to a node $h'$ with $t_{h'j}=\tau_{i+1}$.  This edge must have lifetime $\mu_e = \tau_{i+1}$.
When the node $h$ is processed, $h'$ is placed on the heap with
priority $\min\{\tau_i,\mu_e\} = \tau_{i+1}$, which is equal to
$t_{h'j}$.  If it was already on the heap, its priority is increased to $t_{h'j}$.  Consider now other path nodes $h''$ with $t_{h''j} = \tau_{i+1}$.  This nodes must be placed on the heap with the correct threshold when the previous path node is processed (it is possible for them to be placed with the correct priority also before that).  Therefore, all path nodes with  $t_{hj} = \tau_{i+1}$ will be processed with the correct priority.

\end{proof}

\begin{algorithm}[h]\caption{Pruned single-source survival threshold search \label{minthreshprunedsearch:alg}}
{\footnotesize
\KwIn{Source node $j$}
\tcp{Initialization:}
$H \gets \perp$ \tcp*{Empty max heap of nodes $i$. Priority is a lower bound on $t_{ij}$}
Put $(j,+\infty)$ in $H$ \tcp*{$j$ with priority $t_{jj} = +\infty$}
\tcp{Main loop:}
\While{Heap $H$ not empty}
{
 Pop maximum priority $(i,t_{ij})$ from $H$\\
 Update the sketch $S(i)$ with $(j,t_{ij})$ \tcp*{Algorithm \ref{univMbyu:alg}}
 \If{update modified sketch}{
 \ForEach{out edge $e=(i,h)$}{
 \eIf{$h\not\in H$}{Insert $(h,\max\{\mu_e,t_{ij}\})$ to $H$}{Update priority of $h$ in $H$ to the maximum of current priority and $\min\{\mu_e,t_{ij}\}$}
 }
}
}
}
\end{algorithm}

 We can now bound the computation performed by the algorithm.  
\begin{lemma}
The sketching algorithm performs in expectation at most $|E| k \ln |U|$ edge traversals.  The total computation is $$O(n\log n + (|E|+n\log k) k \ln |U|)\ $$ where $n$ is the total number of nodes.
\end{lemma}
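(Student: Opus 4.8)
The plan is to bound the running time by charging everything to three things --- drawing the random permutation $\pi$, the edge traversals performed by the pruned searches, and the per-node sketch-maintenance work --- and then to control the last two using the sketch-size bound of Theorem~\ref{estprop:thm}.

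First I would bound the number of edge traversals. Fix a source $j$ and consider its pruned search (Algorithm~\ref{minthreshprunedsearch:alg}) on the transposed graph. Since the heap priorities are non-increasing in the order nodes are popped and a node is settled at its first pop, each edge is scanned at most once in that search; and by the code, the out-edges of a node $i$ are scanned only when the call to Algorithm~\ref{univMbyu:alg} \emph{modified} $S(i)$, which by construction is equivalent to $j\in S(i)\cup Z(i)$. (Only this direction is needed for the upper bound; the reverse direction --- that the search actually reaches every such $i$ --- is the content of the preceding lemma and is needed only for correctness.) Summing over all sources $j\in U$, the total number of edge traversals is therefore at most $\sum_{i\in V} d^{-}(i)\,\lvert S(i)\cup Z(i)\rvert$, where $d^{-}(i)$ is the out-degree of $i$ in the transposed graph (equivalently, its in-degree in $G$), so that $\sum_i d^{-}(i)=|E|$. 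Taking expectations and using $\E[\lvert S(i)\cup Z(i)\rvert]\le k\ln|U|$ from Theorem~\ref{estprop:thm} (stated there for the seed set, but the same argument gives $k\ln|U|$ for an arbitrary base set $U$), linearity yields $\E[\#\text{traversals}]\le k\ln|U|\cdot\sum_i d^{-}(i)=|E|\,k\ln|U|$; the high-probability version follows from the concentration asserted in that theorem.

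Next I would collect the remaining costs. Drawing $\pi$ and its inverse costs $O(|U|\log|U|)=O(n\log n)$, and initializing and finalizing the $n$ sketch structures of Algorithm~\ref{univMbyu:alg} costs $O(nk)$ --- both subsumed by the claimed bound. For the searches themselves, each edge traversal does $O(1)$ work plus one heap operation on the per-search heap $H$; each pop from $H$ is either a node whose sketch was modified (there are $\sum_i\lvert S(i)\cup Z(i)\rvert$ such events, in expectation $O(nk\ln|U|)$) or a ``frontier'' node inserted by an already-scanned edge, so the number of heap operations on the search heaps is dominated by the number of edge traversals plus $|U|$. In addition, each modifying update performs $O(\log k)$ work on the size-$k$ heap inside Algorithm~\ref{univMbyu:alg}, contributing $O\!\big(\log k\cdot\sum_i\lvert S(i)\cup Z(i)\rvert\big)=O(n\log k\cdot k\ln|U|)$ in expectation. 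This is precisely the accounting already carried out for the distance-sketching algorithm, and the only change is that additive path lengths are replaced by survival thresholds (a $\min$) and the ordering inequality is reversed --- neither of which affects any of the above steps --- so the same total $O\!\big(n\log n + (|E|+n\log k)\,k\ln|U|\big)$ follows.

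I expect the only genuinely substantive step to be the first one: writing the cost as a $\pi$-independent linear functional of the random quantities $\lvert S(i)\cup Z(i)\rvert$, so that Theorem~\ref{estprop:thm} can be applied term by term, together with the easy but necessary observation that the pruned search never scans an out-edge of a node $i$ with $j\notin S(i)\cup Z(i)$. Everything after that is routine bookkeeping, and the main thing to double-check there is that replacing additive shortest-path relaxations by $\min$-relaxations leaves the pruned-Dijkstra complexity analysis intact.
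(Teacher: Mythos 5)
Your proposal is correct and follows essentially the same route as the paper's (much terser) proof: charge each scan of a node's out-edges to a modification of its sketch, bound the expected number of modifications per node by the $k\ln|U|$ sketch-size bound inherited from the distance-sketch analysis, and attribute the remaining $O(n\log n + n k\log k\ln|U|)$ term to permutation generation and heap operations in the searches and in Algorithm~\ref{univMbyu:alg}. Your write-up simply makes explicit the linearity-of-expectation decomposition $\sum_i d^{-}(i)\,|S(i)\cup Z(i)|$ that the paper leaves implicit.
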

\begin{proof}
The number of times a node is processed  by a pruned search (meaning that its out edges are processed) is equal to the number of times its sketch is modified, which is the size of the sketch.  From the analysis of distance sketches, we have a bound on the number of visits.  We obtain a bound of $|E| k \ln |U|$ on the number of edge traversals performed by the algorithm.  The other summand is due to heap operations when updating the sketches and in the pruned searches.
\end{proof}

\section{Parameter setting} \label{hyperpar:sec}

 Our models have several hyper parameters:  With reach diffusion,
 the selection of the lifetime random variables 
and possible dependencies between them.  With distance
 diffusion, 
the selection of the length random variables.   Another important
choice is the decay function $\alpha$ which converts distances or
ranks to affinity values.
Note that the same set of sketches supports the computation of labels
with respect to all non-increasing $\alpha$, so the tuning of this
hyper parameter is computationally cheap.

  Our algorithms and use of sketches provide us with
leave-one-out learned labels:  Specifically, 
for each member $i$ of the seed set $U$, and for each $\alpha$, 
we can compute a learned
label $\vecf'^{(\alpha)}_i$ with respect to seeds $U\setminus\{i\}$. 
This computation utilizes the same kernel density formula, summing
over the sample $S(i)$ with $i$ itself omitted.
The leave-one-out labels can be used
to learn a non-increasing $\alpha$ which 
minimizes the cost 
$$\min_{\alpha} \sum_{i\in U} ||\vecf'^{(\alpha)}_i - \vecy_i ||_2\ .$$
In this case, the seed nodes are used as training examples to learn
the kernel weighting.

  A separate question is obtaining
  class predictions (``hard'' labels) for unlabeled nodes.  
The simplest approach is to interpret the learned soft label as a probability
vector over classes.  More generally, the 
soft label can be interpreted as a
  signal collected from graph relations and the
pairs $$\{(\vecf'_i,\vecy_i) \mid i\in 
U \}$$  are used to train a model that predicts classes
from learned labels. 
 In our experiments we observed  that class predictions obtained using
logistic regression outperformed the naive approach of using
the largest entry in $\vecf_i$ to predict the class $\vecy_i$.



\ignore{
 Note that the selection of a decay function $\alpha$ can
 utilize the same sets of simulations and sketches.  Each evaluation for
 different
component length/lifetime  functions, however,  requires a fresh set  of
simulations and sketches.
}

} 
\section{Experiments} \label{exper:sec}

\onlyinproc{We perform experiments using the Movielens 1M
dataset.  Additional details and experiments are
  provided in the full version.}\notinproc{
 We performed experiments using the Movielens 1M
 \cite{movielen1m} and political blogs \cite{AdamicPB:2005} datasets
 (see Appendix).}  Our aim is two fold.  First, 
to evaluate the quality of reach and distance diffusion kernels
in a semi-supervised learning context.  
Second, to
 demonstrate a use case for our models and the selection of
length or lifetime variables.
Our evaluation here is not meant to assess scalability, as 
there are several highly scalable implementation of our ingredients: Shortest path
searches and 
distance and reachability sketching 
\cite{binaryinfluence:CIKM2014,GarimellaMGS:2015,timedinfluence:2015,reverseranks:sigmetrics2016,Akiba:KDD2016}.
We implemented the algorithms 
in Python and performed the experiments on a Macbook Air and a Linux workstation.

\notinproc{

\subsection{Movielens 1M data}}
 The data consists of about 1 million rating by 6,040 users of 3,952 movies.
  Each movie is a member of one or more of  18 genres\notinproc{:
	Action,  Adventure,  Animation, Children's, Comedy, Crime,
        Documentary,  Drama, Fantasy, Film-Noir, Horror, Musical,
        Mystery, Romance, Sci-Fi, Thriller,  War,  Western}.
\notinproc{  
}Our examples $M$ are the $\approx$3.7K movies with both listed genres and ratings.\notinproc{Respectively
51\%,35\%,11\%,3\%,0.5\%,0.03\%  of these movies have exactly 1 to 6 genres.}
We represented the ``true'' label $\vecy_m$ of a movie $m$ with $c$ listed genres as an $L=18$
 dimensional vector with weight $1/c$ on each listed genre and weight
 $0$ otherwise.  The (weighted) occurrence of genres is highly skewed and varies from 30\% to 0.6\% of the movies.
Note that the provided labels and also our learned labels 
  have the form of probability vectors over genres.
We  use the notation $\Gamma(m)$ for the set of users that rated movie $m$ and by 
 $\Gamma(u)$ for the set of movies rated by $u$. 

 We build a graph with a node for each movie and each user.
For each  user $u$ and movie $m\in \Gamma(u)$, we place two directed edges, $(m,u)$
and $(u,m)$ (We do not use the numeric ratings provided in
the data set, and only consider presence of a rating).  
We evaluated performance for a small set of length/lifetime and kernel weighting schemes, without attempting to optimize the choice.
For kernel weighting we used $\alpha(x) \in \{1/x, 1/x^{1.5}, nn\}$, where $nn$ is the ``closest seed.''
The length/lifetime
schemes are listed in Table~\ref{length:table} and are as follows:

\paragraph*{Reach diffusion lifetimes}
We  tune the amount in which paths through high degree 
nodes are discounted by choosing a non-decreasing function $g(x)$.
The lifetime of each user 
  to movie edge $e = (u,m)$ is an independent exponential random variables $\mu_e \sim 
  \Exp[g(|\Gamma(u)|)]$.  All movie to user edges $e=(m,u)$ have fixed
  $+\infty$ lifetimes. Finally, each movie node $m$ has an independent
  ``pass through''   lifetime  $\mu_{mm} \sim \Exp[g(|\Gamma(m)|)]$. 

\paragraph*{Distance diffusion lengths}
Here we use a non-increasing $g(x)$ to tune the discounting of paths through
high degree nodes. We use
fixed-length schemes with lengths $\ell_{vw} = g(|\Gamma(v)|)$ for all
edges. 
Our randomized schemes are also specified using an offset value
$\delta \geq 0$, which tunes the penalty for paths with additional hops.
The length of each user
  to movie edge $e = (u,m)$ is an independent exponential random variable $\ell_e \sim
  \Exp[g(|\Gamma(u)|)]$. Each movie to user edge $e=(m,u)$ has length $\ell_e = 0$.
Finally, with each movie $m$, we associate a pass-through length
that is $\ell_m \sim \delta+ \Exp[g(|\Gamma(m)|)]$.
With $g(x) = 1/x$, we have the property
that the shortest out edge from a node $v$ has 
length distribution $\Exp[1]$ regardless of $|\Gamma(v)|$. Functions
that decay more slowly give more significance to higher degree nodes.

This randomized length scheme has
a compelling interpretation:  For a movie $m$, the order of  2-hop movies sorted by 
increasing distance from $m$
has the same distribution as sequential
weighted sampling without replacement of movies according to the
similarity of their users, when similarity is defined as: 
$${\text sim}(m,m') = \sum_{u \in \Gamma(m)\cap \Gamma(m')}
g(|\Gamma(u)|)\ .$$
In particular, the closest seed in each simulation (used in our nn
weighting) is a weighted
sample according to this similarity measure. 
  With $g(x) = 1/\log(x)$ we obtain the
Adamic-Adar similarity \cite{AdamicAdar:2005} popular in social
network analysis.   Note that our model captures these pairwise
movie-movie relations while working with the original user-movie
interactions,
without explicit computation or approximation of these similarities.
Beyond 2-hops, the distance order depends on the complex path ensembles
connecting movies to $m$, and
their interactions, but have desirable intuitive properties:
movies $m'$ with ``stronger'' connectivity ensembles are in
expectation closer and movies $m'$ and $m''$ with different strengths
and highly dependent ensembles will have the stronger connection consistently closer.
 The use of pass-through lengths with movie nodes and $0$ lengths for $(m,u)$ edges is equivalent to using pass-through lengths of $0$ and 
using for all outgoing $e=(m,u)$ edges identical lengths $\ell_m$.
The effect of independent lengths of outgoing edges rewards multiple paths even when
they traverse the same node whereas the use of same (random) lengths
rewards only node-disjoint paths.


\ignore{
The purpose of the offset parameter $\delta$ is to control
the ``cost'' of additional hops: With a very high offset, a movie
$m'$ with a  $4$ hop path to $m$ would always be farther than a movie $m''$
$2$ hop away.  With low offset,  when the path ensemble from $m$ to
$m''$ contains many independent paths through low degree nodes, and the ensemble from $m$ to
$m'$ is sparse and involves one or few very high degree users, then
$m''$ would likely to be closer than $m'$.  
}

\begin{table*}
\caption{Lengths and lifetime schemes for
  Movielens1M\label{length:table}}
\centering
{\scriptsize
\begin{tabular}{l|l|l}
Scheme name & specifications & parameters \\
\hline
Dist Exp$[g(x)]+\delta$ &  $\ell_{um} \sim \Exp[g(|\Gamma(u)|)]$ & $g(x) = \frac{1}{x}$, $\delta\in \{0,50,200\}$ \\
 & $\ell_{mm}
   \sim \Exp[g(|\Gamma(m)|)]+ \delta$ & 
 $g(x) = \frac{1}{\sqrt{x}}$ , $\delta=50$ \\ 
\hline
 Dist $g(x)$ (Fixed-length) & $\ell_{vw} = g(|\Gamma(v)|)$ & 
 $g(x) = \{1, \log_2(1+x), \sqrt{x}, x \}$ \\
 \hline
 Dist ExpInd$[\frac{1}{g(x)}]+\delta$ &  $\ell_{um} \sim \Exp[g(|\Gamma(u)|)]$ & $g(x) = 1/x$,  $\delta=50$ \\
 &   $\ell_{mu} \sim \Exp[g(|\Gamma(m)|)]+ \delta$ &  \\
 \hline
 Reach Exp$[g(x)]$ & $\mu_{um}  \sim \Exp[g(|\Gamma(u)|)]$ & $g(x) = \{x, \sqrt{x}\}$ \\
 &    $\mu_{mm}  \sim \Exp[g(|\Gamma(m)|)]$ & 
\end{tabular}
}
\end{table*}

\ignore{
 The specific lengths schemes we evaluated are listed in Table~\ref{length:table}:
(i)~Randomized schemes with $g_1 \equiv g_2 \equiv g$ and different
offsets.
(ii)~A randomized  scheme that sets the lengths 
of $(m,u)$ edges {\em independently} as $\Exp[g(|\Gamma(m)|)]+\delta$
instead of using the same pass-through length for $m$.
(iii)~Fixed lengths schemes where the length of an edge $(v,w)$ is a
fixed non-decreasing function of $|\Gamma(v)|$.
}

\paragraph*{Computation}
We performed multiple Monte Carlo simulations of each model. 
In each simulation we obtain a fresh 
set of edge lengths/lifetimes.  The closest seed (nn) weighting
requires computation equivalent to a single graph search and
the learned label of $i$ is the label of that
closest seed.
With the other kernel weights we compute two sets of sketches\notinproc{ as
outlined in Section~\ref{hyperpar:sec}, both with sketch parameter $k=16$,} and compute estimates using the sketches.
Our final learned labels $\vecf_i$
are the average of the output of the different simulations.
We also 
  computed learned labels for seed movies, based only 
on the labels of other seed movies. 
We expect our quality to improve with the sketch
parameter $k$ (which controls the 
quality of the estimates obtained from the sketches) and  with
the number of simulations. 
In our experiments we performed up to 200 simulations with the nn weighting and up to 50 simulations with the schemes that require sketches.

\paragraph*{Spectral methods}
For comparison, we implemented a popular symmetric
spectral method of label learning.  As discussed in the introduction, there are many 
different variations.  We chose 
to use the {\sc EXPANDER} formulation with the Jacobi iterations as 
outlined in \cite{RaviDiao:AISTATS2016}. 
{\sc EXPANDER}  initializes the labels $\vecf^{(0)}_i$ of node $i$ to the seed label 
$\vecf^{(0)}_i = \vecy_i$ when $i\leq n_\ell$ and to the uniform prior 
$\vecf^{(0)}_i = \vecu$ otherwise. 
The labels are then iteratively updated using 
\begin{equation} \label{expanderiter:eq}
f^{(t+1)}_i = \frac{\mu_1 I_{i\leq n_\ell} \vecy_i + \mu_2 \sum_{j\in 
    \Gamma(i)} w_{ij} f^{(t)}_j + \mu_3 \vecu}{\mu_1 I_{i\leq n_\ell} +
  \mu_2\sum_{j\in\Gamma(i)}w_{ij} + \mu_3}\ . 
\end{equation}

We constructed a graph from the Movielens1M dataset as described 
above, with a node for each movie or user and an edge for each rating. 
We used the same weighting parameters $\mu_1=1$,
$\mu_2=0.01$, $\mu_3=0.01$ as in the experiments in \cite{RaviDiao:AISTATS2016}. 
We used uniform relative weights of neighboring nodes, which were 
either all $w_{ij}=1$ or the inverse of the degree $w_{ij}=1/|\Gamma(i)|$. 
We performed up to two hundred iterations. 

 The uniform prior used in \cite{RaviDiao:AISTATS2016} resulted in poor quality learned labels.  
We speculated that this is because our seed labels (and data set 
  labels) are very skewed (some genres are much more common 
  than others) 
  whereas the experiments in \cite{RaviDiao:AISTATS2016} selected 
  balanced seed sets. 
We tried to correct this by instead using a prior that is 
equal to the average seed label.  This prior was used both in 
initialization and in the propagation rule. 

 With the average prior, with uniform weighting of $w_{ij}$ the 
 learned labels did not converge and also did not improve with 
 iterations.  With inverse of the degree weighting, the learned labels 
stabilized in fewer than 20 iterations.

\paragraph*{Seed sets}
  Our seed sets $S$ are subsets of $M$ selected uniformly at random.  We use 
  seed sets of sizes 
$s \in \{20, 50, 100, 200, 500\}$
which roughly correspond to $0.5\%$ to 
$12\%$ of all movies in $M$. We selected 5 random permutations of the
examples $M$. The sets of seeds were prefixes of the same
permutation and the test set was the suffix of movies not selected for any seed set.

\paragraph*{Quality measures}
We use both the average square error (ASE) and other metrics that directly evaluate the 
effectiveness of the learned label in predicting genres (classes).
The squared error of $\vecf_i$ with respect to the 
true label $\vecy_i$ is defined as
$|| \vecf_i -\vecy_i ||^2_2 = \sum_{j\in [L]} (f_{ij}-y_{ij})^2\ .$
Note that the sum $\sum_i ||\vecf-\vecy_i||^2_2$ is minimized by the
average of $\vecy_i$.
Our baseline quality is the average of $||\overline{\vecy}(S) -\vecy_i||^2_2$, where $\overline{\vecy}(S)$ is the
average seed label
\begin{equation} \label{avelabel:eq}
\overline{\vecy}(S) \equiv \frac{1}{|S|}\sum_{i\in S} \vecy_i\ .
\end{equation}

To predict classes, we use the learned label $\vecf_i$ to compute an importance order
of classes (we explain below how such an order is obtained).  
We then compute a {\em success score} in $[0,1]$ for
the order as follows, using
the true label $\vecy_i$:
Each position $j$ in the order with $y_{ij}>0$ contributes $1/j$ to
the numerator of the success score.  The success is then normalized by $H_r = \sum_{i=1}^r 1/r$
for a movie with $r$ listed genres. For example,  a movie with $r$ genres that are
the first $r$ positions in the order gets success score of $1$.  A
movie with one genre that is in the $j$th position in the order gets a
success score of $1/j$.  A movie with two genres that are in positions
2 and 3 of the order gets a success score $\approx 0.56$.

 We evaluated three methods of ordering classes $j$, using a decreasing order according to the following.   {\em
   Mag}:  $f_{ij}$ learned label entry; {\em rMag}: $\frac{f_{ij}^2}{\overline{y}(S)_j^2}$
penalizes entries lower than the corresponding average seed entry;
 {\em LoReg}: Sort by order of decreasing probabilities of $i$ having
 the label $j$ ($y_{ij}>0$) given $f_{ij}$.  The probabilities are
 computed using (regularized) logistic regression models. We used this method only
 with diffusion models, as they support efficient computation of learned 
labels of seed nodes, based only on other seed nodes.
Specifically, 
for each genre $j$, for each seed $i$, we used $f_{ij}$ as a positive examples 
when  $y_{ij}>0$ and as a negative example when 
$y_{ij}=0$.

\paragraph{Results and discussion}
Some representative results for the 
average square error and the success scores of
selected schemes are provided in Figure~\ref{MSEmovielen:fig}.
The figures showing success scores also show a 
baseline success of using a decreasing order using the average seed
label.  This baseline already achieves average success score of 0.55.
This is because the high skew of the class distribution.

As expected, the quality of the learned labels improves with the number of seeds.
We can see that the diffusion-based methods  outperformed the label
propagation method.  In our settings, the LP learned labels converged
to vectors that are very close to the average seed labels, and the
quality measures we used did not separate them.
As for success scoring orders, rMag consistently improved over Mag
(not shown).  Both rMag and LoReg improved significantly over the baseline, with
rMag performing better on smaller seed set and LoReg performing
better for deterministic length schemes.
LoReg was less effective on 
smaller size seed sets because there 
were very few examples to work with.

The settings of $\alpha(x)=1/x,1/x^{1.5}$ performed similarly\notinproc{ and we show results only for $1/x^{1.5}$}.
The closest seed (nn) kernel requires more simulations to reach its
peak quality,
but note that simulations are considerably faster. The nn kernel
performed very well with the randomized distance schemes but poorly with the deterministic schemes.  This is
because the deterministic schemes do not improve with simulation and
the nn kernel uses essentially a single (closest) seed.  
The randomized lengths schemes outperformed the deterministic ones, and more significantly on smaller
seed sets.
\notinproc{

} We also noted the following.
The settings that performed best were $g(x)=1/x$ and $\delta=50$ for
randomized dist diffusion, $g(x)=x$ with reach diffusion,
and $1/\log_2(1+x)$ for fixed-length distance diffusion.
Quality was not sensitive to small variations in parameters.
The ExpInd schemes (with independent $(m,u)$ lengths) performed
somewhat worse than the basic scheme. 
Overall, the randomized distance diffusion schemes were the most effective.

\begin{figure*}[th]
\centering 
\notinproc{
\includegraphics[width=0.32\textwidth]{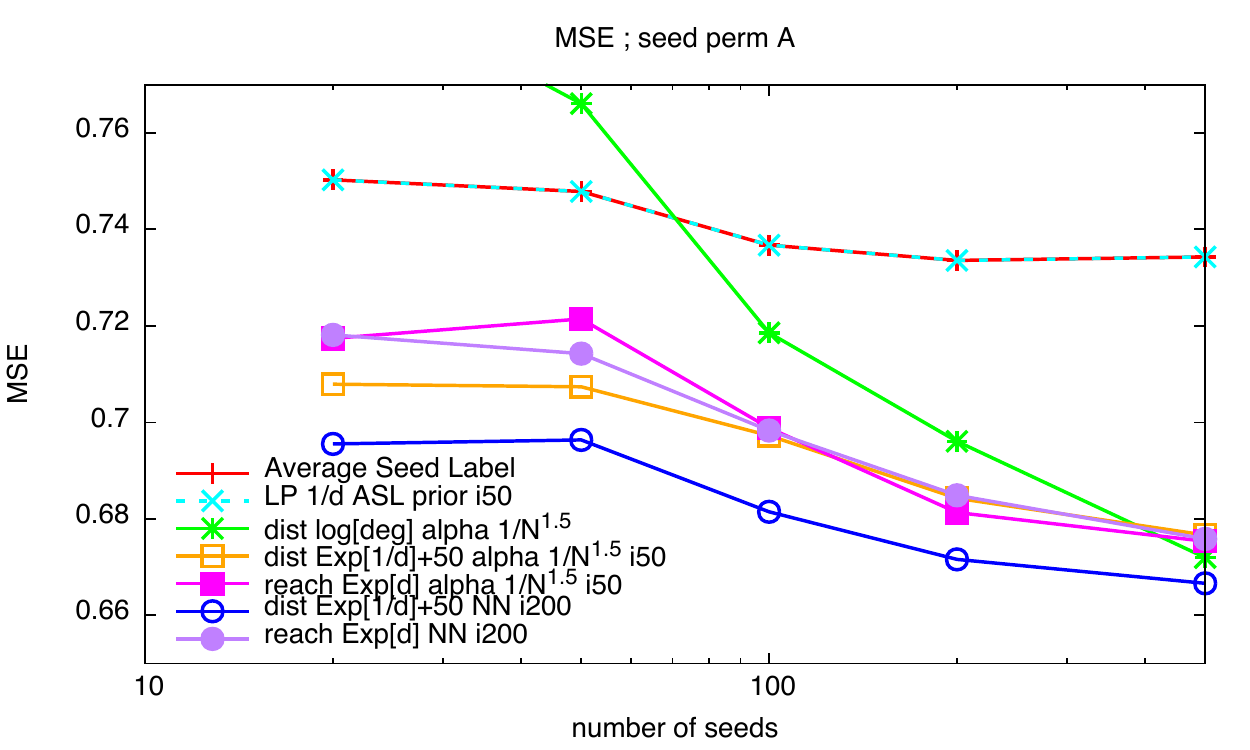}%
\includegraphics[width=0.32\textwidth]{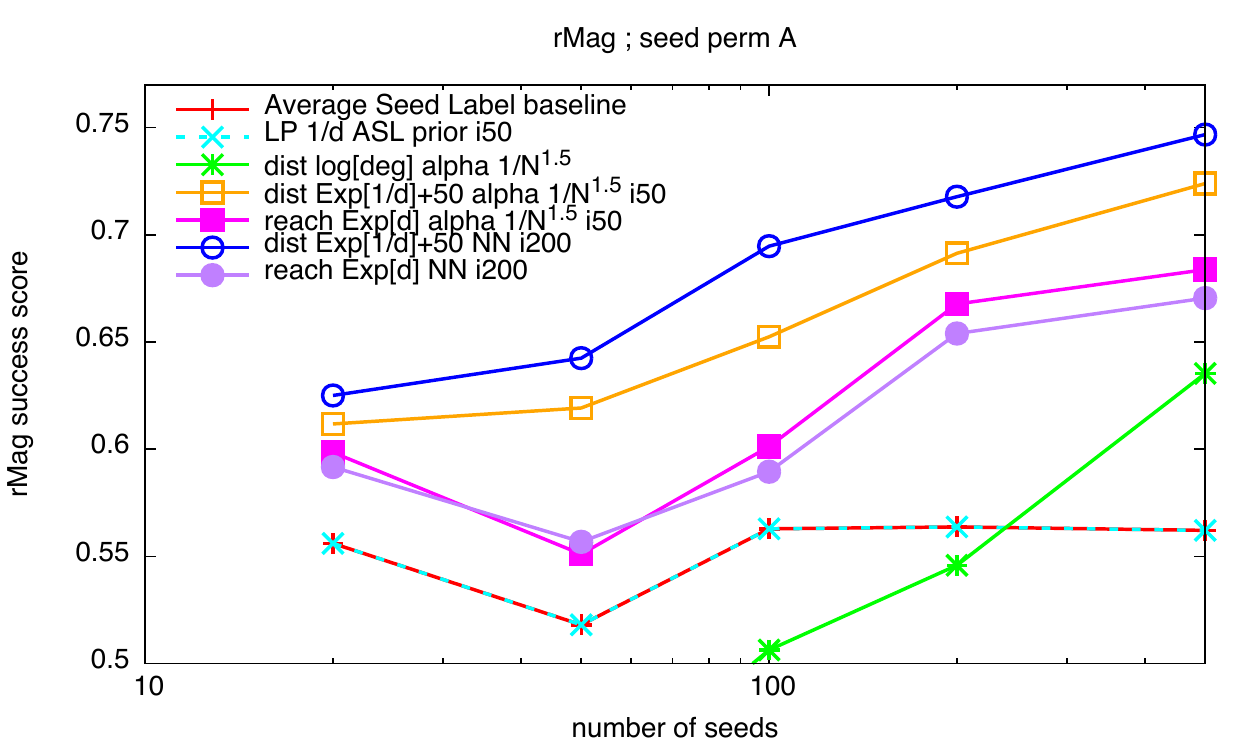}%
\includegraphics[width=0.32\textwidth]{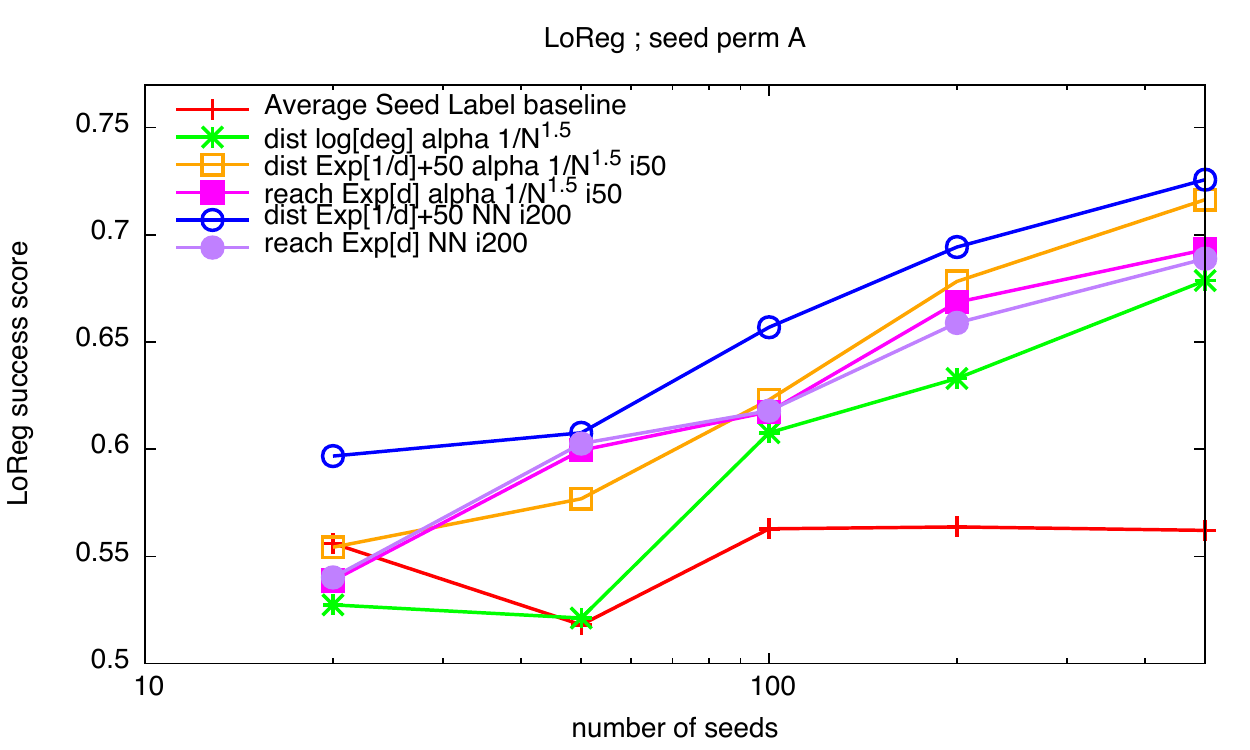} \\}
\includegraphics[width=0.32\textwidth]{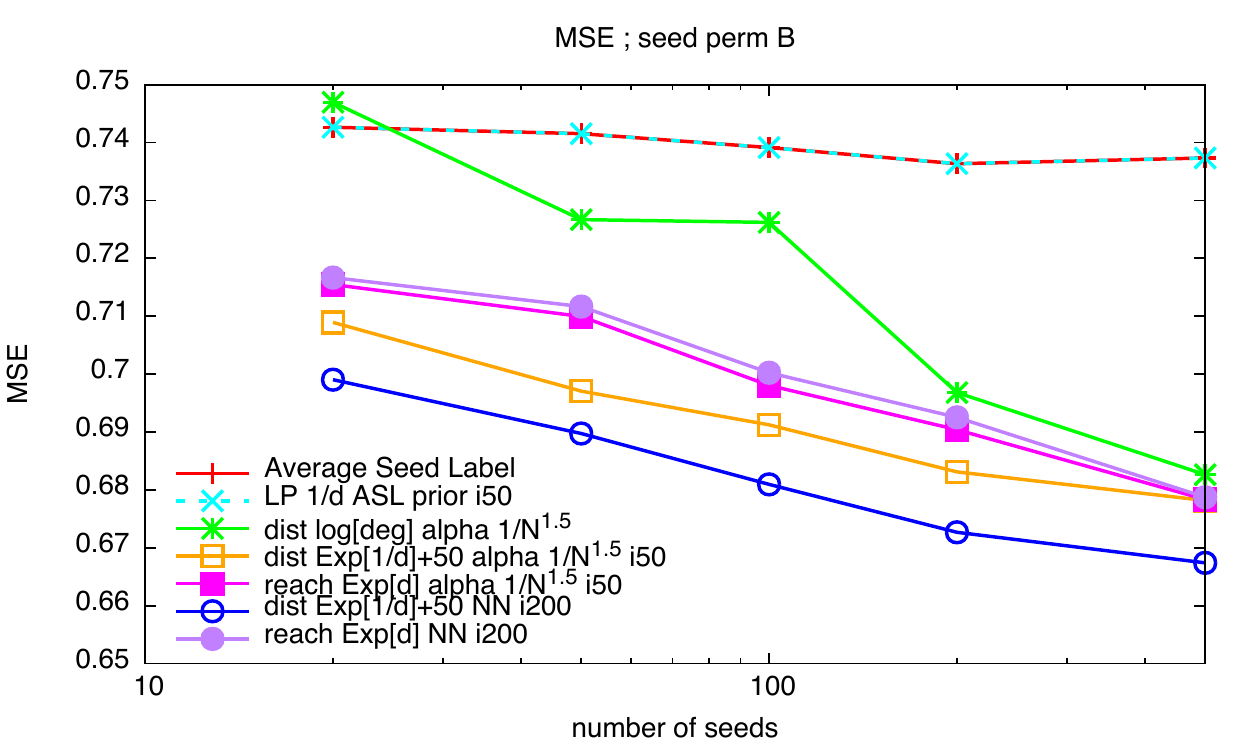}%
\includegraphics[width=0.32\textwidth]{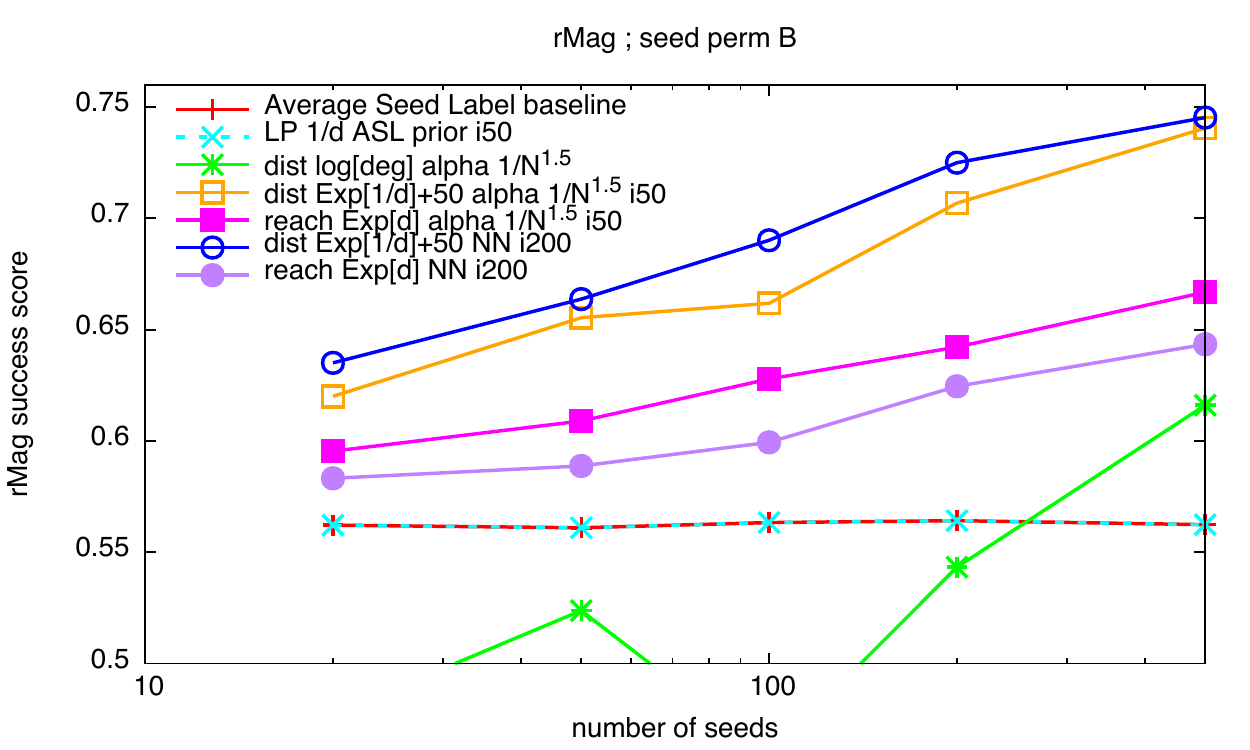}%
\includegraphics[width=0.32\textwidth]{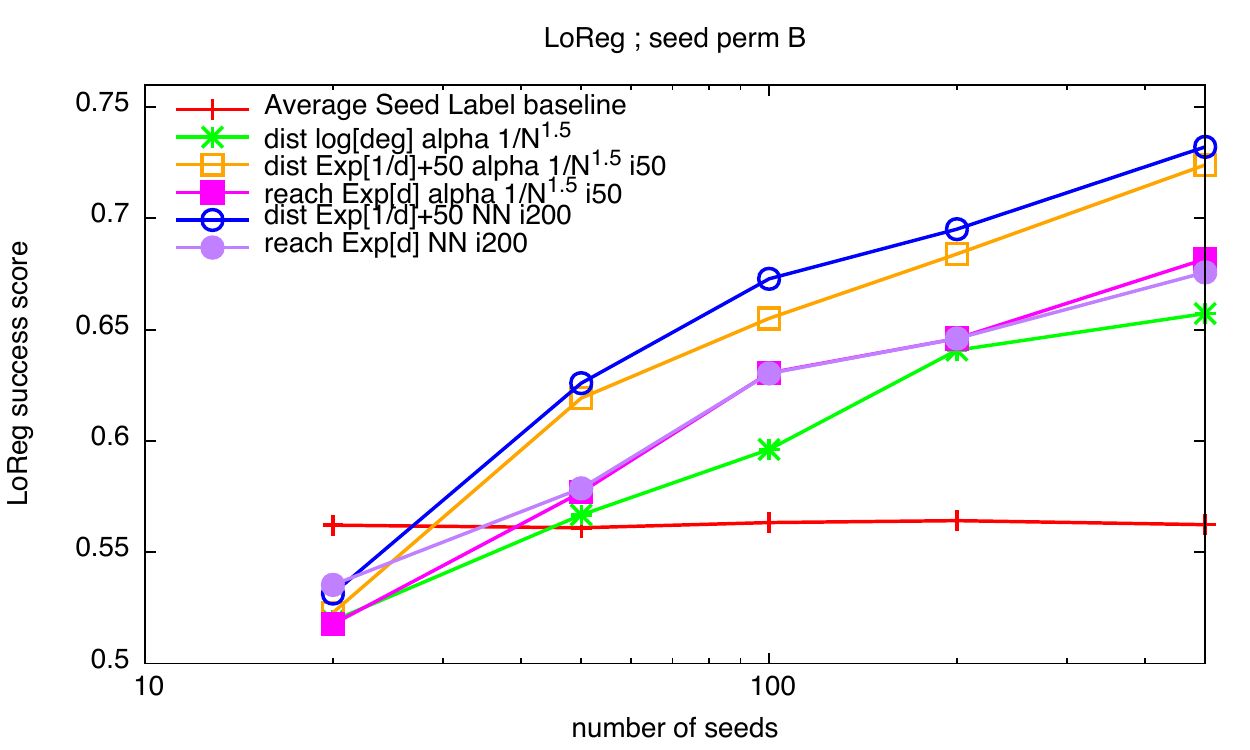} \notinproc{\\
\includegraphics[width=0.32\textwidth]{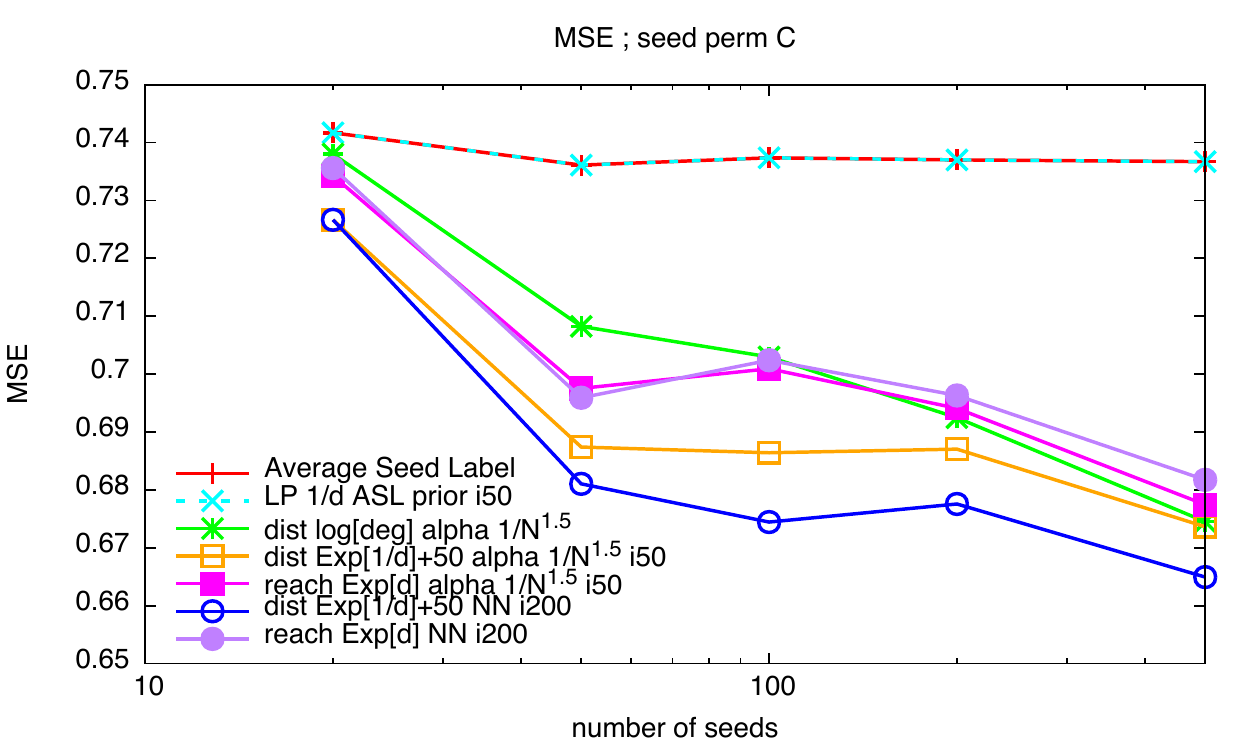}%
\includegraphics[width=0.32\textwidth]{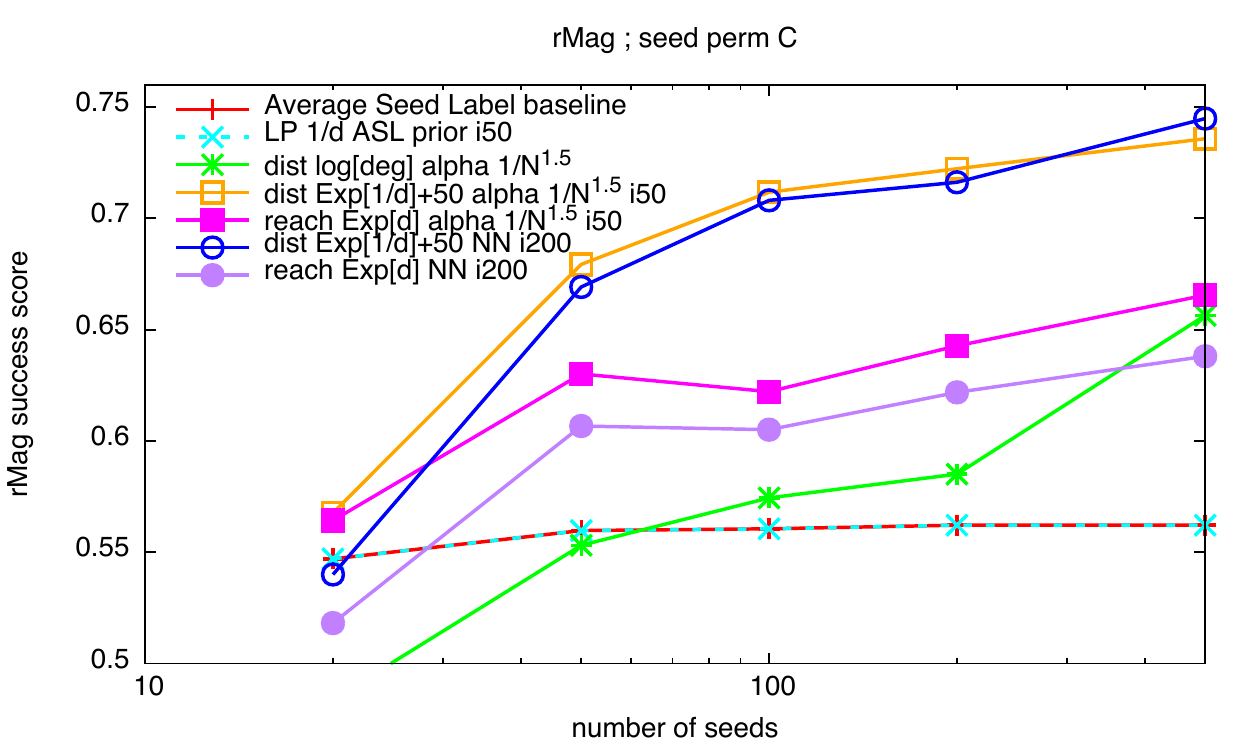}%
\includegraphics[width=0.32\textwidth]{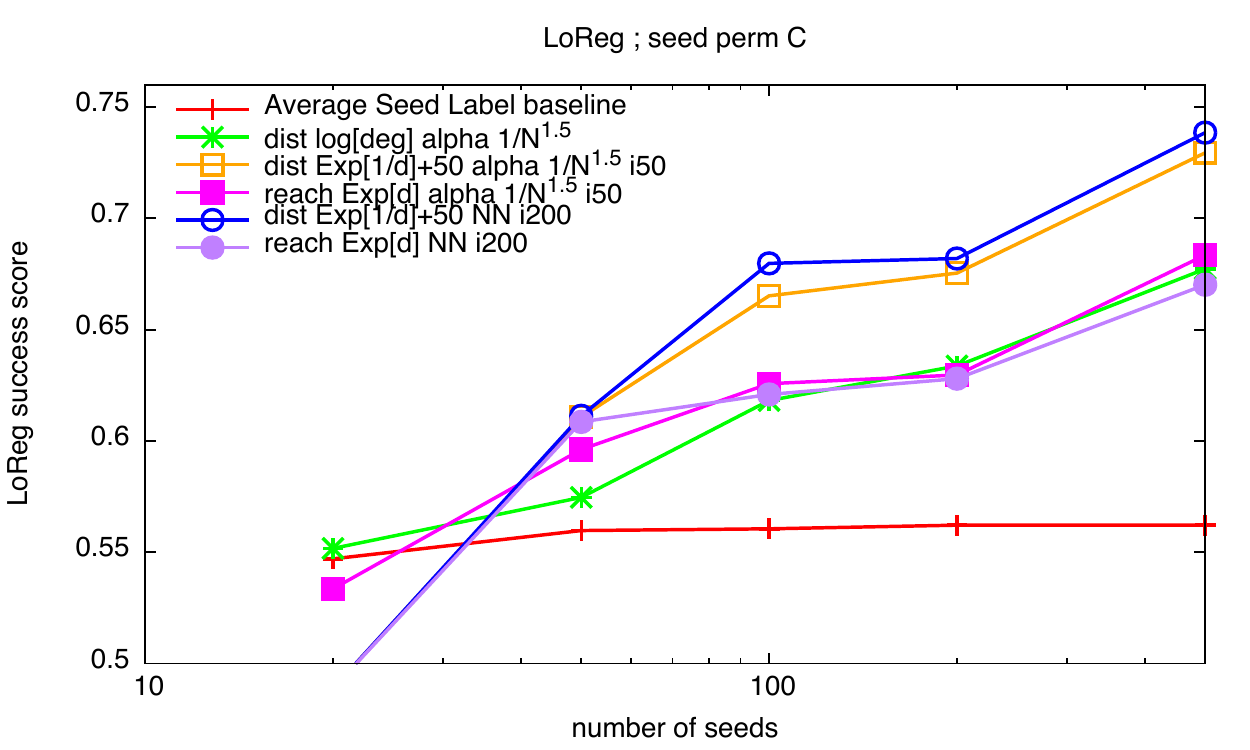}%
}
\caption{\label{MSEmovielen:fig}{\instance{movielens1M}: Average square error (lower is better) and rMag/LoReg success scores (higher is better) for selected schemes.}}
\end{figure*}


  To study performance in more detail we obtain
precision recall (PR) tradeoffs for our learned labels using
the prediction {\em margin}, which we
define as the 2-norm of the difference  between the learned
label and the average label 
$\Delta_i = || \overline{\vecy}(S) - \vecf_i||_2$.  
We then sweep a threshold value $\tau$.
The recall for $\tau$ is the fraction of examples (movies) $i$ for
which $\Delta_i \geq \tau$.
The precision is then defined as the average success score of these
examples.
Figure~\ref{sweepitermovielen:fig} shows the PR tradeoffs by sweeping the number of
simulations.  
We can see that with all schemes we obtain significantly higher
quality classifications with higher margin. This is important because in many applications we are interested in identifying the higher quality labels.  As for the effect of  simulations, the randomized schemes improve 
significantly with simulations, which shows the value of randomized 
lengths/lifetimes models.  
\notinproc{
Simulations can improve the deterministic schemes which use sketches,
due to use of sketch-based estimates, but the improvement is very limited.}

\begin{figure*}
\centering 
\includegraphics[width=0.32\textwidth]{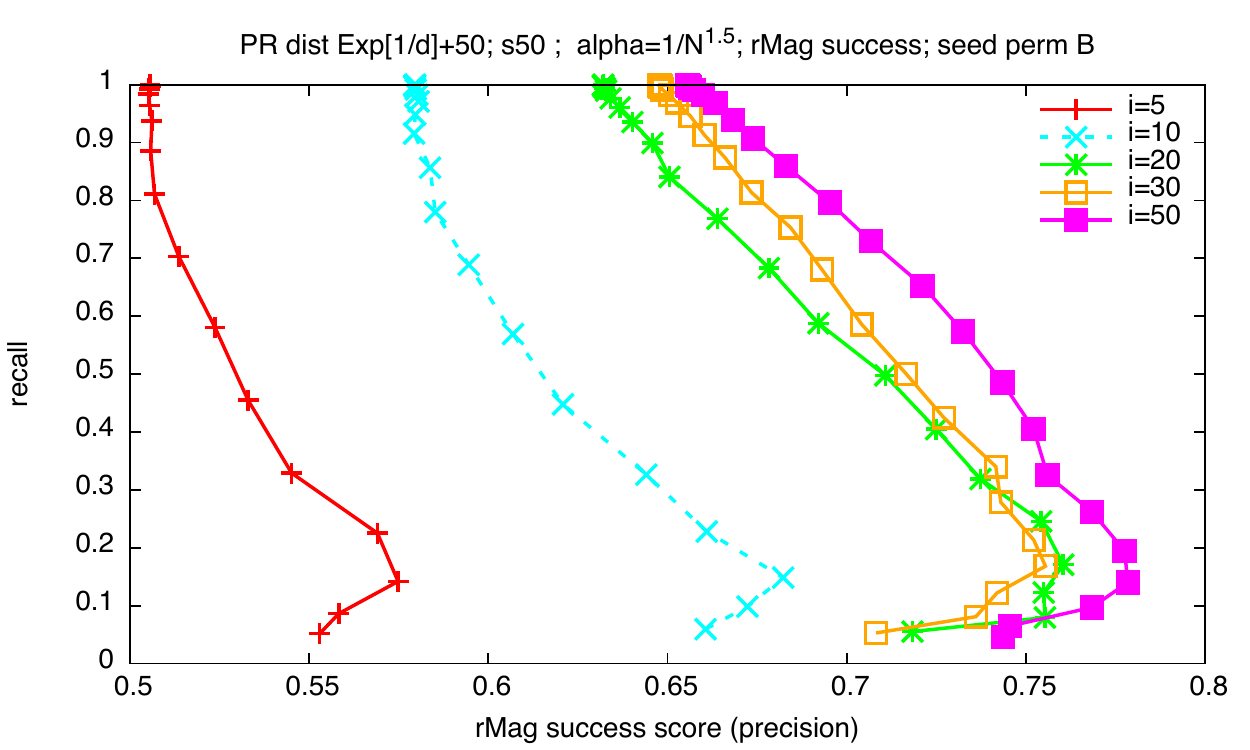}%
\includegraphics[width=0.32\textwidth]{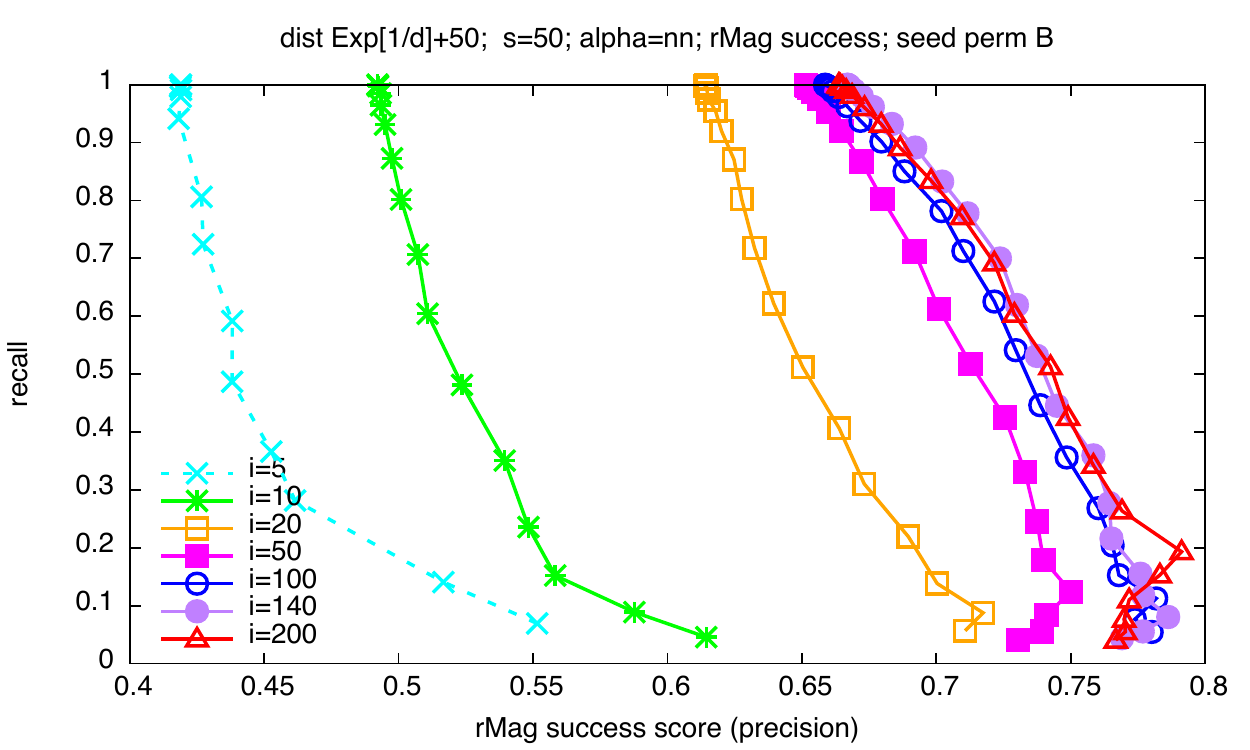}%
\includegraphics[width=0.32\textwidth]{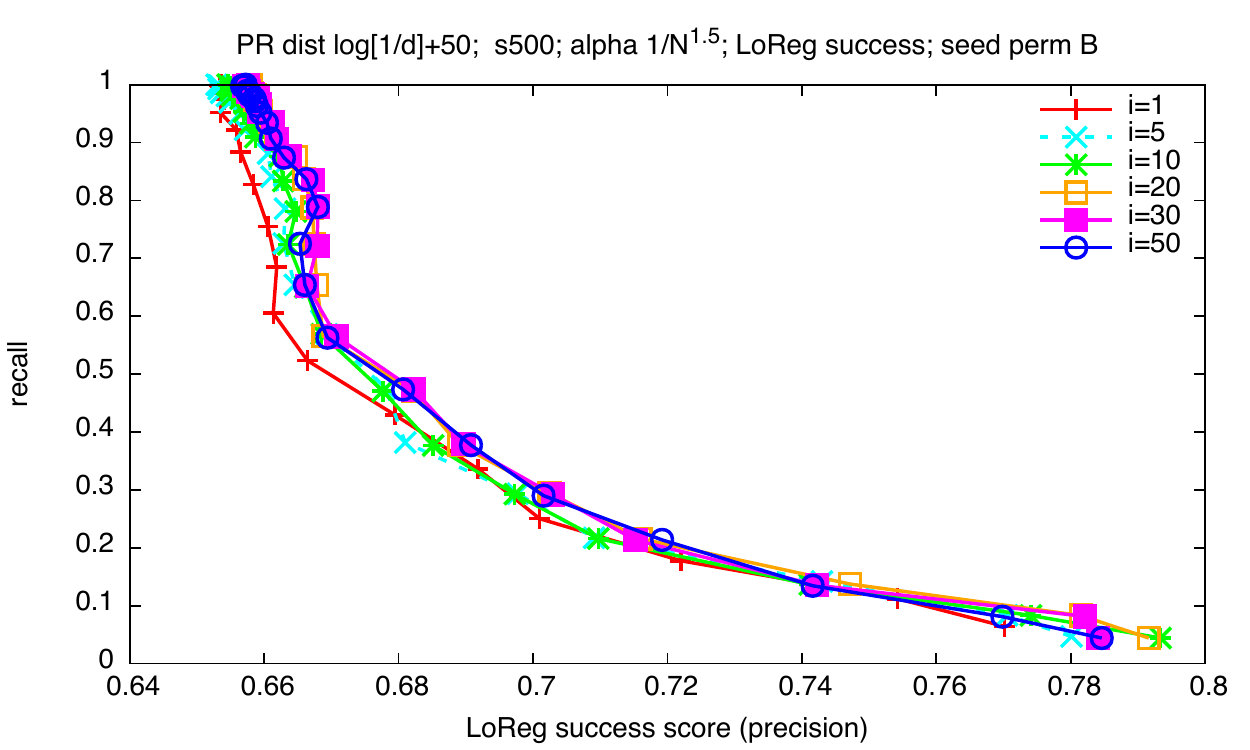}%
\\
\includegraphics[width=0.32\textwidth]{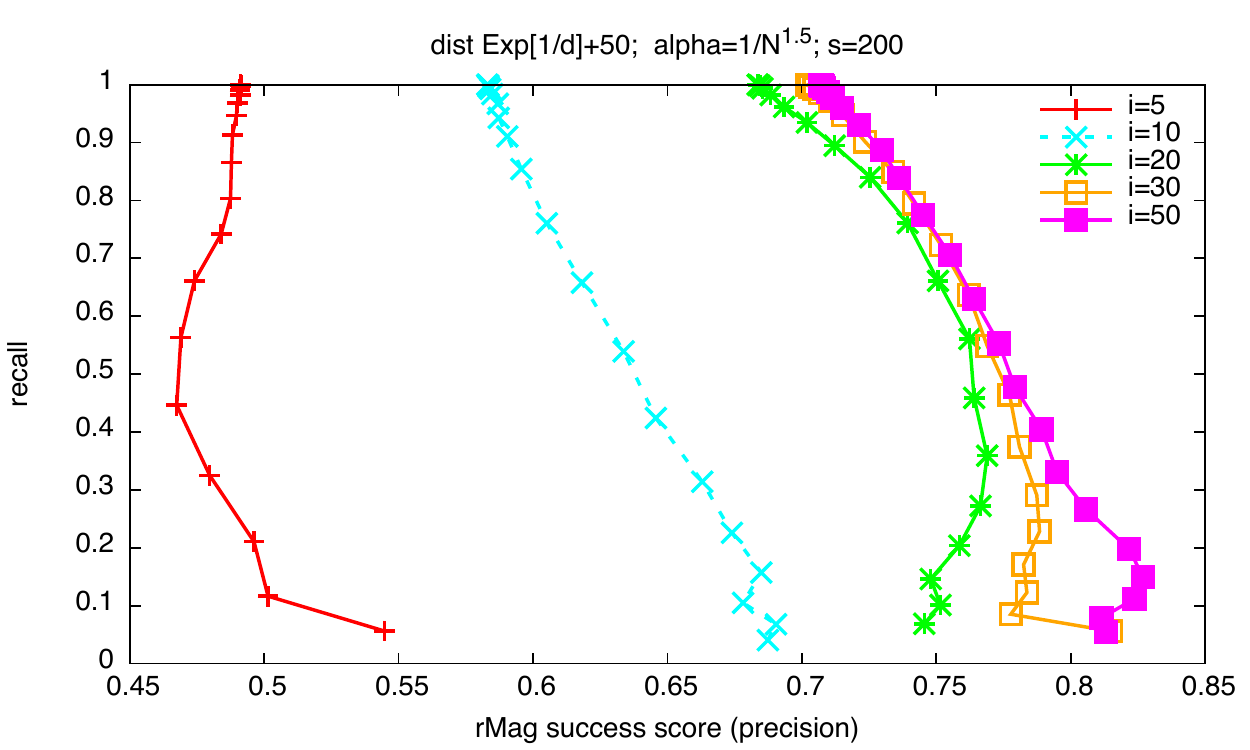}%
\includegraphics[width=0.32\textwidth]{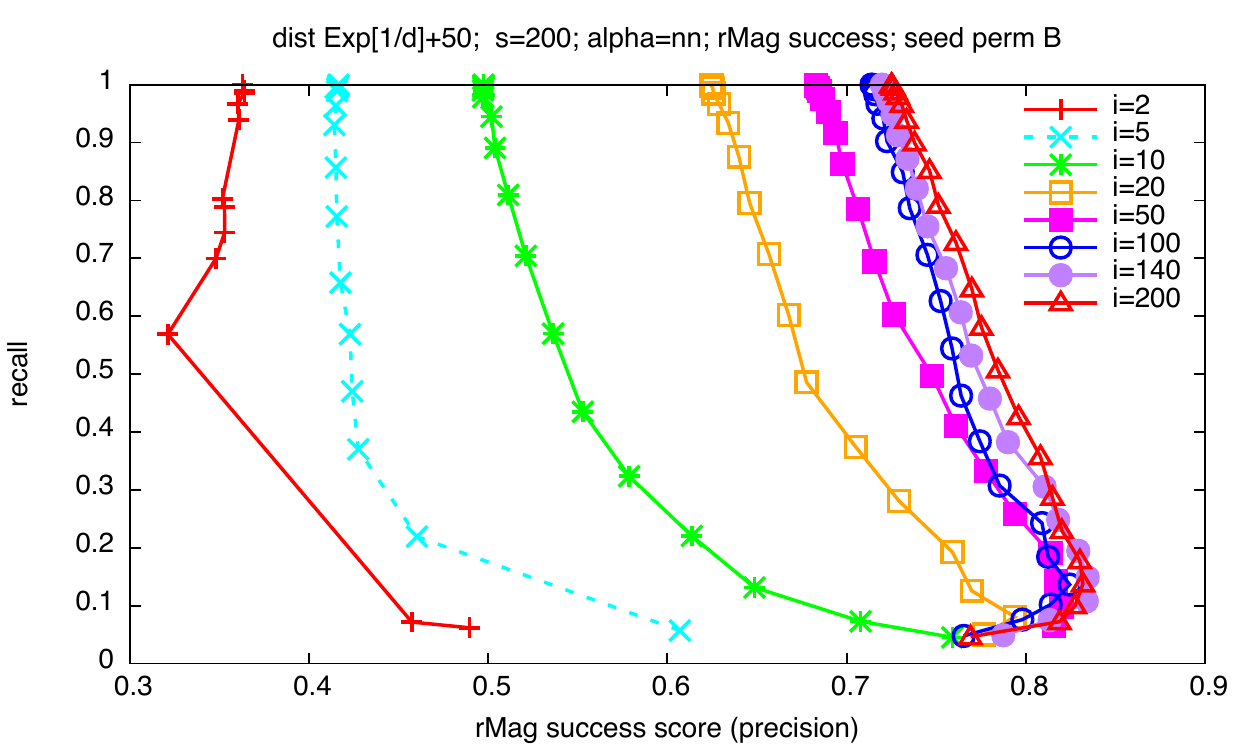}%
\includegraphics[width=0.32\textwidth]{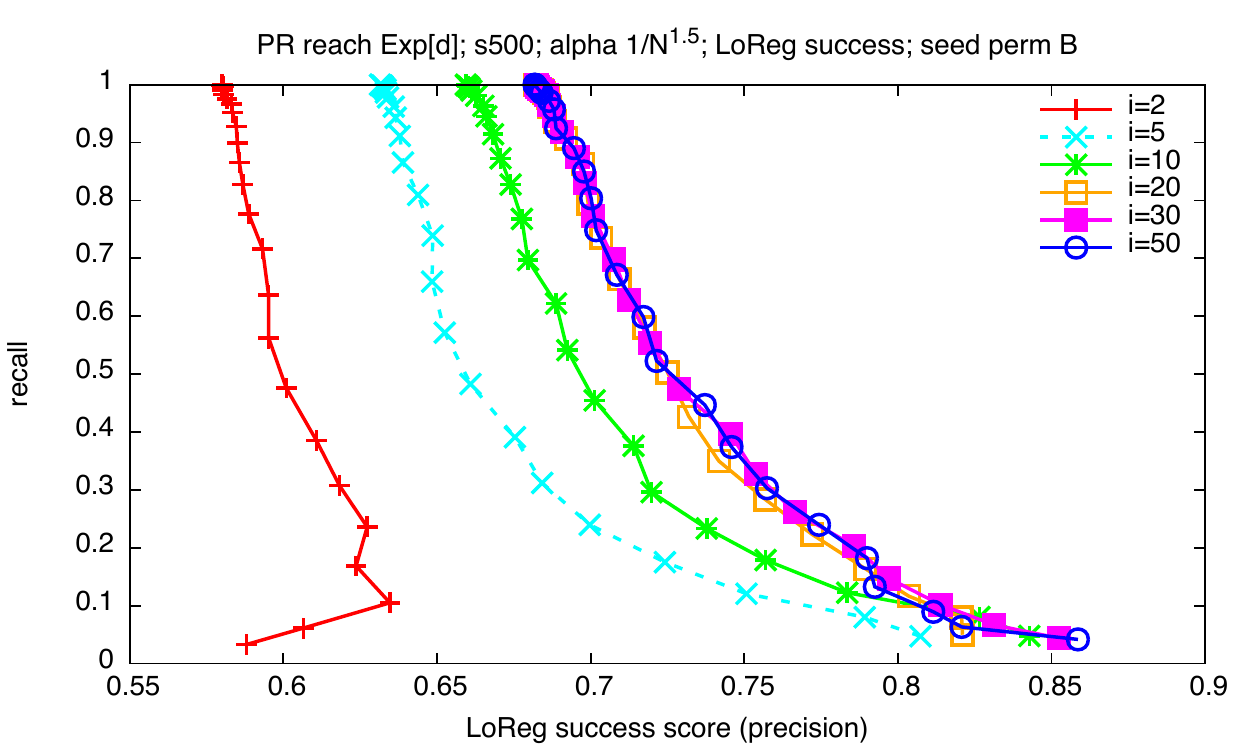}%

\caption{\label{sweepitermovielen:fig}{\instance{movielens1M}:
    \onlyinproc{Precision recall of different schemes as we sweep the
      number of simulations.}\notinproc{Precision recall of different
      schemes as we increase the number of simulations. Left and
      middle: Randomized
      distance diffusion Exp$[1/d]+50$ with rMag success scoring; left with
      $\alpha(x)=1/x^{1.5}$ and middle with $\alpha = nn$; top with
      $50$ seeds and bottom with $500$ seeds.   Top right:
      Fixed-length distance diffusion with $\log[1/d]+50$,
      $\alpha(x)=1/x^{1.5}$, and LoReg
      scoring, and 500 seeds.  Bottom right:  Reach diffusion Exp$[d]$
      with $\alpha(x)=1/x^{1.5}$, 500 seeds, and LoReg scoring. }}}  
\end{figure*}

\ignore{

********************

We compare the error of $\vecf_i$ to that of the average label prior
and define {\em success} to be the event where the learned label is
better than the average seed label:
\begin{equation}
|| \vecf_m - \vecy_m ||_2 < || \overline{\vecy}(S) - \vecy_m ||_2\ .
\end{equation}

\subsubsection{Movielens1M results}

 Results were fairly consistent for the different seed sets selections, 
with more variance as expected for the small seed sets.  We discuss
the trends and show some representative results.

\paragraph*{Comparing schemes}
Figure~\ref{sweepmethodsmovielen:fig} illustrated the PR plots for
different edge lengths schemes for 10 iterations (Monte Carlo simulations). 

The schemes with fixed edge lengths (see bottom right plot for seed
set of size 200) gave no or very weak signal for small seed sets
$s=20,50$ but ``catch up'' with the randomized schemes for larger seed
sets.  The worst performer by far throughout was fixed edge lengths of
$g(x)=1$.  All schemes with increasing $g(x)$ performed better on
large seed sets with $g(x)=\sqrt{x}$ being the best performer.  This
shows that discounting the value of paths by the degree of the nodes
they traverse is critical.

 The first 5 plots show some of the randomized schemes in Table~\ref{length:table} and
the fixed lengths scheme with $g(x)=\sqrt{x}$, for seed set sizes
 between 20 and 500.   
Note that the randomized scheme provide a meaningful signal even for very small seed sets. 
The better 
performers were the randomized schemes with $g(x) = 1/x$ and mid range
$\delta$.
The remaining schemes in the table, 
the randomized scheme with $g(x)=1/\log_2(1+x)$ and
 for randomized scheme with independent $\ell_{mu}$ lengths were
 outperformed by the other schemes and are not shown.

Our results for reach diffusion showed the same patterns as distance
diffusion, but where somewhat weaker.

 The relative improvement of the fixed length schemes for larger seed sets 
can be explained by these schemes (which are not able to capture as well the 
full path ensembles) providing accurate ``short range'' affinities,
which is enough when many of 
the examples closest to them are in the seed set.  These schemes are very 
inaccurate on the  ``long range'' affinities, which are critical for 
sparse seed sets and are better captured by randomized schemes.

\begin{figure*}
\centering 
\includegraphics[width=0.32\textwidth]{PR_methodsweep_s20_i10.pdf}%
\includegraphics[width=0.32\textwidth]{PR_methodsweep_s50_i10.pdf}%
\includegraphics[width=0.32\textwidth]{PR_methodsweep_s100_i10.pdf}%

\includegraphics[width=0.32\textwidth]{PR_methodsweep_s200_i10.pdf}%
\includegraphics[width=0.32\textwidth]{PR_methodsweep_s500_i10.pdf}%
\includegraphics[width=0.32\textwidth]{PR_fixedmethods_s200_i10.pdf}
\caption{\label{sweepmethodsmovielen:fig}{\instance{movielens1M}: Precision recall of different schemes for 20,50,100,200,500 seeds.  Bottom right:  Fixed lengths with 200 seeds.}}  
\end{figure*}

\paragraph*{Number of simulations}
We study  quality as a function of the number of simulations.
From the theory, we expect quality to increase for two reasons.
First, with randomized edge lengths, the average over more simulation
provides a closer estimate of the true expectation \eqref{llabel}.
Second, and this applies also for
fixed-length schemes, more simulations mitigate 
the error of using sketch-based approximation and not the exact
density estimate \eqref{onesimulation}.

Some representative plots of quality as a function of the number of
simulations are presented in 
Figure~\ref{sweepitermovielen:fig}.  We observe that the improvement
in quality with the number of simulations was significant for the
randomized schemes, and in particular with the smaller seed sets.  There
was little or no improvement for the fixed lengths schemes.  

The more significant improvement for the randomized schemes on 
smaller seed sets might be explained by the randomized model being
able to better
capture the deeper path ensembles between nodes and seeds.

The stronger improvement with the number of simulations for randomized versus deterministic schemes
suggests that the main source of improvement is obtaining a better
approximation of the true expectation.  Therefore, it is more
beneficial  to increase the number of simulations than to increasing the
sketch parameter $k$.

\begin{figure*}
\centering 
\includegraphics[width=0.32\textwidth]{PR_exp1_50_itersweep_s50.pdf}%
\includegraphics[width=0.32\textwidth]{PR_exp1_500_itersweep_s50.pdf}%
\includegraphics[width=0.32\textwidth]{PR_cpsqrt_itersweep_s100.pdf}%
\caption{\label{sweepitermovielen:fig}{\instance{movielens1M}: Precision recall when sweeping
    number of simulations for randomized dist schemes with 
$g(x) = 1/x$ and offset 50 (Left: 50 seeds, Middle: 500 seeds) and Right:  fixed lengths of $\sqrt{\Gamma(u)}$  with 100 seeds.}}
\end{figure*}

\begin{figure*}
\centering 
\includegraphics[width=0.32\textwidth]{PR_distexpp1_itersweep_C_s100.pdf}%
\includegraphics[width=0.32\textwidth]{PR_reachexpd_itersweep_C_s100.pdf}%
\includegraphics[width=0.32\textwidth]{PR_cpsqrt_itersweep_C_s100.pdf}%
\caption{\label{sweepitermovielen2:fig}{\instance{movielens1M}: Precision recall when sweeping
    number of simulations for randomized dist schemes with 
$g(x) = 1/x$ and offset 200, randomized reach scheme with $g(x)=x$,
and  fixed lengths of $\sqrt{\Gamma(u)}$, all with $s=100$ seeds.}}  
\end{figure*}

\paragraph*{Size of the seed set}
Note that our measure of quality is relative to the
average seed label.  Therefore, the quality of this prior also improves for
larger seed sets.  
We do observe, however, a consistent increase in quality for larger seed sets across
schemes even with respect to this improving prior.  Some representative results 
are provided in Figure~\ref{sweepseedsmovielen:fig} for the randomized scheme
with $g(x)=1/x$ and $\delta=50$ and for the fixed-length scheme with
$g(x)=1/\sqrt{x}$.

\begin{figure*}
\centering 
\includegraphics[width=0.32\textwidth]{PR_exp1_500_seedsweep_i10.pdf}%
\includegraphics[width=0.32\textwidth]{PR_cpsqrt_seedsweep_i10.pdf}%
\caption{\label{sweepseedsmovielen:fig}{\instance{movielen1M}: Precision recall when sweeping
    number of seeds from 20 ($0.5\%$) to 2000 ($50\%$) for randomized scheme with $g=1/x$ and $\delta=50$ and for fixed edge lengths  scheme with  $\ell_{uv} = 1/\sqrt{\Gamma(u)}$ (10 iterations)}}
\end{figure*}

} 
\ignore{
\paragraph{Quality of learned labels}
We compare the error of our learned labels against the error obtained
when using the average A of all the seed labels.   We then consider the
fraction of movies $m\in M$ for which the error was larger with the learned
label.
The results for distance diffusion for sketch parameter $k=16$ and $5$ 
iterations are
shown in Table~\ref{distdiff:table}.  
Results for reach diffusion with sketch parameter $k=8$ and $10$
iterations are shown in Table~\ref{reachdiff:table}.  We can see that
the quality with respect to the seed average A increases with the size
of the seed set.  We can also see that even when the seed set is only
a small fraction of the total ($2.5\%$) we can still obtain a stronger
signal from the learned labels than from the average label of the seed
set.
We can also observe that on this data set, the distance diffusion resulted in better labels
than reach diffusion.

\begin{table}\caption{Distance diffusion learned
    labels ($k=16$, $5$ iterations) \label{distdiff:table}}
\begin{tabular}{r | l | l || l | l }
seed set & \multicolumn{2}{c}{$||f-y||_2$} & \multicolumn{2}{c}{$\KLdiv(f,y)$}  \\
 size &  LL  & $0.5$ LL $+$ $0.5$A  &  LL$^*$&  $0.5$ LL$^*$ $+$
                                               $0.5$A  \\
\hline
$100$ & 0.33  & 0.30  & 0.36  & 0.35 \\
$200$ & 0.33 & 0.31  & 0.34 & 0.33 \\
$500$ & 0.23 & 0.22 & 0.23 & 0.22 \\
$1000$ & 0.21 & 0.20 & 0.20 & 0.19 \\
$2000$ & 0.21 & 0.20 & 0.19 & 0.19
\end{tabular}
\end{table}

\begin{table}\caption{Reach diffusion learned
    labels ($k=8$, $10$ iterations) \label{reachdiff:table}}
\begin{tabular}{r | l | l || l | l }
seed set & \multicolumn{2}{c}{$||f-y||_2$} & \multicolumn{2}{c}{$\KLdiv(f,y)$}  \\
 size &  LL  & $0.5$ LL $+$ $0.5$A  &  LL$^*$&  $0.5$ LL$^*$ $+$
                                               $0.5$A  \\
\hline
$100$ & 0.41 & 0.37  & 0.36  & 0.35 \\
$200$ & 0.38 & 0.36  & 0.36 & 0.35 \\
$500$ & 0.36 & 0.34 & 0.26& 0.25 \\
$1000$ & 0.32 & 0.30 & 0.24 & 0.23 \\
$2000$ & 0.26 & 0.25 & 0.26 & 0.25 
\end{tabular}
\end{table}

}

\ignore{
\section{Related}

Heuristic:   Replace metric distances by squared distances and work with
   shortest paths distances on the resulting graph
Vincent and Bengio 2003, also used by Nati Srebro 2011 \cite{BRSrebro:UAI2011}
}

\section{Conclusion} \label{conclu:sec}
We define {\em reach diffusion} and {\em distance diffusion} kernels
for graphs that are inspired by 
popular and successful measures of centrality and influence in social 
and economic networks.  We faciliate the application of our kernels  for
SSL by developing
highly scalable sketching algorithms.
We conducted a preliminary experimental evaluation demonstrating the
application and promise of our approach.
In future work, we hope to apply influence maximization algorithms for
active learning, that is, select the most effective seed sets for a
given labeling budget.  We also plan to explore the effectiveness of
our kernels as an alternative to spectral kernels 
with recent embedding techniques \cite{deepwalk:KDD2014,node2vec:kdd2016,Yang:ICML2016}.
\section*{Acknowledgment}
We would like to thank Fernando Pereira for discussions, pointers
to the literature, and sharing
views and intuitions on real-world challenges which
prompted the development of our proposed models.
{\scriptsize %
\bibliographystyle{plain}
\bibliography{cycle}

\begin{thebibliography}{10}

\bibitem{ACKP:KDD2013}
B.~D. Abrahao, F.~Chierichetti, R.~Kleinberg, and A.~Panconesi.
\newblock Trace complexity of network inference.
\newblock In {\em KDD}, 2013.

\bibitem{AdamicAdar:2005}
L.~A. Adamic and E.~Adar.
\newblock How to search a social network.
\newblock {\em Social Networks}, 27, 2005.

\bibitem{AdamicPB:2005}
L.~A. Adamic and N.~Glance.
\newblock The political blogosphere and the 2004 u.s. election: Divided they
  blog.
\newblock In {\em LinkKDD}. ACM, 2005.

\bibitem{Akiba:KDD2016}
T.~Akiba and Y.~Yano.
\newblock Compact and scalable graph neighborhood sketching.
\newblock In {\em KDD}, 2016.

\bibitem{Adsorption:WWW2008}
S.~Baluja, R.~Seth, D.~Sivakumar, Y.~Jing, J.~Yagnik, S.~Kumar,
  D.~Ravichandran, and M.~Aly.
\newblock Video suggestion and discovery for youtube: Taking random walks
  through the view graph.
\newblock In {\em WWW}, 2008.

\bibitem{BRSrebro:UAI2011}
A.~Bijral, N.~Ratliff, and N.~Srebro.
\newblock Semi-supervised learning with density based distances.
\newblock In {\em UAI}, 2011.

\bibitem{BlochJackson:2007}
F.~Bloch and M.~O. Jackson.
\newblock The formation of networks with transfers among players.
\newblock {\em Journal of Economic Theory}, 133(1):83--110, 2007.

\bibitem{BlumChawla:ICML2001}
A.~Blum and S.~Chawla.
\newblock Learning from labeled and unlabeled data using graph mincuts.
\newblock In {\em ICML}, 2001.

\bibitem{reverseranks:sigmetrics2016}
E.~Buchnik and E.~Cohen.
\newblock Reverse ranking by graph structure: Model and scalable algorithms.
\newblock In {\em Sigmetrics}. ACM, 2016.

\bibitem{SemiSupervisedLearning:2006}
O.~Chapelle, B.~Sch\"{o}lkopf, and A.~Zien.
\newblock {\em Semi-supervised learning}.
\newblock MIT Press, 2006.

\bibitem{dirLaplacian:FanChung:2006}
F.~Chung.
\newblock The diameter and laplacian eigenvalues of directed graphs.
\newblock {\em Electronic Journal of Combinatorics}, 2006.

\bibitem{Chung:Book97a}
F.~R.~K. Chung.
\newblock {\em Spectral Graph Theory}.
\newblock American Mathematical Society, 1997.

\bibitem{ECohen6f}
E.~Cohen.
\newblock Size-estimation framework with applications to transitive closure and
  reachability.
\newblock {\em J. Comput. System Sci.}, 55:441--453, 1997.

\bibitem{ECohenADS:TKDE2015}
E.~Cohen.
\newblock All-distances sketches, revisited: {HIP} estimators for massive
  graphs analysis.
\newblock {\em TKDE}, 2015.

\bibitem{multiobjective:2015}
E.~Cohen.
\newblock Multi-objective weighted sampling.
\newblock In {\em HotWeb}. IEEE, 2015.
\newblock full version: {\tt http://arxiv.org/abs/1509.07445}.

\bibitem{CDFGGW:COSN2013}
E.~Cohen, D.~Delling, F.~Fuchs, A.~Goldberg, M.~Goldszmidt, and R.~Werneck.
\newblock Scalable similarity estimation in social networks: Closeness, node
  labels, and random edge lengths.
\newblock In {\em COSN}. ACM, 2013.

\bibitem{binaryinfluence:CIKM2014}
E.~Cohen, D.~Delling, T.~Pajor, and R.~F. Werneck.
\newblock Sketch-based influence maximization and computation: Scaling up with
  guarantees.
\newblock In {\em CIKM}. ACM, 2014.

\bibitem{timedinfluence:2015}
E.~Cohen, D.~Delling, T.~Pajor, and R.~F. Werneck.
\newblock Distance-based influence in networks: Computation and maximization.
\newblock Technical Report cs.SI/1410.6976, arXiv, 2015.

\bibitem{CoKa:jcss07}
E.~Cohen and H.~Kaplan.
\newblock Spatially-decaying aggregation over a network: {M}odel and
  algorithms.
\newblock {\em J. Comput. System Sci.}, 73:265--288, 2007.
\newblock Full version of a SIGMOD 2004 paper.

\bibitem{CoverHartkNN:InfoTheory1967}
T.~M. Cover and P.~E. Hart.
\newblock Nearest neighbor pattern classification.
\newblock {\em {IEEE} Transactions on Information Theory}, 13(1):21--27, 1967.

\bibitem{RichardsonDomingos:KDD2001}
P.~Domingos and M.~Richardson.
\newblock Mining the network value of customers.
\newblock In {\em KDD}. ACM, 2001.

\bibitem{DSGZ:nips2013}
N.~Du, L.~Song, M.~Gomez-Rodriguez, and H.~Zha.
\newblock Scalable influence estimation in continuous-time diffusion networks.
\newblock In {\em NIPS}. 2013.

\bibitem{FujiwaraIrie:ICML2014}
Y.~Fujiwara and G.~Irie.
\newblock Efficient label propagation.
\newblock In {\em ICML}, 2014.

\bibitem{GarimellaMGS:2015}
K.~Garimella, G.~De~Francisci~Morales, A.~Gionis, and M.~Sozio.
\newblock Scalable facility location for massive graphs on pregel-like systems.
\newblock In {\em {CIKM}}. ACM, 2015.

\bibitem{Gomez-RodriguezBS:ICML2011}
M.~Gomez-Rodriguez, D.~Balduzzi, and B.~Sch{\"o}lkopf.
\newblock Uncovering the temporal dynamics of diffusion networks.
\newblock In {\em ICML}, 2011.

\bibitem{node2vec:kdd2016}
A.~Grover and J.~Leskovec.
\newblock node2vec: Scalable feature learning for networks.
\newblock In {\em KDD}. ACM, 2016.

\bibitem{hastietibshirani:book2001}
T.~Hastie, R.~Tibshirani, and J.~Friedman.
\newblock {\em The Elements of Statistical Learning}.
\newblock Springer New York Inc., 2001.

\bibitem{JacksonNetworks:Book2010}
M.~O. Jackson.
\newblock {\em Social and economic networks}.
\newblock Princeton University Press, 2010.

\bibitem{KKT:KDD2003}
D.~Kempe, J.~M. Kleinberg, and {\'E}.~Tardos.
\newblock Maximizing the spread of influence through a social network.
\newblock In {\em KDD}. ACM, 2003.

\bibitem{KonLaf:ICML2002}
R.~I. Kondor and J.~Laffery.
\newblock {Diffusion kernels on graphs and other discrete input spaces}.
\newblock In {\em ICML}, 2002.

\bibitem{Koren:IEEE2009}
Y.~Koren, R.~Bell, and C.~Volinsky.
\newblock Matrix factorization techniques for recommender systems.
\newblock {\em Computer}, 42, 2009.

\bibitem{lawless2011statistical}
J.~F. Lawless.
\newblock {\em Statistical models and methods for lifetime data}, volume 362.
\newblock John Wiley \& Sons, 2011.

\bibitem{Liben-Nowell_Kleinberg:CIKM2003}
D.~Liben-Nowell and J.~Kleinberg.
\newblock The link prediction problem for social networks.
\newblock In {\em CIKM}. ACM, 2003.

\bibitem{LinCohen:ICWSM2008}
F.~Lin and W.~W. Cohen.
\newblock The multirank bootstrap algorithm: Self-supervised political blog
  classification and ranking using semi-supervised link classification.
\newblock In {\em {ICWSM}}, 2008.

\bibitem{Lofgren:KDD2014}
P.~A. Lofgren, S.~Banerjee, A.~Goel, and C.~Seshadhri.
\newblock Fast-{PPR}: Scaling personalized pagerank estimation for large
  graphs.
\newblock In {\em KDD}. ACM, 2014.

\bibitem{Mikolov:NIPS13}
T.~Mikolov, I.~Sutskever, K.~Chen, G.~S. Corrado, and J.~Dean.
\newblock Distributed representations of words and phrases and their
  compositionality.
\newblock In {\em NIPS}, 2013.

\bibitem{MillerSurvivalAnalysis:book}
R.~G. Miller.
\newblock {\em Survival analysis}.
\newblock John Wiley \& Sons, 1975.

\bibitem{movielen1m}
{Movielen 1M Dataset}.
\newblock \verb$http://grouplens.org/datasets/movielens/1m/$.

\bibitem{Nadaraya:1964}
E.~A. Nadaraya.
\newblock On estimating regression.
\newblock {\em Theory Prob. Applic.}, 9, 1964.

\bibitem{pagerank:1999}
L.~Page, S.~Brin, R.~Motwani, and T.~Winograd.
\newblock The pagerank citation ranking: Bringing order to the web.
\newblock Technical report, Stanford InfoLab, 1999.

\bibitem{Parzen:MathStats1962}
E.~Parzen.
\newblock On the estimation of a probability density function and the mode.
\newblock {\em Annals of Math. Stats.}, 33, 1962.

\bibitem{deepwalk:KDD2014}
B.~Perozzi, R.~Al-Rfou, and S.~Skiena.
\newblock Deepwalk: Online learning of social representations.
\newblock In {\em KDD}. ACM, 2014.

\bibitem{PerraultMeila:NIPS2011}
D.~C. Perrault{-}Joncas and M.~Meila.
\newblock Directed graph embedding: an algorithm based on continuous limits of
  laplacian-type operators.
\newblock In {\em NIPS}, 2011.

\bibitem{RahimiRecht:NIPS2007}
A.~Rahimi and B.~Recht.
\newblock Random features for large-scale kernel machines.
\newblock In {\em NIPS}, 2007.

\bibitem{RaviDiao:AISTATS2016}
S.~Ravi and Q.~Diao.
\newblock Large-scale semi-supervised learning using streaming approximation.
\newblock In {\em {AISTATS}}, 2016.

\bibitem{Rosenblatt:stats1956}
M.~Rosenblatt.
\newblock Remarks on some nonparametric estimates of a density function.
\newblock {\em The Annals of Mathematical Statistics}, 27(3):832, 1956.

\bibitem{LLE:science2000}
S.T. Roweis and Saul~L. K.
\newblock Nonlinear dimensionality reduction by locally linear embedding.
\newblock {\em Science}, 290, 2000.

\bibitem{Orlitsky:ICML2005}
Sajama and A.~Orlitsky.
\newblock Estimating and computing density based distance metrics.
\newblock In {\em ICML}, 2005.

\bibitem{Silveman:monograph1986}
B.~W. Silverman.
\newblock {\em Density estimation for statistics and data analysis}.
\newblock Monographs on statistics and applied probability. Chapman, 1986.

\bibitem{TalukdarCrammer:ECML2009}
P.~P. Talukdar and K.~Crammer.
\newblock New regularized algorithms for transductive learning.
\newblock In {\em {ECML} {PKDD}}, 2009.

\bibitem{TalukdarPereira:ACL2010}
P.~P. Talukdar and F.~C.~N. Pereira.
\newblock Experiments in graph-based semi-supervised learning methods for
  class-instance acquisition.
\newblock In {\em {ACL}}, 2010.

\bibitem{TSL:science2000}
J.~B. Tenenbaum, V.~de~Silva, and J.~C. Langford.
\newblock A global geometric framework for nonlinear dimensionality reduction.
\newblock {\em Science}, 290, 2000.

\bibitem{Watson:1964}
G.~S. Watson.
\newblock Smooth regression analysis.
\newblock {\em Sankhya A}, 26, 1964.

\bibitem{Yang:ICML2016}
Z.~Yang, W.~W. Cohen, and R.~Salakhutdinov.
\newblock Revisiting semi-supervised learning with graph embeddings.
\newblock In {\em ICML}. JMLR.org, 2016.

\bibitem{Yoshida:wsdm2016}
Y.~Yoshida.
\newblock Nonlinear laplacian for digraphs and its applications to network
  analysis.
\newblock In {\em WSDM}, 2016.

\bibitem{ZhouBLWS:NIPS2004}
D.~Zhou, O.~Bousquet, T.~Lal, J.~Weston, and B.~Sch\"{o}lkopf.
\newblock Learning with local and global consistency.
\newblock In {\em NIPS}, 2004.

\bibitem{ZhuGL:ICML2003}
X.~Zhu, Z.~Ghahramani, and J.~Laffery.
\newblock {Semi-supervised learning using Gaussian fields and harmonic
  functions}.
\newblock In {\em ICML}, 2003.

\end{thebibliography}
}

\appendix

\section{More Experiments}

\notinproc{
\subsection{Political blogs data}
The data consists of about 19,000 links between roughly 1200 blogs 
collected at the 2004 US presidential election. The blogs are labeled 
as {\em liberal} or {\em conservative},
 with half the blogs in each category. 
About 62\% of the blogs form a single 
strongly connected component, 21\%  can reach the component via 
links, and 16\% can only be reached from the component. 

 The label dimension here is $L=2$, as 
the provided label $\vecy_i$ of a blog $i$ is $(1,0)$ for {\em liberal} and $(0,1)$ for 
{\em conservative}.  
We used 10 sets of experiments. 
In each set, we select a different uniform random permutation of the 
blogs.  We then take seed sets $S$ of size $s \in [10,1000]$ as  prefixes of the same  permutation. 
Note that our seed set sizes range from less than $1\%$ to  about $83\%$  of all blogs. 
 We then apply our algorithms to compute learned labels $\vecf_j$ for all 
nodes $j$.  

To make a 
prediction, we consider the average label of the seed set 
$\overline{y}(S)$ (defined in \eqref{avelabel:eq}) and the learned label $\vecf_j$.  The prediction is then 
{\em liberal} if $f_{j1} > \overline{y}(S)_1$ and {\em conservative}
if $f_{j1} < \overline{y}(S)_1$.  If the  prediction is equal to the 
true label, we count it as a success. 
We define the {\em 
  margin} of our prediction as the 2-norm $||\overline{y}(S)-\vecy_j||_2$ of the difference 
between the average seed label and the learned label. 
When the margin is $0$, 
which happens when our model provides no information (no reachable seed nodes),
we take the success to be $0.5$. 
We consider the 
precision (fraction of successful predictions) and recall (number of 
predictions as fraction of total),  as a function of the margin. 

\begin{figure*}[t]
\centering 
\includegraphics[width=0.32\textwidth]{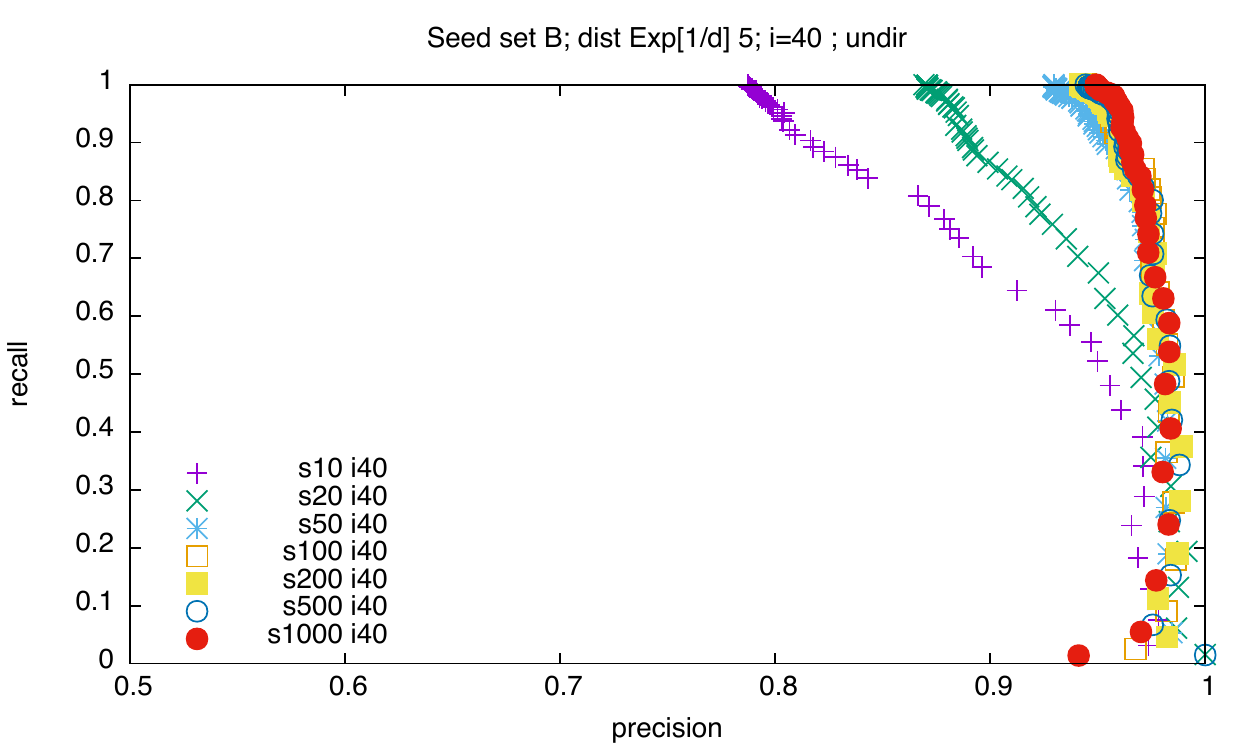}%
\includegraphics[width=0.32\textwidth]{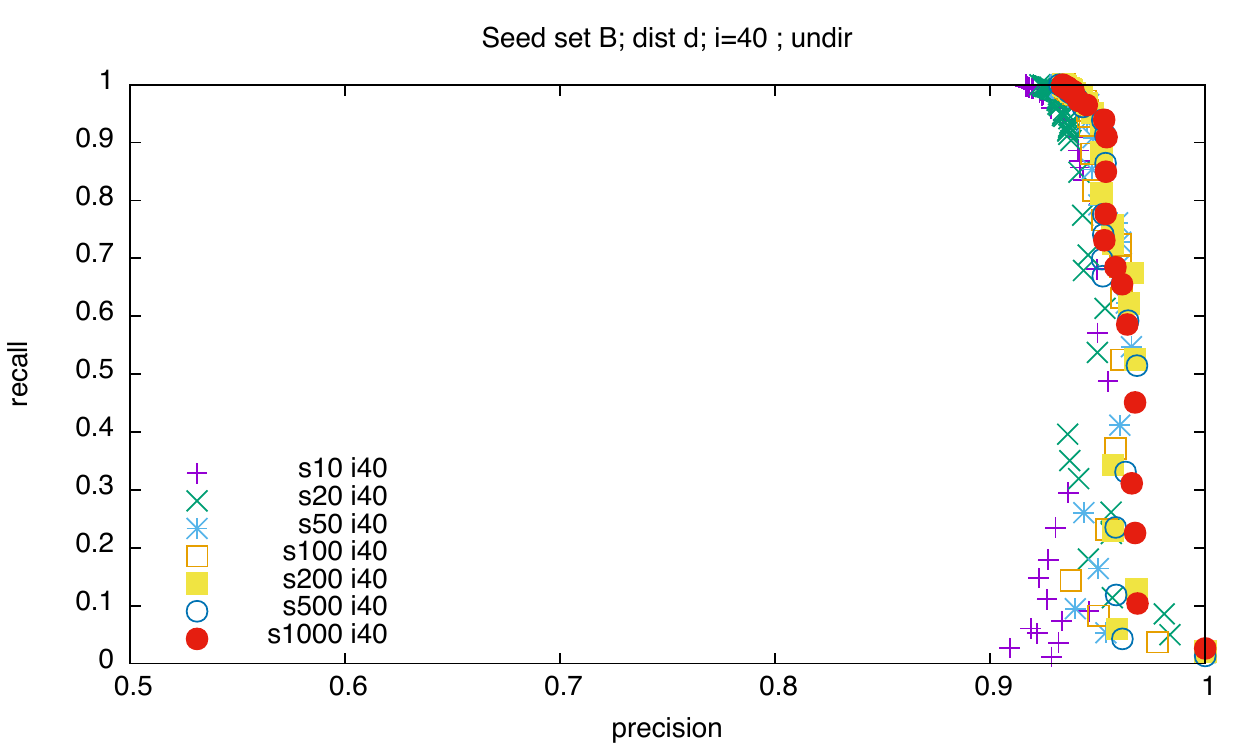}
\includegraphics[width=0.32\textwidth]{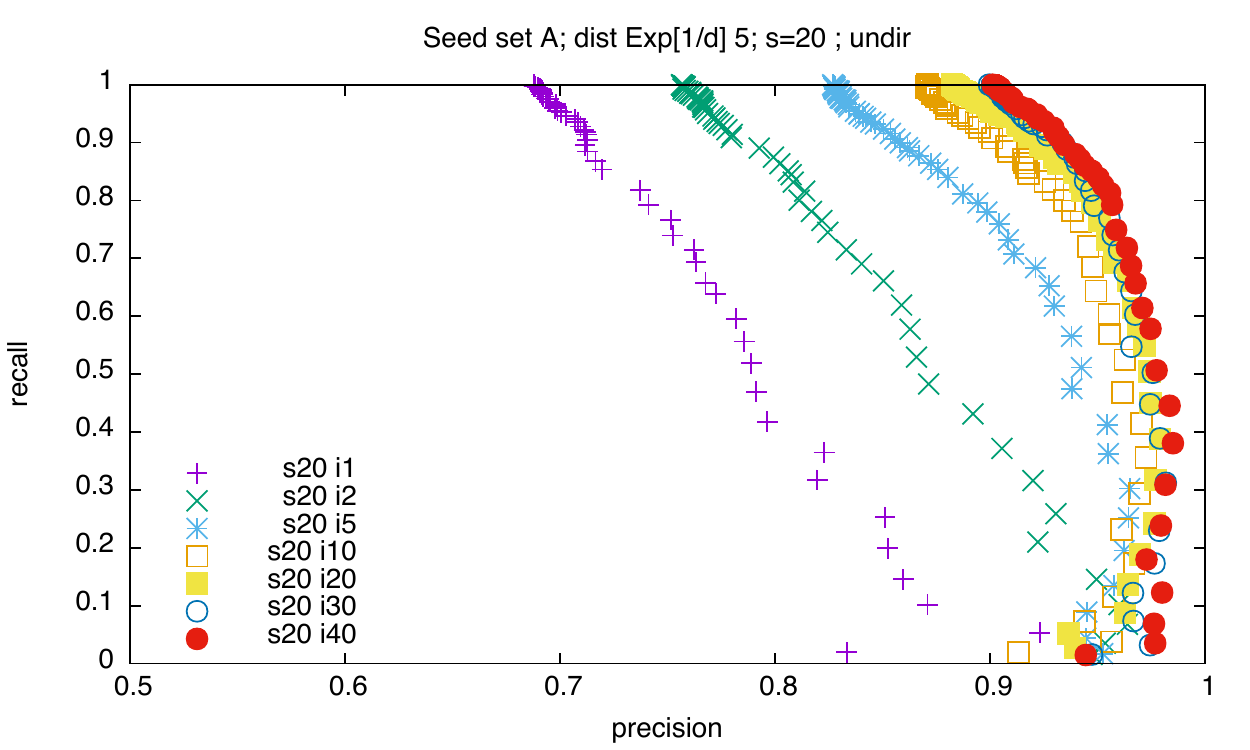}%
\caption{\label{polblogsweeps:fig}{\instance{polblog}: Precision 
    recall when sweeping the number of seeds: 
Left: randomized $\Exp[1/\Gamma(u)]+5$ with $40$ simulations.  Middle:
fixed lengths $\Gamma(u)$.  Right:  Sweeping the number of simulations 
with $s=50$ seeds and randomized $\Exp[1/\Gamma(u)]+5$.}}
\end{figure*}
Note that hyperlinks are directed, and the direction has a concrete 
semantics.  In our experiments, we separately worked with three sets of edges:
{\em forward} (an edges with the same direction is generated for each 
hyperlink),
{\em reversed} (a reversed edge is generated for each hyperlink), and 
{\em undirected} (two directed edges are generated for each hyperlink). 
 We then consider distance and reach diffusion on these directed graphs. 
As we did with the Movielens1M dataset, we used a limited selection of fixed-length and randomized length 
  (distance diffusion)   and lifetime (reach diffusion) schemes, as 
  outlined in Table~\ref{lengthpb:table}.  We used kernel weighting $\alpha(x)=1/x$. 
The function $|\Gamma(u)|$  is the outdegree of $u$, which is the 
number of hyperlinks to other blogs with {\em forward}, the number of 
hyperlinks to the blog with {\em reversed}, and the sum with {\em undirected}. 
  We used a sketch parameter $k=32$ and up to $40$ Monte Carlo simulations. 



\begin{table}
\caption{Lengths and lifetime schemes for Political blogs\label{lengthpb:table}}
{\scriptsize 
\begin{center}
\begin{tabular}{l|l|l}
Scheme name & specifications & parameters $g(x)$;$\delta$\\
\hline 
Dist Exp$[g(x)]+\delta$ &  $\ell_{uv} \sim \delta + \Exp[g(|\Gamma(u)|)]$ & $\frac{1}{x}$; $\delta\in \{0,1,5\}$ \\
& & $\frac{1}{\log_2(1+x)}$;  $\delta\in \{0,5\}$ \\
\hline 
Dist $g(x)$ (fixed) &  $\ell_{uv} = g(|\Gamma(u)|)$ xw & $\{1, \log_2(1+x),x\}$  \\
\hline 
Reach Exp$[g(x)]$ & $\mu_{uv}  \sim \Exp[g(|\Gamma(u)|)]$ & $\{ x, \log_2(1+x)\}$
\end{tabular}
\end{center}
}
\end{table}

  On this data set, randomization of lengths did not provide an 
  advantage.  
The fixed length schemes performed very well,   with $g(x)=x$ and 
  $g(x)=\log_2(1+x)$ being more consistent and slightly better than 
  $g(x)=1$.  For these schemes, there was no observable improvement with the number of simulations. 
The prediction success was typically over 90\% even with the smallest 
seed sets ($s=10$).  
This is explained by the two sets of blogs forming two distinct clusters,
detectable by most clustering algorithms.

  The best randomized distance scheme was $g(x)=1/x$ and $\delta=5$. 
  The best randomized reach scheme was $g(x)=x$.  Overall, the reach 
  diffusion schemes gave much weaker predictions than the distance 
  diffusion schemes and both were outperformed by the fixed-length schemes.

\begin{figure}[h]
\centering 
\includegraphics[width=0.23\textwidth]{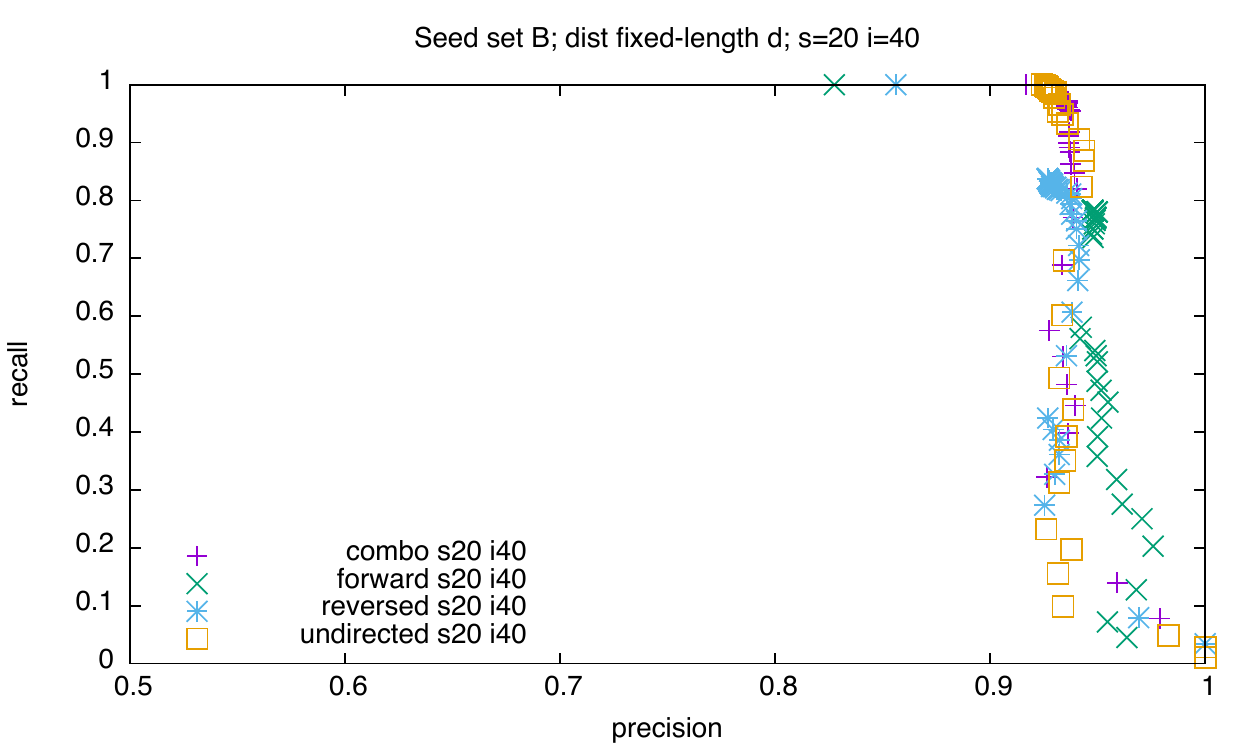}%
\includegraphics[width=0.23\textwidth]{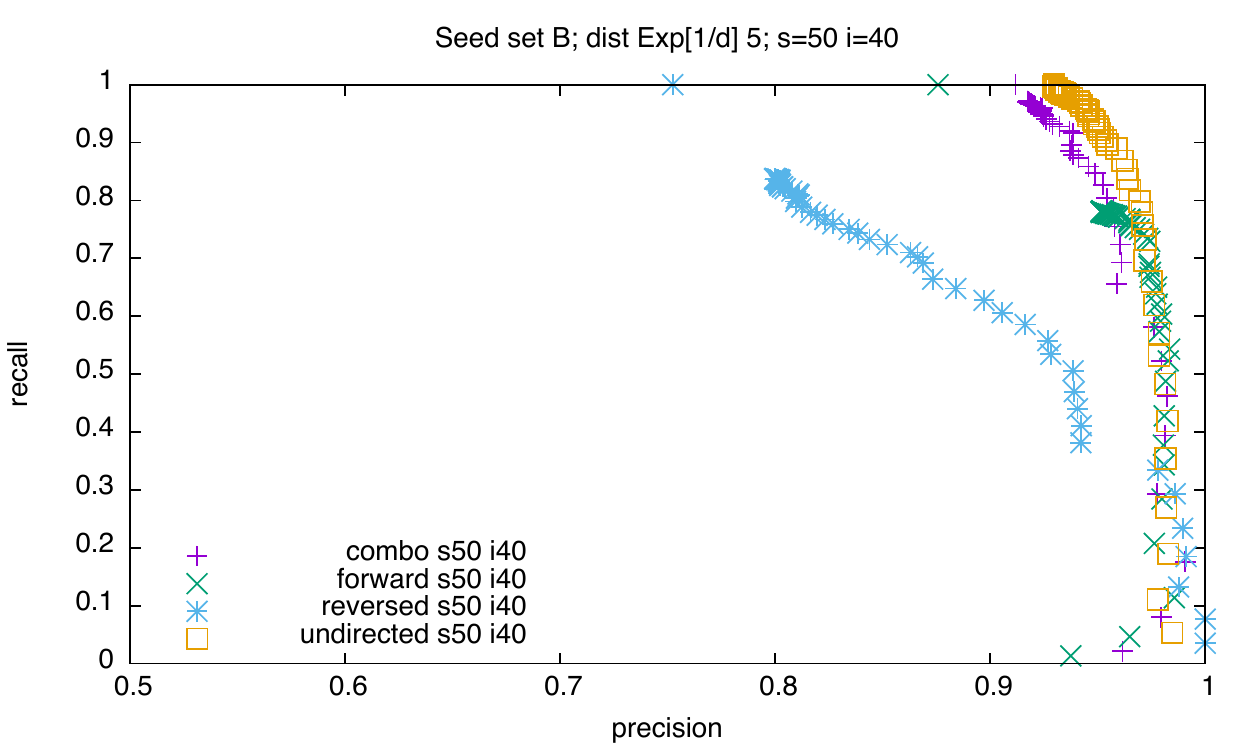}%
\caption{\label{polblogsdir:fig}{\instance{polblog}: Precision recall for different 
    directions, distance diffusion with fixed-lengths of $\Gamma(u)$
    and $s=20$
    (left) and randomized $\Exp[1/\Gamma(u)]+5$ with $s=50$ (right)}}
\end{figure}
  The randomized distance and reach diffusion schemes did  show drastic improvement with the number of 
  simulations (see Figure~\ref{polblogsweeps:fig} (right)). 
   All schemes were more accurate with larger seed sets (see 
   Figure~\ref{polblogsweeps:fig} (left and middle)). 
Performance did strongly depend on direction (see 
Figure~\ref{polblogsdir:fig} for representative results):   {\em Reversed}
was clearly inferior to {\em forward}. 
  {\em Undirected} and {\em forward} were comparable and consistently best, with 
   the former providing a higher recall.  We also evaluated combined 
   predictions ({\em combo}), which go with the prediction with the largest margin 
   among {\em forward} and {\em reversed}.  Prediction quality of {\em 
     combo} was more consistent 
   than {\em reversed} but was dominated by {\em undirected} and {\em forward}.

} 
\ignore{

We first consider results for distance diffusion with $\Delta=0$ and 
$g(x)=1/\log_2(x)$ on seed sets selected using 10 different 
permutations. 
We summarize our observations. 

\paragraph*{Edge direction method}
  We first consider the quality obtained from the different methods. 
  Using only forward, reversed, and undirected edges.  Note that the 
  recall with forward only connection is limited to the 78\% of blogs 
  with outgoing links connections, and after that the prediction success 
  probability is $0.5$.  Similarly, recall with reversed links is 
  meaningful for 84\% of blogs with incoming links.  We therefore also 
  consider combined predictors: {\em combo}  taked the prediction with 
  larger margin among the forward and reversed predictions.  {\em 
    forwcr} uses the forward prediction supplemented with the reversed 
  prediction when the forward margin is $0$.

 The results were as expected consistent for very large seed set sizes 
 (500 out of 1200 are used as seeds), but had variations for different 
 selections of small seed sets. 
  We observe, however, that across seed set sizes and selections and recall levels,
forward is a consistently good predictor and often best one. 
Reversed and undirected are weaker and also much less consistent for 
small seed set sizes.  
 For very large seed sets (500 out of 1224 nodes),  undirected 
 performs well and dominates reversed. 
The combined predictions sometimes improve over forward alone. 
Some precision recall plots for 500 seeds and two random selections of 
50 seeds are provided in figure \ref{sweepmethods:fig}.  

One take home 
 lesson from these results is that the direction of the edges,
and directed paths ensemble, are important for the quality 
of the results.  In this particular data set, the forward direction 
was a much more powerful signal than the reverse direction or the 
undirected links which remove direction. 
The gap between forward and undirected signals was even more important with smaller seed sets. 

\begin{figure*}
\centering 
\includegraphics[width=0.32\textwidth]{methodsweepExplog0Disti40s500G.pdf}%
\includegraphics[width=0.32\textwidth]{methodsweepExplog0Disti40s50C.pdf}%
\includegraphics[width=0.32\textwidth]{methodsweepExplog0Disti40s50E.pdf}%
\caption{\label{sweepmethods:fig}{Precision recall when sweeping 
    method for one selection of 500 seeds and two uniform random 
    selections of 50 seeds.  \instance{polblog},  dist, $\Exp[1/\log_2(\Gamma)]$}}
\end{figure*}

\paragraph*{Number of seeds}

  We studies the dependence of quality on the number of seeds, for 
  seed sets of size 10 to 1000.  This amounts to less than $0.1\%$ to 
  about $83\%$  of all blogs.  

Recall that we computed learned labels 
  for all blogs, including those in the seed set, in which case we only used 
  labels of other seed nodes.  

We observed general increase in quality 
  across methods and seed permutation selections.  For very large seed 
  sets, as expected, the results were very similar across different 
  selections.  There was considerable variation for small seed sets,
  but the pattern of improvement in quality for larger seed sets 
  prevailed.  Precision recall points for two seed permutation 
  selections are provided in Figure \ref{sweepseeds:fig}.  We can see 
  that even for small seed sets, the learned labels significantly 
  improve over the average prediction.

\begin{figure}[h]
\centering 
\includegraphics[width=0.23\textwidth]{seedsweepExplog0Disti40forwardA.pdf}%
\includegraphics[width=0.23\textwidth]{seedsweepExplog0Disti40forwardG.pdf}%
\caption{\label{sweepseeds:fig}{Precision recall when sweeping seed 
    size, \instance{polblog}, method forward, dist, $\Exp[\log \Gamma^+]$}}
\end{figure}

\paragraph*{Number of simulations}
Dependence on the number of MC simulations. In each simulation, a 
fresh set of edge lengths is drawn from the model, and sketches and 
samples are recomputed.  The simulations therefore also are used to 
reduce the error of the sketching method as well. 

 Across seed set selections, methods, and sizes, we observe consistent 
 improvement in quality with the number of simulations.  The 
 improvement seems to saturate around 20 to 40 iterations. 

\begin{figure*}
\centering 
\includegraphics[width=0.32\textwidth]{iterweepExplog0Dists20forwardA.pdf}%
\includegraphics[width=0.32\textwidth]{iterweepExplog0Dists50forwardA.pdf}%
\includegraphics[width=0.32\textwidth]{iterweepExplog0Dists500forwardA.pdf}%
\caption{\label{sweepiter::fig}{Precision recall as a function of the 
    number of MC simulations. Selection of 20, 50, and 500 
    seeds. Forward direction.   \instance{polblog},  dist, $\Exp[1/\log_2(\Gamma)]$}}
\end{figure*}

}

\end{document}